\definecolor{lightOrange}{rgb}{0.996, 0.8,0.361}
\definecolor{lightRed}{rgb}{0.941, 0.2313,0.1254}
\definecolor{darkGray}{rgb}{0.498, 0.498,0.498}
\newtheoremstyle{1}
	{\topsep}
	{\topsep}
	{\itshape}
	{}
	{\bfseries}
	{.}
	{ }
	{\thmname{#1}\thmnumber{ #2}\thmnote{ (#3)}}
\newtheoremstyle{2}
	{\topsep}
	{\topsep}
	{\normalfont}
	{}
	{\bfseries}
	{.}
	{ }
	{\thmname{#1}\thmnumber{ #2}\thmnote{ (#3)}}
\theoremstyle{1}
\newtheorem{theorem}{Theorem}[section]
\newtheorem{lemma}[theorem]{Lemma}
\newtheorem{proposition}[theorem]{Proposition}
\newtheorem{corollary}[theorem]{Corollary}
\newtheorem{remark}[theorem]{Remark}
 \numberwithin{equation}{section}
\m@th\displaystyle{##}$}{$\m@th\displaystyle{##}$\hfil}{\lbrace}{.}
\renewcommand\expandafter\subsection\expandafter{%
    \expandafter\@fb@secFB\subsection
  }%
\newcommand{\bbr}{\mathbb{R}}  
\newcommand{\bbn}{\mathbb{N}}
\newcommand{\bbz}{\mathbb{Z}}
\newcommand{\xcal}{\mathcal{X}}
\newcommand{\be}{\begin{equation}}
\newcommand{\ee}{\end{equation}}
\newcommand{\bew}{\begin{equation*}}
\newcommand{\eew}{\end{equation*}}
\newcommand{\ba}{\begin{array}{ll}}
\newcommand{\bal}{\begin{array}{ll}}
\newcommand{\ea}{\end{array}}
\newcommand{\cI}{{\mathscr{I}}}
\newcommand{\cO}{\mathcal{O}}
\def\@fnsymbol#1{\ensuremath{\ifcase#1\or \mathrm{a}\or \mathrm{b}\or
   \mathrm{c}\or \mathparagraph\or \|\or **\or \dagger\dagger
   \or \ddagger\ddagger \else\@ctrerr\fi}}
\begin{document}

\title{Stochastic Cell Transmission Models of Traffic Networks }
\author{Zachary Feinstein\thanks{Stevens Institute of Technology, School of Business, Hoboken, NJ 07030, USA, email: \href{mailto:zfeinste@stevens.edu}{zfeinste@stevens.edu}}\qquad Marcel Kleiber\thanks{Leibniz Universität Hannover, House of Insurance \& Institute of Actuarial and Financial Mathematics, Welfengarten 1,
30167 Hannover, Germany, email: \href{mailto:marcel.kleiber@insurance.uni-hannover.de}{marcel.kleiber@insurance.uni-hannover.de}} \qquad Stefan Weber\thanks{Leibniz Universität Hannover, House of Insurance \& Institute of Actuarial and Financial Mathematics, Welfengarten 1,
30167 Hannover, Germany, email: \href{mailto:stefan.weber@insurance.uni-hannover.de}{stefan.weber@insurance.uni-hannover.de}}}

\date{\today}
\maketitle

\begin{abstract}
We introduce a rigorous framework for stochastic cell transmission models for general traffic networks. The performance of traffic systems is evaluated based on preference functionals and acceptable designs. The numerical implementation combines simulation, Gaussian process regression, and a stochastic exploration procedure. The approach is illustrated in two case studies.
\end{abstract}


\section{Introduction}

Cell transmission models enable the quantification of the motion of traffic participants on a high level of aggregation. This provides computational advantages in comparison to microscopic traffic models that capture the motion of traffic participants in great detail. This gain in computational efficiency is sometimes disadvantageously associated with lower granularity, which complicates the representation of complex traffic modules and interactions of traffic participants. In this paper, we propose a rigorous framework for cell transmission models that incorporates three important features: a) The cells are identified with the nodes of a graph. We introduce a precise notation for the directions of the traffic participants within each cell. This allows the construction of cell transmission models for general traffic networks. b) Within each cell, road users traveling in one direction interact with road users traveling in other directions. Sending and receiving functions can capture these interactions of traffic flow and density with oncoming traffic flows and densities. c) Traffic volumes and conditions may vary randomly. Our general framework allows the inclusion of probabilistic phenomena. 

The proposed models enable the evaluation of traffic systems under a wide range of conditions. They can also be used for traffic planning by testing the effects of changes in design parameters. Comparisons can be made not only for deterministic systems, but also in the face of randomness and risk. We use preference functionals and their level sets for the normative classification and categorization of transportation systems. This approach is also closely related to the construction of measures of systemic risk. In concrete applications, we specify random benchmark flows and compute the collection of  parameters associated with traffic systems that are weakly preferred. We call these sets \emph{acceptable designs}. They accurately combine the descriptive, possibly random cell transmission model and the normative evaluation framework. 

Although less granular than microscopic traffic models, the extended flexibility compared to classical cell transmission models and the inclusion of randomness increase the computational complexity. In particular, stochastic simulation of the system under different conditions for multiple design parameters is costly. To address this problem, we employ a powerful machine learning technique, Gaussian Process Regression (GPR). GPR allows an interpolation of system performance between simulated points while providing at the same time measures of uncertainty. Our innovation is the adaptive selection of points to refine the estimation of the acceptable designs. For this purpose, we use the GPR estimation of the boundary of this level set and GPR variance estimates at candidate points from the previous iteration. We also provide error bounds.

The capabilities of our algorithms in the context of generalized cell transmission models are illustrated in two case studies. We study two traffic networks, one with two signalized intersections and another one with variable capacities of highways and speed limits. Acceptable designs are identified and interpreted. From an algorithmic point of view, we compare the squared exponential kernel to  Mat{\'e}rn kernels. 

Our main contributions are the following:
\begin{enumerate}
\item We provide a rigorous framework for cell transmission models in general traffic networks. Traffic participants traveling in different directions interact with each other locally. Traffic volumes and conditions can vary stochastically.
\item To classify and categorize traffic systems, we propose the notion of acceptable design inspired by preference functionals and systemic risk measures.
\item The numerical estimation of acceptable designs combines Monte Carlo simulation, Gaussian process regression, and a stochastic exploration procedure in the parameter space. The performance of this algorithm is demonstrated through case studies.
\end{enumerate}

\subsection{Structure of the Paper}

The paper is organized as follows: Section~\ref{sec:literature} reviews the related literature on cell transmission models, systemic risk measures, and Gaussian process regression. Section~\ref{sec:ctm} presents our general framework for cell transmission models of traffic networks. Section~\ref{sec:acd} describes the objective of the machine learning estimation problem:  sets of acceptable designs. Our algorithm is discussed in Section~\ref{sec:learning}. It is applied in numerical case studies in Section~\ref{sec:case-studies}. Questions for further research are presented in Section~\ref{sec:concl}. The appendix contains proofs and auxiliary material.

\subsection{Literature}\label{sec:literature}

We develop a general and rigorous formulation of cell transmission models in a network environment. Considering general transmission and receiving functions and general directions of travel, we can consider many related contributions in the literature as special cases. Inspired by the theory of systemic risk measurements, we introduce as a diagnostic instrument the notion of acceptable designs of a traffic system. This constitutes a normative instrument of traffic planning, which enables the evaluation and control of traffic systems. To identify the acceptable designs of traffic systems, we develop an active learning method based on Gaussian process regression, a powerful machine learning technique. In the following, we review the relevant literature and compare it with our innovations.

\paragraph{Cell Transmission Models.} The classical cell transmission model (CTM) was developed in the seminal work of \textcite{Daganzo1994} and \textcite{Daganzo1995}. It is a deterministic macroscopic traffic model that captures the evolution of traffic flows and densities in discrete time. \textcite{Daganzo1994} explains how this model can be viewed as a discrete approximation to the LWR model (\textcite{Lighthill1955} and \textcite{Richards1956}); accordingly, he originally introduces CTM to study homogeneous traffic flows on highways. 

Since then, CTM has been revisited countless times in the literature. Important research questions range from estimating traffic densities (\textcite{Munoz2003}) to establishing variable speed limits on highways (\textcite{Hadiuzzaman2013})). The introduction of randomness increases the informativeness and allows a representation of more complex phenomena. \textcite{Sumalee2011} develop a version with stochastic demand and supply constraints. \textcite{Jin2019} study highway dynamics under random capacity-reducing incidents modeled by an exogenous Markov chain. 

However, the popularity of CTM is also due to the fact that it can be used to represent urban traffic. For example, \textcite{Long2008} examines the formation and dissipation  of congestion in urban networks. Other papers discuss traffic lights and their optimization  (e.g., \textcite{Pohlmann2010}, \textcite{Xie2013}, \textcite{Srivastava2015}). We refer to \textcite{Adacher2018} for a detailed review on CTM, especially for urban traffic.

Another strand of literature generalizes CTM for different traffic types. Different traffic users with their different driving characteristics can share the available space. \textcite{Tuerprasert2010} and \textcite{Tiaprasert2017} partition a cell, \textcite{Levin2016} add partial densities. Buses can be introduced as moving bottlenecks that reduce capacity (\textcite{Liu2015}, \textcite{Tang2022}).

In this paper, we explain how cells can represent different types of roads, including highways, roundabouts, signalized intersections. Our setup allows for randomness as a general modeling paradigm. We present a precise formulation of the direction of travel that allows  detailed modeling of the interaction of competing traffic flows. Some related conceptional issues are also discussed by  \textcite{Tampere2011}. We note that conflicting flows are common in pedestrian flow modeling (see, e.g., \textcite{Floetteroed2015}); \textcite{Moustaid2021} can be viewed as a special case of our model. We also briefly indicate how our model can be generalized for multiple interacting traffic types.

\paragraph{Systemic Risk Measures.} The axiomatic theory on the quantification of risk  dates back to the seminal paper of \textcite{Artzner1999}.  Past contributions have focused primarily on the quantification of financial risk: The central construction is based on a notion of acceptability, i.e., a set of (financial) positions with an acceptable risk. A monetary risk measure quantifies the risk of a financial position in monetary units: It is the minimum amount of cash that must be added to a position to make it acceptable. We refer to \textcite{Foellmer2015} for an overview.

This construction can be generalized to quantify the risk of a system of interacting entities. Systemic risk measures as a precise mathematical notion are introduced by \textcite{Feinstein2017} and \textcite{Biagini2018}. The theory has proven useful not only for quantifying risk in financial networks (e.g., \textcite{Weber2017}); \textcite{Cassidy2016} applies it to measuring the risk of power outages in transmission networks. \textcite{Salomon2020} use it to control the resilience of technical systems.

Borrowing from these measures of systemic risk, we introduce the concept of acceptability to assess the efficiency of traffic systems. The results, typically measures of efficiency such as traffic flow, are normatively categorized into acceptable and unacceptable outcomes. The set of acceptable designs of a traffic system then refers to those design parameters (e.g., noise parameters, traffic light configurations, initial densities) that lead to acceptable outcomes. As a conceptual difference, we recognize that systemic risk measures are introduced for financial systems whose risk decreases with the amount of available capital. There is no such a priori monotonic dependence of traffic flows on underlying system parameters.

\paragraph{Gaussian Process Regression.} Mathematically, the acceptable designs of a traffic system form a set of real vectors defined in terms of a level set of a function. To estimate this set, we estimate the underlying function. To address the computational cost in the context of stochastic simulation, we develop an active learning approach.

We apply Gaussian Process Regression (GPR), also called Kriging, as a Bayesian inference method to estimate a metamodel from isolated noisy data. The method assumes a Gaussian process as a prior distribution over functions and is updated with observed data to produce an estimate. The popularity of this method is due to its probabilistic foundation, which also allows an evaluation of the uncertainty of the estimate. We refer to \textcite{Rasmussen2005} as a standard reference and \textcite{Kanagawa2018} and \textcite{Swiler2020} for insightful surveys.

\textcite{Ankenman2010} examines GPR as a tool for metamodeling in the context of simulation. Similarly, \textcite{binois2018practical} discusses practical aspects. The focus is on approximating the entire underlying function rather than just a particular level set. Most closely related to our active learning framework are \textcite{gotovos2013active} and \textcite{lyu2021evaluating}, which also develop iterative procedures for approximating level sets with GPR. In contrast to previous works, we construct a random search algorithm to determine where the function values will be estimated next. This circumvents the need to rely on complicated optimization methods. In addition, we introduce a general sandwich principle to impose upper bounds on the approximation error of set estimation algorithms. Error bounds on the estimated function, as extensively discussed in \textcite{Srinivas2012} and \textcite{Lederer2019}, result in bounds on the approximation error of level sets.

\section{Cell Transmission Models for Traffic Networks}\label{sec:ctm}

\begin{figure}[h]
\centering
\includegraphics[scale=0.7]{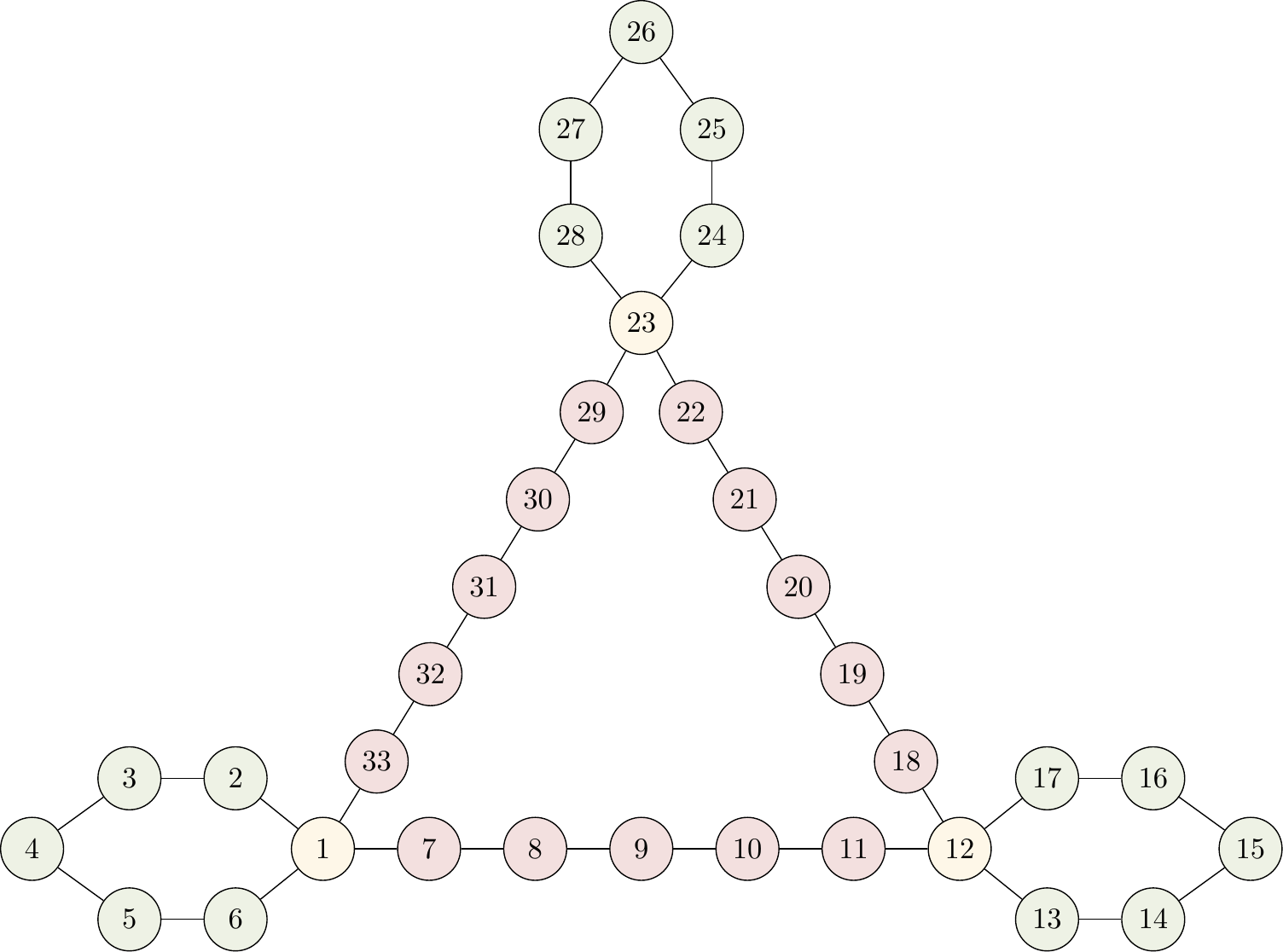}
\caption{Highway Network.}
\label{fig:network2}
\end{figure}
\subsection{A Motivating Example}

Cell transmission models capture the dynamic evolution of traffic densities and flows in traffic networks. Figure~\ref{fig:network2} shows a stylized network with 28 cells or nodes. The nodes can be of different types, for example, the red and green nodes in Figure~\ref{fig:network2} could be sections of highways or roads, and the yellow nodes could be intersections or roundabouts. The type of each node determines how traffic participants interact within the node. The total traffic volume in the system may vary (possibly randomly) due to traffic participants entering or leaving the system. For example, the green areas in Figure~\ref{fig:network2} could be sources and sinks of the traffic network.

The model determines how much traffic is transmitted from one node to the next. In generalized cell transmission models on graphs, traffic in the nodes may move in various directions. This is especially true for nodes that are connected to multiple other nodes, such as nodes 1, 12, and 23. It is thus important to indicate different directions or routes by notation when modeling traffic dynamics. A rigorous framework is described in the next sections. 

\subsection{General Framework}

To model general traffic networks, we consider a set of vertices or nodes $V$ denoting all existing traffic spaces; these are connected by a collection of edges $E\subseteq V\times V$.  The corresponding graph $G=(V,E)$ is the traffic network under consideration. Vehicles, bicycles, and pedestrians move through the graph, and traffic flows interact with each other at the nodes. Traffic flows and corresponding traffic densities in a node $v$ are distinguished by the preceding location $u\in \cI(v)$ and the subsequent destination $w \in \cO(v)$; here, the sets $\cI (v) = \{v': (v',v) \in E\}$ and $\cO (v) = \{v': (v,v') \in E\}$ collect the nodes which can reach $v$ and which can be reached from $v$, respectively.

\paragraph{Traffic Dynamics.} Traffic is modeled in discrete time, enumerated as $t=0,1,2, \dots$. For simplicity, we consider only one type of traffic participants, although the formalism can easily be extended to multiple types. For each node $v$ and each route $(u, v, w)$ through $v$ with $u\in \cI (v)$, $w\in \cO(v)$, $\rho_{(u, v ,w)}(t)$ is the traffic density of agents at time $t$ traveling through $v$ along $(u, v, w)$. The flow into $v$ of traffic participants traveling along $(u, v, w)$ through $v$ during the time interval $(t, t+1)$ is denoted by $q^\mathrm{in}_{(u,v,w)}(t+1)$; the corresponding flow out of $v$ is $q^\mathrm{out}_{(u,v, w)}(t+1)$. We further assume that sources and sinks exist in the system, and denote by $q^\mathrm{net}_{(u, v, w)}(t+1)$ the (possibly random) net flow of traffic participants entering or leaving the system in $v$ on the route $(u,v,w)$ during the time interval $(t, t+1)$. Nodes represent traffic spaces (such as roads, intersections, roundabouts, or shared spaces) and can be of different sizes, labeled $l_v$ for $v\in V$. Densities are updated iteratively for each $v\in V$, $u\in \cI(v)$ and $w\in \cO(v)$ at each point in time $t+1$:
\begin{equation}\label{eq:dynamics}
\rho_{(u,v,w)}(t+1)=\rho_{(u,v,w)}(t)+\frac{1}{l_v}\left(q^\mathrm{in}_{(u,v,w)}(t+1)-q^\mathrm{out}_{(u,v,w)}(t+1)+q^\mathrm{net}_{(u,v,w)}(t+1)\right)
\end{equation}

To model a specific traffic network, its initial conditions and the dynamic behavior of sources and sinks must be specified exogenously. Traffic flows are constrained on routes $(u,v,w)$ by the characteristics of the corresponding traffic space, the density of agents on that route, and other traffic participants traveling through $v$; this is modeled by general supply and demand constraints. In addition, we assume that turning fractions are conserved, cf.~\textcite{Tampere2011}; this captures the principle of first-in-first-out for incoming traffic. The solution of the traffic model according to \eqref{eq:dynamics} is defined as the solution of a global optimization problem that maximizes traffic flow.  This solution captures a perfect cooperation among traffic participants to achieve the objective and provides a benchmark solution; inspired by \textcite{Tampere2011}, we introduce interaction rules in Section~\ref{sec:SCIR} that realistically capture non-cooperative behavior. Adequate traffic dynamics is the solution of an optimization problem under these additional constraints.\footnote{The cooperative benchmark does not guarantee the greatest flow over a longer time horizon because myopic optimization focuses on a single time period. When focusing on a longer time horizon, constrained optimization solutions may be superior in some cases, although they are suboptimal for each time window given the same state at the beginning of that period.} 

\paragraph{Sending and Receiving Functions.} We begin by discussing general supply and demand constraints. Traffic densities constrain both inflow and outflow; this includes both the density of traffic on a given route $(u,v,w)$ and the counter-densities of routes that pass through traffic space $v$. The demand constraint is formalized by the sending function $S$, which captures existing traffic participants that would leave a traffic area in the next step if they could continue without any constraints in the subsequent modules (\emph{``demand for space by traffic participants'', ``flow that may be sent''}). The supply constraint is formalized by the receiving function $R$ and captures the maximum amount of traffic that can be absorbed from preceding traffic modules (\emph{``supply of space for traffic participants'', ``flow that may be received''}). The sending function $S$ and the receiving function $R$ are of the following form:\footnote{Another function with the same domain and range is the fundamental diagram that characterizes the stationary or long-run traffic flow on a given route $(u,v,w)$, if both demand from preceding traffic modules and supply of subsequent modules are unrestricted. It can be computed as the minimum of sending function $S_{(u,v,w)}$ and receiving function $R_{(u,v,w)}$.}\small
\begin{eqnarray*} 
S_{(u,v,w)} : &\quad 
\left\{
\begin{array}{ccc}
\bbr_+^{ \cI (v)  \times \cO (v)  } & \longrightarrow & \bbr_+\\
(\rho_{(u', v, w')})_{u'\in \cI (v), w'\in \cO(v) } & \mapsto& q_{(u, v, w)}
\end{array}
\right.   \\
R_{(u,v,w)} : &\quad 
\left\{
\begin{array}{ccc}
\bbr_+^{ \cI (v)  \times \cO (v)  } & \longrightarrow & \bbr_+\\
(\rho_{(u', v, w')})_{u'\in \cI (v), w'\in \cO(v)} & \mapsto& q_{(u, v, w)}
\end{array}
\right. 
\end{eqnarray*}\normalsize
They bound inflow and outflow, i.e.,\small
\begin{eqnarray*}
q^\mathrm{in}_{(u,v,w)}(t+1) & \leq &   R_{(u,v,w)} \left((\rho_{(u', v, w')} (t))_{u'\in \cI (v), w'\in \cO(v) } \right), \\ 
q^\mathrm{out}_{(u, v, w)}(t+1) & \leq &   S_{(u,v,w)}  \left((\rho_{(u', v, w')} (t))_{u'\in \cI (v), w'\in \cO(v)} \right).
\end{eqnarray*}\normalsize
For the sending function that bounds the outflow, we will always require that 
\begin{equation}\label{eq:s-pos}
S_{(u,v,w)} (\rho_{(u', v, w')} (t))_{u'\in \cI (v), w'\in \cO(v) }  \leq  \rho_{(u, v, w)}(t),
\end{equation}
 i.e., the outflow during the time interval $(t, t+1)$ cannot be greater than the occupation of $(u,v,w)$ 
with traffic participants.

Sending and receiving functions do not have to be constant in time, but can vary periodically, randomly or depending on circumstances. This can be modeled by a dependency on additional state variables besides the dependency on the traffic densities of the agents on the paths through a node.

\paragraph{Cooperative Driving Benchmark Model.} To be able to specify the cooperative driving benchmark model, we denote the fraction of traffic participants on $(u,v,w)$ turning to $y\in \cO(w)$ by $f_{(u,v,w) \to y}(t+1) \geq 0$ with $\sum_{y\in \cO(w)} f_{(u,v,w) \to y}(t+1) =1$. This leads to the following identity:
\begin{equation}\label{eq:in_out}
q^\mathrm{in}_{(u,v,w)}(t+1) =  \sum_{x\in \cI(u)} f_{(x,u, v) \to w}(t+1) \cdot q_{(x,u, v)}^\mathrm{out}(t+1)
\end{equation}

The flows $q^\mathrm{out}_{(u, v, w)}(t+1)$ are the solutions of the following myopic global optimization problem:
\begin{align}\label{eq:globopt}
\mathrm{argmax}\quad &\sum_{v\in V} \sum_{u\in \cI (v), \; w\in \cO(v)} q^\mathrm{out}_{(u, v, w)}(t+1)
\end{align}
s.t. for all  $v\in V$, $u\in \cI (v)$, $w\in \cO(v)$:\small
\begin{itemize}
\item $q^\mathrm{out}_{(u, v, w)}(t+1) \geq 0$,
\item 
$q^\mathrm{in}_{(u,v,w)}(t+1) =  \sum_{x\in \cI(u)} f_{(x,u, v) \to w}(t+1) \cdot q_{(x,u, v)}^\mathrm{out}(t+1)$,
\item $
q^\mathrm{in}_{(u,v,w)}(t+1) \leq R_{(u,v,w)} \left((\rho_{(u', v, w')} (t))_{u'\in \cI (v), w'\in \cO(v) } \right)
$,
\item $q^\mathrm{out}_{(u, v, w)}(t+1) \leq S_{(u,v,w)}  \left((\rho_{(u', v, w')} (t))_{u'\in \cI (v), w'\in \cO(v) } \right)$. 
\end{itemize}\normalsize

\begin{remark}\label{rem:optdecoupled}
The global problem can be split into decoupled local problems.  Since \small
$$
\sum_{v\in V} \sum_{u\in \cI (v), \; w\in \cO(v)} q^\mathrm{out}_{(u, v, w)}(t+1)=\sum_{v\in V}\sum_{u\in\cI(v)}\sum_{x\in\cI(u)}q^\mathrm{out}_{(x, u, v)}(t+1),
$$\normalsize
we can instead solve the decoupled problems
\begin{equation}\label{eq:optdecoupled}
\mathrm{argmax}\quad  \sum_{x\in\cI(u)} q_{(x,u, v)}^\mathrm{out}(t+1)
\end{equation} for all $v\in V$, $u\in \cI (v)$ under the corresponding constraints.\footnote{The constraints require that for all $x\in\cI(u)$, $w\in\cO(v)$:
$q^\mathrm{out}_{(x, u, v)}(t+1) \geq 0$,
$q^\mathrm{in}_{(u,v,w)}(t+1) =  \sum_{x\in \cI(u)} f_{(x,u, v) \to w}(t+1) \cdot q_{(x,u, v)}^\mathrm{out}(t+1)$, $q^\mathrm{in}_{(u,v,w)}(t+1) \leq R_{(u,v,w)} \left((\rho_{(u', v, w')} (t))_{u'\in \cI (v), w'\in \cO(v)} \right)
$, $q^\mathrm{out}_{(x, u, v)}(t+1) \leq S_{(x, u, v)}  \left((\rho_{(x', u, v')} (t))_{x'\in \cI (u), v'\in \cO(u)} \right)$.}
\end{remark}

\subsection{Examples of Traffic Nodes}\label{sec:ex_tn}

Our general framework allows us to capture various traffic modules, including the cell transmission model of \textcite{Daganzo1994}, roads, multidirectional pedestrian areas as in \textcite{Moustaid2021}, unsignalized and signalized intersections, roundabouts, and many other types of traffic spaces. The model can also be extended to multiple types of traffic participants  by introducing additional constraints that link these types together.

To describe a few motivating examples, consider a node labeled $\#$ and assume for simplicity that $\cI (\#) = \cO(\#)$ are adjacent nodes in a plane. Its elements are enumerated counterclockwise by $0, 1, \dots, n - 1$ with $n := \mbox{card} \left( \cI (\#) \right)$. A convenient approach will be to identify $\cI (\#)$ with the additive group $\bbz_n$, i.e., to equip $0, 1, \dots, n - 1$ with the operation $+$ modulo $n$.

\paragraph{Highways.}

Lanes of highways are separated, thus interaction between different directions is not present. Setting $\cI (\#) = \bbz_2$, a simple linear model for sending and receiving functions is, with $u \in \bbz_2$,\small
\begin{align*}
S_{(u,\#,u+1)}\left( \left(\rho_{(u',\#,w')} \right)_{u'\in\cI(\#),w'\in\cO(\#)}\right)&=\min\left\{s^\mathrm{max}_\#,a\rho_{(u,\#,u+1)} \right\},\\
R_{(u,\#,u+1)}\left( \left(\rho_{(u',\#,w')} \right)_{u'\in\cI(\#),w'\in\cO(\#)}\right)&=\max\left(b\left(\frac{\rho^\mathrm{max}_\#}{2}-c\rho_{(u,\#,u+1)} \right),0\right),
\end{align*}\normalsize
where $\rho^\mathrm{max}_\#>0$ is the maximum density, $s^\mathrm{max}_\#>0$ is the maximum flow, $0<a\leq 1$ is the free-flow speed, $0<b\leq 1$ is the congestion wave speed, and $c>0$ is an interaction parameter.

\paragraph{Bidirectional Linear Interfaces.}

A generalization of the unidirectional situation is adequate for pedestrians with bidirectional interacting flows. We introduce an additional interaction parameter $d>0$ that reflects the impact of traffic travelling in opposite direction and obtain a simple linear model for sending and receiving functions; their specification is, with $u \in \bbz_2$,\small
\begin{align*}
S_{(u,\#,u+1)}\left( \left(\rho_{(u',\#,w')} \right)_{u'\in\cI(\#),w'\in\cO(\#)}\right)&=\min\left\{s^\mathrm{max}_\#,a\rho_{(u,\#,u+1)} \right\},\\
R_{(u,\#,u+1)}\left( \left(\rho_{(u',\#,w')} \right)_{u'\in\cI(\#),w'\in\cO(\#)}\right)&=\max\left(b\bigg(\rho^\mathrm{max}_\#-c\rho_{(u,\#,u+1)} -d\rho_{(u+1,\#,u)} \bigg),0\right).
\end{align*}\normalsize
The models may, of course, include nonlinear relationships if the data or expert knowledge suggest other functional forms. An example can be found in \textcite{Moustaid2021}, where the sending function is nonlinear if both the density and the counter-density are subcritical. Their model can be easily transferred to our notation.

\paragraph{Pedestrian Square.}
Bidirectional linear interfaces can be generalized canonically to multiple directions. A fully symmetric geometry with $n$ entries/exits and can be captured by a node $\#$ with $\cI (\#) = \bbz_n$. Assuming pedestrians do not return to the same entry, constant turning rates $f_{(x,u,\#)\to w}(t)\equiv 1/(n-1)$ for all $u\in \mathbb{Z}_n$, $w\in\mathbb{Z}_n\setminus \{u\}$ are a simple choice. For $u\neq w$, a simple model is \small
\begin{align*}
S_{(u,\#,w)}\left( \left(\rho_{(u',\#,w')} \right)_{u'\in\cI(\#),w'\in\cO(\#)}\right)&=\min\left(s^\mathrm{max}_\#,a\rho_{(u,\#,w)}\right),\\
R_{(u,\#,w)}\left( \left(\rho_{(u',\#,w')} \right)_{u'\in\cI(\#),w'\in\cO(\#)}\right)&=\max\Bigg(b\bigg(\rho^\mathrm{max}_\#-c\rho_{(u,\#,w)}-d\sum_{\substack{u'\in\cI(\#)\setminus\{u\},\\ w'\in\cO(\#)\setminus\{w\}}} \rho_{(u',\#,w')} \bigg),0\Bigg).
\end{align*}\normalsize

\paragraph{Roundabouts.}
In roundabouts the interaction of the various participants is also determined by the overlap of their paths. This leads to somewhat more complicated sending and receiving functions, but the main ideas are similar to those outlined above. We provide a description of the details in Appendix~\ref{sec:a-ex}, including a bidirectional traffic circle for pedestrians and an extension of the framework to traffic models with multiple populations.

\paragraph{Intersections.}
Just as with roundabouts, complex interactions of traffic participants can also be modeled at intersections. As examples, we consider two cases: a highly simplified model of an unsignalized junction and a rather complex model of an intersection with a traffic light.

A possible \emph{simplified model} of an intersection adjusts the pedestrian space. Since cars move slower on crowded intersections, we include an exponential damping factor between $0$ and $1$ in the definition of the sending function that depends on a parameter $\zeta >0$. For  $u\in\cI(v)$ and $u\neq w\in \cO(v)$ we set\small
\begin{align*}
S_{(u,\#,w)}\left( \left(\rho_{(u',\#,w')} \right)_{u'\in\cI(\#),w'\in\cO(\#)}\right)&=\min\left\{s^\mathrm{max}_\#,a \rho_{(u,\#,w)} \exp\left(- \zeta \cdot \sum_{u'\in\cI(\#),w'\in\cO(\#)} \rho_{(u',\#,w') }\right)\right\},\\
R_{(u,\#,w)}\left( \left(\rho_{(u',\#,w')} \right)_{u'\in\cI(\#),w'\in\cO(\#)}\right)&=\max\left\{b\left(\rho^\mathrm{max}_\#-c\sum_{u'\in\cI(\#),w'\in\cO(\#)} \rho_{(u',\#,w')} \right),0\right\}.
\end{align*}\normalsize

At the other end of the spectrum are models incorporating more details that allow cell transmission models (albeit still much simpler than microscopic models) to reproduce complex interaction patterns. As an example, consider a \emph{signalized intersection} with $\cI(\#)=\cO(\#) = \bbz_4$. Consider $u \in\cI(\#)$ and right-hand traffic. The path $(u,v, u+1)$ corresponds to a right turn, while the paths $(u,v, u+2)$ and $(u,v, u+3)$ represent going straight and a left turn, respectively. As additional state variables, we consider for each path the current signal (state $0$ -- red,  or state $1$ -- green) and the time it has been in this state. To capture the traffic lights, we assume that both the sending functions and receiving functions depend on these states. More specifically, $LA_{(u,\#,u+i)}$, $i=1,2,3$, adjusts the free-flows speed of traffic. In signal state $LS_{(u,\#,w)} = 0$, it is set to $0$, but in state $LS_{(u,\#,w)} = 1$ the adjustment $LA_{(u,\#,u+i)}$ depends on the time the signal has been in this state. The traffic light does not immediately show green in state $0$ but with a delay, so that $LA_{(u,\#,u+i)}$ is initially $0$ and increases with time due to the acceleration of traffic until the maximal free-flow speed is reached. When turning right and driving straight ahead, the adjusted sending functions are \small
\begin{align*}
S_{(u,\#,u+1)}\left( \left(\rho_{(u',\#,w')} \right)_{u'\in\cI(\#),w'\in\cO(\#)},LA_{(u,\#,u+1)}\right)&=\min\left\{s^\mathrm{max}_\#,LA _{(u,\#,u+1)} a \rho_{(u,\#,u+1)} \right\},\\
S_{(u,\#,u+2)}\left( \left(\rho_{(u',\#,w')} \right)_{u'\in\cI(\#),w'\in\cO(\#)},LA_{(u,\#,u+2)}\right)&=\min\left\{s^\mathrm{max}_\#, LA_{(u,\#,u+2)} a \rho_{(u,\#,u+2)} \right\}.
\end{align*} \normalsize
When turning left, the sending function may be decreased due to oncoming traffic. This can be modeled by an exponential term for a parameter $\zeta >0$, for example:\small
\begin{align*}
&S_{(u,\#,u+3)}\left( \left(\rho_{(u',\#,w')} \right)_{u'\in\cI(\#),w'\in\cO(v)},LA_{(u,\#,u+3)}\right) \; = \\ &
  \quad\quad\quad\quad\quad\quad\quad \min\Bigg\{s^\mathrm{max}_\#, LA_{(u,\#,u+3)} a \rho_{(u,\#,u+3)}\cdot
\exp\bigg( - \zeta \cdot \left(\rho_{(u+2,\#,u)} +\rho_{(u+2,\#,u+3)} \right) \bigg) \Bigg\}
\end{align*}\normalsize
The traffic node $\#$ of the signalized intersection also includes the areas in front of the traffic lights. Their capacity is limited by $\rho_{u,\#}^\mathrm{max}$, and this is reflected by the receiving functions: \small
\begin{align*}
R_{(u,v,w)}\left( \left(\rho_{(u',v,w')}\right)_{u'\in\cI(v),w'\in\cO(v)}\right)&=\max\left\{ b \left(\rho_{u,\#}^\mathrm{max}-\sum_{w'\in\cO(\#)} \rho_{(u,\#,w')} \right),0\right\}.
\end{align*}\normalsize

\subsection{Interaction Rules}\label{sec:SCIR}

The global optimization problem \eqref{eq:globopt} captures perfect cooperation to achieve maximum (albeit myopic) traffic flow in the traffic system. This is unrealistic for models of real traffic, since individual participants optimize only their own utility, which may conflict with the goals of others. As explained in \textcite{Tampere2011}, further constraints -- \emph{interaction rules} -- can mimic the local behavior of agents. We explicitly specify three formal approaches.

\paragraph{Demand Proportional Flows.}
Sending functions model the demand of road users for movement. An interaction rule could specify that realized flows are proportional to demand. To be more specific, we focus on the decoupled problems \eqref{eq:optdecoupled} and assume that there exists a constant $\lambda_{(u,v)}(t+1)\in[0,1]$, independent of $x$, such that
\begin{equation}\label{eq:dpf}
q^\mathrm{out}_{(x, u, v)}(t+1) =\lambda_{(u,v)}(t+1) S_{(x, u, v), k}  \left((\rho_{(x', u, v')} (t))_{x'\in \cI (u), v'\in \cO(u)}\right) 
\end{equation}
Maximizing flow in \eqref{eq:optdecoupled} is now considerably simplified and equivalent to solving for all $v\in V$ and $u\in\cI(v)$ the problems $\mathrm{argmax}_{\lambda_{(u,v)}\in[0,1]}\quad  \lambda_{(u,v)}$ under the constraints given in Remark \ref{rem:optdecoupled}. These problems possess the explicit solutions \small
\begin{equation*}
\lambda_{(u,v)} (t+1)=\min\left\{1,\min_{w\in\cO(v)}\left\{\frac{R_{(u,v,w)} \left((\rho_{(u', v, w')} (t))_{u'\in \cI (v), w'\in \cO(v)}\right)}{\sum_{x\in \cI(u)} f_{(x,u, v) \to w}(t+1) \cdot S_{(x, u, v)}  \left((\rho_{(x', u, v')} (t))_{x'\in \cI (u), v'\in \cO(u)}\right)}\right\}\right\},
\end{equation*} \normalsize
yielding the flows\small
\begin{align*}
q^\mathrm{out}_{(x, u, v)}&(t+1) \quad =\\ & \min\Bigg\{S_{(x, u, v)}  \left((\rho_{(x', u, v')} (t))_{x'\in \cI (u), v'\in \cO(u)}\right),\min_{w\in\cO(v)}\left\{\frac{R_{(u,v,w)} \left((\rho_{(u', v, w')} (t))_{u'\in \cI (v), w'\in \cO(v)}\right)}{\sum_{x\in \cI(u)} f_{(x,u, v) \to w}(t+1) }\right\}\Bigg\}.
\end{align*}\normalsize

\paragraph{Capacity Proportional Flows.} Instead of assuming that realized flows are proportional to demand for the same factor, different directions could have different capacities that determine the proportionality factors. Letting $\sum_{x\in \cI(u)} d_{(x,u,v)}=1$ with $d_{(x,u,v)}\geq 0$, $x\in \cI(u)$, one may assume that there exists a constant $\lambda_{(u,v)}(t+1)\in[0,1]$, independent of $x$, such that
\begin{equation}\label{eq:dcpf}
q^\mathrm{out}_{(x, u, v)}(t+1) = \min\left(\lambda_{(u,v)} d_{(x,u,v)},1\right) S_{(x, u, v)}  \left((\rho_{(x', u, v')} (t))_{x'\in \cI (u), v'\in \cO(u)} \right)
\end{equation}
The factor $\min\left(\lambda_{(u,v)} d_{(x,u,v)},1\right)$ ensures that realized flows cannot become larger than the sending flows. As in the case of demand proportional flows, this leads again to several one-dimensional optimization problems that can be easily solved explicitly:\small
\begin{multline*}
\lambda_{(u,v)}(t+1) =  \min_{w\in\cO(v)}\Bigg\{  \inf\bigg\{\lambda\geq 0\colon  
\sum_{x\in\cI(u)} f_{(x,u,v)}(t+1) \min\left(\lambda d_{(x,u,v)},1\right)\cdot \\  S_{(x, u, v)}  \left((\rho_{(x', u, v')} (t))_{x'\in \cI (u), v'\in \cO(u) }\right)  = R_{(u,v,w)} \left((\rho_{(u', v, w')} (t))_{u'\in \cI (v), w'\in \cO(v) } \right)\bigg\}\Bigg\}.
\end{multline*}\normalsize
The interior minimization is simply the solution of an equation in the single variable $\lambda$. The right hand side is continuous and increasing in $\lambda$. Due to its piecewise linearity, this problem can be solved by a finite number of iterations.

\paragraph{Priority Rules.} 
Priority rules vary from country to country. A common example is that on an intersection traffic from the right often has priority. This can be implemented as an interaction rule in our model. Again, for $v\in V$ and $u\in \cI(v)$, as in problem \eqref{eq:optdecoupled}, we consider the decoupled problems of local optimization of the flow over the next time period. To capture priority rules, we assume that based on the current traffic state, a fixed enumeration of $\cI(u)=\{x_{u,1},\dots,x_{u,I_u}\}$ is chosen. This order of incoming nodes is fixed for a certain period of time during which local traffic flows are computed hierarchically. Traffic originating from the nodes listed before the others has priority. The duration of the regime must be chosen according to the real situation being modeled and the real time to which each time period in the model corresponds. Formally, we sequentially solve for $i=1,\dots,I_u$ the problems $\mathrm{argmax} \; q_{(x_{u,i},u, v)}^\mathrm{out}(t+1)$ for the corresponding constraints\footnote{The constraints are $q^\mathrm{out}_{(x_{u,i}, u, v)}(t+1) \geq 0$, $q^\mathrm{in}_{(u,v,w)}(t+1) =  \sum_{j=1}^i f_{(x_{u,j},u, v) \to w}(t+1) \cdot q_{(x_{u,j},u, v)}^\mathrm{out}(t+1)$, $q^\mathrm{in}_{(u,v,w)}(t+1) \leq R_{(u,v,w)} \left((\rho_{(u', v, w')} (t))_{u'\in \cI (v), w'\in \cO(v) }\right)$  for all $w \in \cO(v)$ and  $q^\mathrm{out}_{(x_{u,i}, u, v)}(t+1) \leq S_{(x_{u,i}, u, v)}  \left((\rho_{(x', u, v')} (t))_{x'\in \cI (u), v'\in \cO(u)} \right)$.} and obtain the solution \small
\begin{align*}
q^\mathrm{out}_{(x_{u,1}, u, v)}(t+1)=&\min\bigg(S_{(x_{u,1}, u, v)}  \left((\rho_{(x', u, v')} (t))_{x'\in \cI (u), v'\in \cO(u)} \right),\frac{R_{(u,v,w)} \left((\rho_{(u', v, w')} (t))_{u'\in \cI (v), w'\in \cO(v)} \right)}{f_{(x_{u,1},u, v) \to w}(t+1)}\bigg),\\
q^\mathrm{out}_{(x_{u,i+1}, u, v)}(t+1)=&\min\bigg(S_{(x_{u,i+1}, u, v)}  \left((\rho_{(x', u, v')} (t))_{x'\in \cI (u), v'\in \cO(u) } \right),\\
& \frac{R_{(u,v,w)} \left((\rho_{(u', v, w')} (t))_{u'\in \cI (v), w'\in \cO(v)} \right) - \sum_{j=1}^i f_{(x_{u,j},u, v) \to w}(t+1) \cdot q_{(x_{u,j},u, v)}^\mathrm{out}(t+1)}{f_{(x_{u,i+1},u, v) \to w}(t+1)} \bigg), \\
& \quad \quad\quad\quad\quad \mbox{for all} \; i=1,\dots,I_u .
\end{align*} \normalsize

\section{Acceptable Configurations and Designs}\label{sec:acd}

\subsection{The Question}

We will be interested in evaluating the performance of traffic systems under various conditions. Specifically, the influence of measures to regulate traffic should be understood. To this end, we consider a collection of cell transmission models enumerated by a vector $ k \in \mathbb{D} \subseteq \mathbb{R}^r$ for some dimension $r \in \mathbb{N}$. The set $\mathbb{D}$ is assumed to be bounded. The components of $k$ specify the characteristics of a traffic system, such as total traffic volume, traffic control parameters, the magnitude of random variations of various variables, and weather conditions. For fixed $k\in \mathbb{D} $, the (possibly random) time evolution of the corresponding cell transmission model is described following the approach from Section~\ref{sec:ctm}. We will call $k$ a \emph{design parameter}. For each design parameter $k\in \mathbb{D}$, the traffic system can be simulated and the corresponding random variables of interest can be calculated. We assume that we wish to evaluate a random variable $Q_k$ and compare the results across $k \in \mathbb{D}$. The random variable $Q_k$ could model the total network traffic flow or traffic flow per traffic volume over a given time horizon, for example.

\subsection{Preference Functionals and Acceptable Designs}\label{sec:pfad}

By $\xcal$ we denote a normed space of random variables such as $L^p$, $p\in [1,\infty]$, and assume that $Q_k\in \xcal$, $k \in \mathbb{D}$. We evaluate the performance of the traffic system by a preference functional $U:  \xcal \to \bbr$. Typically, $U$ is increasing on $\xcal$, i.e., if $Q\leq Q'$ almost surely, then $U(Q) \leq U(Q')$. A special case is expected utility with $U(Q) = \mathbb{E}(u(Q))$ for an increasing function $u: \mathbb{R} \to \mathbb{R}$. In the case studies in Section~\ref{sec:case-studies}, we will study
\begin{enumerate}
\item \textbf{Expectation:} $u(x)=x$,
\item \textbf{Polynomial Utility:} $u(x) = -|x-c_p|^\alpha\mathbbm{1}\{x\leq c_p\},\quad c_p\in\mathbb{R},~\alpha\geq 1$,
\item \textbf{Expectile Utility:} $u(x) = \alpha(x-c_e)_+ - (1-\alpha) (x-c_e)_- ,\quad c_e\in\mathbb{R},~\alpha\leq 1/2$ where $x_+=\max(x,0)$ and $x_-=\max(-x,0)$,
\item \textbf{Square Root Utility:} $u(x) = \sqrt{x}$.
\end{enumerate}

For the practical evaluation of all traffic systems enumerated by design parameters $k\in \mathbb{D} $, categorization by performance is a convenient methodology. This is related to the level sets of the utility functionals. For a utility functional $U$ and a fixed level $\gamma \in \bbr$, the set of \emph{acceptable designs} is 
\begin{equation*}
\mathcal{D}=\mathcal{D}_{U,\gamma}=\left\{ k\in\mathbb{D}\colon U(Q_k) \geq \gamma\right\}.
\end{equation*}
This characterizes the traffic systems with utility of at least $\gamma$. In applications, the level $\gamma$ is often chosen as the utility of a benchmark distribution, i.e., $\gamma:= U(Q)$ for a random variable $Q$ with the benchmark distribution.

The acceptable designs are closely related to systemic risk measures as introduced in \textcite{Feinstein2017}. In the special case that $U$  and $k\mapsto Q_k$ are increasing and $k$ parametrizes the design parameters in terms of incremental monetary costs, the systemic risk measure
$$R ((Q_m)_{m\in \mathbb{R}^r};k)  = \left\{ m\in\mathbb{D}\colon U(Q_{k+m}) \geq \gamma\right\} =  \left\{ m\in\mathbb{D}\colon k+m   \in  \mathcal{R} \right\} = \mathcal{R} - k $$ is the collection of vectors of additional investments required for the various features of the traffic system to achieve the acceptable design. 

\section{Learning the Acceptable Design}\label{sec:learning}

We are interested in characterizing acceptable designs $\mathcal{D}=\{k\in\mathbb{D}\colon \mathbb{E}(u(Q_k))\geq \gamma\}$ of traffic systems. The challenge is that $Q_k$ can only be simulated for finitely many $k\in \mathbb{D}$ and needs to be interpolated in between these points. The selection of points for the simulation is also an important issue.
In this section, we propose a machine learning algorithm for the accelerated estimation of this set. 

We approach the problem by approximating the function\footnote{To control the approximation error, we assume that $\mathbb{D}$ is bounded. This implies that $\mathcal{D}\subseteq \mathbb{D}$ is bounded.}
\begin{equation*}
\mu\colon\mathbb{D}\to\mathbb{R},\quad k\mapsto\mathbb{E}(u(Q_k)),
\end{equation*} 
based on simulated data $(k,\hat{\mu}_k)_{k\in\mathbb{D}}$.\footnote{We use the index notation for the data $\hat{\mu}_k$ at isolated points $k$ and retain the notation $\mu(\cdot)$ when referring to a function defined on $\mathbb{D}$.} 
The simulated data are generated by Monte Carlo simulation of $Q_k$ for selected points $k\in \mathbb{D}$. We use an iterative learning algorithm that successively selects finite sets $\mathbb{D}^i$ of points, $i=1,2, \dots$.  These sets form an increasing sequence $\mathbb{D}^0\subseteq\mathbb{D}^1\subseteq\mathbb{D}^2 \subseteq\dots$, and new points $\mathbb{D}^i\setminus\mathbb{D}^{i-1}$ are strategically selected. The corresponding values $\mu(k)$ at $k\in \mathbb{D}^i\setminus\mathbb{D}^{i-1}$ are estimated with increasing accuracy. To extend the $(k,\hat{\mu}_k)_{k\in\mathbb{D}^i}$ to the entire design space $\mathbb{D}$, we use Gaussian process regression (GPR), a Bayesian inference method. The corresponding estimator of $\mu$ is denoted by $m^i\colon\mathbb{D}\to\mathbb{R}$. A key feature of GPR is that it not only produces an estimator of $\mu$, but also captures the corresponding uncertainty.

The acceptable designs $\mathcal{D}$ are estimated by the plug-in estimators $\hat{\mathcal{D}}^i = \{k \in \mathbb{D} \colon m^i(k) \geq \gamma\}$, $i\in \bbn$. Gaussian process regression has been used previously for estimating level sets (\textcite{lyu2021evaluating}). The main innovation of this section is to develop a framework for active learning using sequential statistics in conjunction with GPR. For this purpose, we use a heteroscedastic version of Gaussian process regression.

\subsection{Monte Carlo Estimation of Function Values} 

We first describe the Monte Carlo estimation of the simulated data, i.e., the estimation of the function value $\mu(k)=\mathbb{E}(u(Q_k))$ given a fixed design parameter $k\in\mathbb{D}$ in a fixed iteration $i\in\mathbb{N}$. Denote by $(\tau^i)^2>0$ a selected \emph{target variance} of estimates $(k,\hat{\mu}_k)_{k\in\mathbb{D}^i}$. Increasing precision is obtained by letting $\tau^i$ be decreasing for $i \geq 1$; however, $\tau^0$ is associated to the initialization of the algorithms and therefore typically chosen smaller than $\tau^1$. We combine a heuristic from the central limit theorem with sequential statistics to determine a stopping criterion. 

If $\hat{Q}_k^1,\hat{Q}_k^2,\dots$ is a sequence of i.i.d. samples of $Q_k$, the central limit theorem implies that, for large $n\in\mathbb{N}$, the distribution of the sample mean $
\hat{\mu}_k^n = \frac{1}{n}\sum_{j = 1}^n u(\hat{Q}_k^j)$
is roughly 
$\mathcal{N}\left(\mu(k),\frac{(\hat{\sigma}^n_k)^2}{n}\right)$
where the correct variance is replaced by the sample variance $(\hat{\sigma}^n_k)^2$ of our estimates of expected utility. 

We stop sampling when the sample noise $(\hat{\sigma}^n_k)^2/n$ falls below the desired target noise $(\tau^i)^2$.  Additionally, we impose a minimal number of simulations $n_\mathrm{min}$ and a maximal number of simulations $n_\mathrm{max} \gg n_\mathrm{min}$ so that a known upper bound on computational resources can be guaranteed -- but this does not have to be required. To be precise, we end our simulations of $Q_k$ when
\begin{equation*}
n = \min\left\{\min\left\{n_\mathrm{min} \leq \bar{n}\colon \frac{(\hat{\sigma}_k^{\bar{n}})^2}{\bar{n}} \leq (\tau^i)^2\right\} , n_\mathrm{max}\right\}.
\end{equation*}
Finally, we set $\hat{\mu}_k:=\hat{\mu}_k^n$ and (with a slight abuse of notation) $\tau_k^2:=\frac{(\hat{\sigma}_k^{{n}})^2}{{n}}$.

\subsection{Gaussian Process Regression}\label{sec:learning-estimate-gpr}

We give a brief overview of Gaussian process regression\footnote{We refer to \textcite{Rasmussen2005} as a standard reference for more details}, adapted to our framework. A \emph{Gaussian process} defines a probability law over functions from $\mathbb{D}$ to $\mathbb{R}$ such that the finite-dimensional marginal distributions are normal. The distribution is fully characterized by its mean function $m\colon\mathbb{D}\to\mathbb{R}$ and covariance function (also called kernel) $c\colon \mathbb{D}\times\mathbb{D}\to[0,\infty)$. The corresponding law or process is denoted by $\mathcal{GP}(m,c)$. For any finite set\footnote{Note that we retain the index $i$ even though it is not technically required for this section.} $\mathbb{D}^i\subseteq\mathbb{D}$, we know that $M(\mathbb{D}^i)\sim\mathcal{N}\left(m(\mathbb{D}^i),\Sigma(\mathbb{D}^i,\mathbb{D}^i)\right)$ with $M(\mathbb{D}^i)=(M(l))_{l\in\mathbb{D}^i}$, $m(\mathbb{D}^i)=(m(l))_{l\in\mathbb{D}^i}$, and $\Sigma(\mathbb{D}^i,\mathbb{D}^i)=(c(l,l'))_{l,l'\in \mathbb{D}^i}$. The kernel $c$ is also associated to  properties of the process such as the existence of regular versions. 

\emph{Gaussian process regression} is a probabilistic procedure for inference on an unknown function from possibly noisy data on the values of the function at some points. GPR is a Bayesian approach, i.e., the function is assumed to be an unknown sample from a prior distribution. The prior is taken to be the law of a Gaussian process. The posterior distribution is then calculated based on the data.

There are numerous approaches for choosing the prior. In this paper, we will use the standard choice $m\equiv 0$ as the prior mean and consider two commonly used covariance functions\footnote{\textcite{Rasmussen2005} offer an extensive discussion on covariance functions.}, namely the squared exponential function and the Matérn kernel. Both are defined in terms of hyperparameters $\sigma_c^2>0$ (referred to as the \emph{signal variance}) and $l>0$ (\emph{characteristic length scale}). The squared exponential function is given by
\begin{equation*}
c_\mathrm{SE}(k,k')=\sigma_c^2\cdot \exp\left(-\frac{\|k-k'\|^2}{2l^2}\right),\quad k,k'\in\mathbb{D},
\end{equation*}
and, for $\nu>0$, the Matérn kernel is defined as
\begin{equation*}
c_\mathrm{M}(k,k';\nu)=\sigma_c^2\cdot \frac{2^{1-\nu}}{\Gamma(\nu)}\left( \frac{\sqrt{2\nu}\|k-k'\|}{l}\right)^\nu K_\nu\left(\frac{\sqrt{2\nu}\|k-k'\|}{l}  \right),\quad k,k'\in\mathbb{D},
\end{equation*}
with $K_\nu$ being a modified Bessel function. The squared exponential function has mean square derivatives, yielding smooth behavior of the sample paths. The smoothness properties of the Matérn kernel are controlled by $\nu$: Small choices lead to rougher sample paths, while, in the limit $\nu\to\infty$, the Matérn kernel equals the squared exponential function and can, thus, be understood as a generalization. For more details, we refer to \textcite{Rasmussen2005}. In our case studies, we will consider the common choice $\nu=3/2$.\footnote{For $\nu=3/2$, the Matérn kernel simplifies into $c_\mathrm{M}(k,k';3/2)=\left(1+\frac{\sqrt{3}\|k-k'\|}{l}\right)\exp\left(-\frac{\sqrt{3}\|k-k'\|}{l}\right)$.} The Matérn kernel offers advantages when higher curvature is required at some points to reduce the amplitude of oscillations in the approximation.

To estimate the unknown function from data, the following Bayesian approach will be applied: The unknown function $\mu$ is interpreted as a realization of a Gaussian process $M\sim\mathcal{GP}(m,c)$ with the given prior distribution. For any finite set $\mathbb{D}_*\subseteq \mathbb{D}\setminus \mathbb{D}^i$, $M$ evaluated in $(\mathbb{D}^i,\mathbb{D}_*)$ follows a joint normal distribution. This still holds true, if we introduce independent noise $\varepsilon_k\sim\mathcal{N}(\mu(k),\tau_k^2)$ with $\tau_k\geq 0$ and set $\hat{M}_k=M(k)+\varepsilon_k$, $k\in \mathbb{D}^i$. The joint distribution with noise is
\begin{equation}\label{eq:jointNormal}
\begin{bmatrix}
\hat{M}(\mathbb{D}^i)\\
M(\mathbb{D}_*)
\end{bmatrix}\sim\mathcal{N}\left(\begin{bmatrix}
m(\mathbb{D}^i)\\
m(\mathbb{D}_*)
\end{bmatrix},\begin{bmatrix}
\Sigma(\mathbb{D}^i,\mathbb{D}^i)+\operatorname{diag}\left(\tau_1^2,\dots,\tau_{|\mathbb{D}^i|}^2\right) & \Sigma(\mathbb{D}^i,\mathbb{D}_*)\\
\Sigma(\mathbb{D}_*,\mathbb{D}^i) & \Sigma(\mathbb{D}_*,\mathbb{D}_*)
\end{bmatrix}\right)
\end{equation}
with $\hat{M}(\mathbb{D}^i)=(\hat{M}_l)_{l\in\mathbb{D}^i}=(M(l)+\varepsilon_l)_{l\in\mathbb{D}^i}$, $M(\mathbb{D}_*)=(M(l))_{l\in\mathbb{D}_*}$ and $\Sigma(\mathbb{D}^i,\mathbb{D}^i)=(c(l,l'))_{l,l'\in \mathbb{D}^i}$, $\Sigma(\mathbb{D}^i,\mathbb{D}_*)=(c(l,l'))_{l\in\mathbb{D}^i,l'\in \mathbb{D}_*}$, $\Sigma(\mathbb{D}_*,\mathbb{D}^i)=\Sigma(\mathbb{D}^i,\mathbb{D}_*)^\top$, $\Sigma(\mathbb{D}_*,\mathbb{D}_*)=(c(l,l'))_{l,l'\in \mathbb{D}_*}$.

We assume that \emph{noisy observations} are obtained on $\mathbb{D}^i$. Then a Bayesian update for the Gaussian process is computed. A Bayesian estimator of $\mu$ is given by the posterior mean function, and its uncertainty can be captured by the posterior variance. The \emph{posterior distribution}, i.e., the distribution of $(M\mid \hat{M}(\mathbb{D}^i)=\hat{\mu}(\mathbb{D}^i))$, is characterized by the following theorem. 

\begin{theorem}[Gaussian Process Regression]\label{thm:GPR}
Suppose that the Gaussian process $M\sim\mathcal{GP}(m,c)$ has a mean function $m\colon\mathbb{D}\to\mathbb{R}$ and a covariance function $c\colon \mathbb{D}\times \mathbb{D}\to [0,\infty)$. We assume that $\hat{M}_k=M(k)+\varepsilon_k$,  $k\in\mathbb{D}^i$, where the random variables $\varepsilon_k\sim\mathcal{N}(0,\tau^2_k)$, $k\in\mathbb{D}^i$, are jointly independent and also independent of $M$.  For given noisy observations $\hat{\mu}(\mathbb{D}^i)$ of $\hat{M}$ on $\mathbb{D}^i$, the conditional law of the process is $(M\mid \hat{M}(\mathbb{D}^i)=\hat{\mu}(\mathbb{D}^i))\sim \mathcal{GP}(m^i,c^i)$ where
\begin{align*}
m^i(k)&=m(k)+ \Sigma(\mathbb{D}^i,k)^\top \left( \Sigma(\mathbb{D}^i,\mathbb{D}^i) + \operatorname{diag}\left(\tau_1^2,\dots,\tau_{|\mathbb{D}^i|}^2\right)\right)^{-1}(\hat{\mu}(\mathbb{D}^i)-m(\mathbb{D}^i)),\quad k\in\mathbb{D},\\
c^i(k,k')&=c(k,k')- \Sigma(\mathbb{D}^i,k)^\top \left( \Sigma(\mathbb{D}^i,\mathbb{D}^i) + \operatorname{diag}\left(\tau_1^2,\dots,\tau_{|\mathbb{D}^i|}^2\right)\right)^{-1}\Sigma(\mathbb{D}^i,k') ,\quad k,k'\in\mathbb{D}.
\end{align*}
with $\hat{\mu}(\mathbb{D}^i)=(\hat{\mu}_l)_{l\in\mathbb{D}^i}$, $m(\mathbb{D}^i)=(m(l))_{l\in\mathbb{D}^i}$, $\Sigma(\mathbb{D}^i,k)=(c(l,k))_{l\in\mathbb{D}^i}$, and $\Sigma(\mathbb{D}^i,\mathbb{D}^i)=(c(l,l'))_{l,l'\in \mathbb{D}^i}$. The standard deviations are given by $\sigma^i(k) = \sqrt{c^i (k,k)} $.
\end{theorem}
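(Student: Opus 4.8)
The plan is to reduce the infinite-dimensional statement to a finite-dimensional Gaussian conditioning computation, exploiting the fact that a Gaussian process is completely determined by its finite-dimensional marginal distributions. Concretely, to identify the conditional law $(M \mid \hat{M}(\mathbb{D}^i) = \hat{\mu}(\mathbb{D}^i))$ as the process $\mathcal{GP}(m^i, c^i)$, it suffices to show that for every finite evaluation set $\mathbb{D}_* \subseteq \mathbb{D} \setminus \mathbb{D}^i$, the conditional distribution of the vector $M(\mathbb{D}_*)$ given $\hat{M}(\mathbb{D}^i) = \hat{\mu}(\mathbb{D}^i)$ is multivariate normal with mean vector $(m^i(k))_{k \in \mathbb{D}_*}$ and covariance matrix $(c^i(k,k'))_{k,k' \in \mathbb{D}_*}$, where $m^i$ and $c^i$ are the functions in the statement.

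First I would record the joint normality of the augmented vector $(\hat{M}(\mathbb{D}^i), M(\mathbb{D}_*))$, which is exactly the content of \eqref{eq:jointNormal}: since $M$ is a Gaussian process, its restriction $M(\mathbb{D}^i \cup \mathbb{D}_*)$ is jointly normal, and adding the independent mean-zero Gaussian noise $\varepsilon_k$ on the $\mathbb{D}^i$-block preserves joint normality because independent Gaussian vectors have a Gaussian sum. This fixes the block mean and block covariance structure, with top-left block $\Sigma(\mathbb{D}^i, \mathbb{D}^i) + \operatorname{diag}(\tau_1^2, \dots, \tau_{|\mathbb{D}^i|}^2)$, off-diagonal blocks $\Sigma(\mathbb{D}^i, \mathbb{D}_*)$ and its transpose, and bottom-right block $\Sigma(\mathbb{D}_*, \mathbb{D}_*)$.

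Next I would apply the standard conditioning formula for a jointly Gaussian vector $(X, Y)$: conditional on $X = x$, the vector $Y$ is normal with mean $\mu_Y + \Sigma_{YX} \Sigma_{XX}^{-1}(x - \mu_X)$ and covariance $\Sigma_{YY} - \Sigma_{YX} \Sigma_{XX}^{-1} \Sigma_{XY}$. Setting $X = \hat{M}(\mathbb{D}^i)$ and $Y = M(\mathbb{D}_*)$ and substituting the blocks identified above yields precisely the formulas for $m^i$ and $c^i$, read off entry by entry over $k, k' \in \mathbb{D}_*$. Because these entrywise expressions do not depend on which further points $\mathbb{D}_*$ contains, the resulting family of conditional finite-dimensional distributions is consistent in the sense of Kolmogorov, so it is the family of marginals of a well-defined Gaussian process, namely $\mathcal{GP}(m^i, c^i)$.

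The main obstacle is ensuring that $\Sigma(\mathbb{D}^i, \mathbb{D}^i) + \operatorname{diag}(\tau_1^2, \dots, \tau_{|\mathbb{D}^i|}^2)$ is invertible so that the conditioning formula applies. Since $\Sigma(\mathbb{D}^i, \mathbb{D}^i)$ is the Gram matrix of a positive semidefinite kernel and the diagonal noise term is positive semidefinite, their sum is positive semidefinite, and it is strictly positive definite, hence invertible, whenever the noise variances are strictly positive or the kernel is strictly positive definite on the distinct design points. In the degenerate case where the sum is singular I would restrict to a maximal linearly independent sub-block, or replace the inverse by a pseudo-inverse and pass to a limit; for the intended applications the target variances $\tau_k^2 > 0$ guarantee invertibility directly, so this is not a genuine difficulty.
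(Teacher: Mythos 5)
Your proposal is correct: the paper does not prove this theorem itself but only points to the literature (Goldberg et al., 1997), and your argument is exactly the standard derivation behind that citation --- reduce to finite-dimensional marginals, use joint Gaussianity of $(\hat{M}(\mathbb{D}^i), M(\mathbb{D}_*))$ as in \eqref{eq:jointNormal}, apply the block conditioning formula, and check Kolmogorov consistency of the resulting conditional finite-dimensional distributions. Your remark on invertibility of $\Sigma(\mathbb{D}^i,\mathbb{D}^i)+\operatorname{diag}(\tau_1^2,\dots,\tau_{|\mathbb{D}^i|}^2)$ is also consistent with the paper's setting, where the target variances are strictly positive and the squared exponential and Mat\'ern kernels are strictly positive definite on distinct design points.
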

\begin{proof}
See, for example, \textcite{Goldberg1997}.
\end{proof}

The previous theorem requires that the hyperparameters are determined in advance. The prior distribution of the Gaussian distribution and the sampling distribution  of the noisy observation specify the distribution of the observed data given the hyperparameters. To select the hyperparameters $\theta$, we maximize the \emph{marginal likelihood}\footnote{Maximizing the marginal likelihood is also referred to as empirical Bayes, evidence approximation, or type-II maximum likelihood (cf. \textcite{Rasmussen2005} or \textcite{Schulz2018}). This is a non-convex problem and typical methods may converge to local maxima. However, numerical experiments indicate that prediction based on the squared exponential function is robust with respect to the estimation of its hyperparameters (see \textcite{Chen2018}). Alternatively, one could place a hyperprior on $\theta$ at the expense of tractability. We refer to \textcite{Lalchand2019} for more details.} of the observed data $\hat{\mu}(\mathbb{D}^i)$ using
\begin{equation*}
\hat{M}(\mathbb{D}^i)\mid \theta\sim\mathcal{N}\left(m(\mathbb{D}^i),\Sigma(\mathbb{D}^i,\mathbb{D}^i)+\operatorname{diag}\left(\tau_1^2,\dots,\tau_{|\mathbb{D}^i|}^2\right)\right),
\end{equation*}
cf. equation \eqref{eq:jointNormal}. 

\subsection{Active Learning Framework}\label{sec:leaning-alg}

We propose an algorithm that combines GPR and the successive acquisition of new sets of points $\mathbb{D}^0\subseteq\mathbb{D}^1\subseteq\mathbb{D}^2 \subseteq\dots$. The goal is to approximate the set of acceptable designs $\mathcal{D}$. The general structure is described in Algorithm~\ref{alg:SEP}, which we will discuss in more detail in the following subsections. We would like to emphasize that each part of this algorithm is related to active learning approaches with stochastic kriging that have been used before (see, for example. \textcite{binois2018practical,lyu2021evaluating,gotovos2013active}); however, as far as we are aware, our particular combination of stochastic search and GPR is novel. Our estimation procedure consists of two major steps:
\begin{itemize}
\item \textbf{Phase 1: Initialize.} First, we create an initial set of points $\mathbb{D}^0$ and estimate values $\hat{\mu}_k$ with a target noise $(\tau^0)^2$ for each $k\in\mathbb{D}^0$. The hyperparameters of the GPR are estimated by maximizing the marginal likelihood.\footnote{The detailed procedure is described in Algorithm~\ref{alg:PEstInitial}.} We then compute our first GPR\footnote{The details are provided in Algorithm~ \ref{alg:PEstGPR}.} estimate $m^0\colon\mathbb{D}\to\mathbb{R}$ and $\sigma^0\colon\mathbb{D}\to[0,\infty)$ according to Theorem \ref{thm:GPR}.

\item \textbf{Phase 2: Loop.} Second, we repeat an active learning procedure to improve our GPR estimate.  We use an acquisition function to randomly construct $\mathbb{D}^i\setminus \mathbb{D}^{i-1}$. Function values at the new arguments are estimated\footnote{The GPR from each iteration yields a prior $\mathcal{N}(m^{i-1}(k),(\sigma^{i-1}(k))^2)$ which could be exploited in the subsequent iteration for sampling. This idea of Bayesian sampling is briefly described in Appendix \ref{sec:BayesianSampling}.} and used to improve the estimates of the posterior mean and variance of GPR (see again Algorithm \ref{alg:PEstGPR}). 

The algorithm either terminates after a maximum number of iterations, or an upper bound for the approximation error can be specified to control the quality of the approximation.\footnote{Such an upper bound is discussed in more detail in Section \ref{sec:learning-estimate-error}.}
\end{itemize}

The set estimate $\hat{\mathcal{D}}^i$ is a function of the point estimate of the mean $m^i$. Additionally, the measure of uncertainty $\sigma^i$ will be used to construct $\mathbb{D}^i\setminus\mathbb{D}^{i-1}$, i.e., selecting points to sample. It also provides information on the approximation errors associated with this procedure.

\begin{algorithm}[!htbp]
\caption{Active Learning Framework.}
\label{alg:SEP}
\begin{algorithmic}
\STATE{\textbf{Phase 1: Initialize}
    \begin{enumerate}
    \item Sample an initial data set $\mathbb{D}^0$ of $n_\mathrm{initial}$ points uniformly in $\mathbb{D}$
    \item Estimate values at points $k\in\mathbb{D}^0$ such that the variance is bounded by the target variance $(\tau^0)^2$
    \item Estimate the hyperparameters for GPR (which will be fixed hereafter)
    \item Compute the posterior mean and variances according to Theorem \ref{thm:GPR}
    \end{enumerate}}
\STATE{\textbf{Phase 2: Loop}
    \begin{enumerate}
    \item Build the acquisition function from the last posterior estimates. The acquisition encodes tradeoffs between the distance from the estimated boundary and the posterior uncertainty, when selecting additional points
    \item Sample $n_\mathrm{loop}$ new points according to the acquisition function via rejection sampling. This set is $\mathbb{D}^i \setminus \mathbb{D}^{i-1}$ 
    \item Simulate values at points $k \in \mathbb{D}^i\setminus \mathbb{D}^{i-1}$ such that the variance is bounded by the target variance $(\tau^i)^2$
    \item Use all simulated values (i.e., the simulations for $\mathbb{D}^i$) to compute the new posterior mean and variance by Bayesian updating of the original prior according to Theorem \ref{thm:GPR}
    \item Determine whether to stop 
    \end{enumerate}}
\end{algorithmic}
\end{algorithm}

\paragraph{Selecting Points to Sample.} The goal of selecting new points $\mathbb{D}^i\setminus\mathbb{D}^{i-1}$ after iteration $i-1$ is to improve the estimate of the superlevel set $\mathcal{D} = \{k \in \mathbb{D}\colon \mathbb{E}(u(Q_k)) \geq \gamma\}$. To examine $\mathbb{D}$, we first randomly select new points. Sampled points $k\in \mathbb{D}$ will only be used for further improvements in step $i$ in the GPR, if for some constant $c_1>1$ the inequality $c_1\cdot\tau^i  <  \sigma^{i-1} (k)$ holds, i.e., if Monte Carlo sampling can substantially decrease the uncertainty at that point. We use $c_1=5$ in our implementations of the algorithm.

Moreover, points that are close to the previously estimated boundary $m^{i-1}(k) = \gamma$ are preferred. This can be encoded by an \emph{acquisition function} $\mathcal{I}^i\colon \mathbb{D} \to [0,\infty)$ that seeks to capture the informative potential of estimating $\mathbb{E}(u(Q_k))$ at a new design parameter $k\in\mathbb{D}$. We use $\mathcal{I}^{i}(k) \colon= \Phi\left(- c^{i}_2\cdot |m^{i-1}(k) - \gamma|\right)$, $i=1,2,\dots$, where $\Phi$ is the standard normal CDF and $c^{i}_2>0$ is an increasing sequence.\footnote{Alternatively, one could use $\mathcal{I}^{i}(k) \colon= \Phi\left(- c_2\cdot \frac{|m^{i-1}(k) - \gamma|}{\sigma^{i-1}(k)}\right)$, $i=1,2,\dots$, with constant $c_2>0$. It is large when $k$ is close to the estimated boundary and when the posterior uncertainty is large. This acquisition function is comparable to that used in \textcite{lyu2021evaluating}; in that work, the acquisition function is optimized using a genetic algorithm to determine a single next point. Here, one would directly implement a stochastic sampling routine to generate $n_\text{loop} \geq 1$ new points to construct $\mathbb{D}^{i}\setminus\mathbb{D}^{i-1}$.}
Up to its normalizing constant, we treat $\mathcal{I}^{i}$ as a density in order to simulate new points in $\mathbb{D}$. 

In order to sample from this acquisition function, we employ rejection sampling (see, e.g., \textcite{Glasserman2003} for more details on the method) to sample from $\mathcal{I}^{i}$. We note that $\mathcal{I}^{i}(k)\leq 1/2$ for all $k\in\mathbb{D}$ by construction. Therefore by sampling points uniformly in $\mathbb{D}$ (i.e., with density $1/\mathrm{vol}(\mathbb{D})$), we can accept the point $k$ with probability $2\mathcal{I}^{i}(k)$ to recover samples from our acquisition function $\mathcal{I}^{i}$.\footnote{The upper bound on the likelihood ratio for the rejection sampling can be given by $\sup_{k \in \mathbb{D}} \mathcal{I}^{i}(k) / (1/\mathrm{vol}(\mathbb{D})) \leq \mathrm{vol}(\mathbb{D})/2$.} This algorithm is modified by first checking the inequality $c_1\cdot\tau^i  <  \sigma^{i-1} (k)$. The procedure is provided by Algorithm~\ref{alg:PEst}.\footnote{The implementation of Algorithm~\ref{alg:PEst} includes a maximal number of trials for the while loop. If no point is found that satisfies $c_1\cdot\tau^i  <  \sigma^{i-1} (k)$, the algorithm continues with the next iteration $i+1$.
Moreover, after the estimation of a function value, we check if its sample noise is small enough so that including the new data point is beneficial. For $c_3>0$ (we use $c_3=2$ in our simulations), we discard a new data point $(k,\hat{\mu}_k)$ if $\hat{\sigma}_k / \sqrt{n} \geq c_3\cdot \tau^i$.}

\begin{algorithm}[!htp]
\caption{Rejection Sampling.}
\label{alg:PEst}
\begin{algorithmic}
\FOR{$j=1,2,\dots,n_\mathrm{loop}$}
\STATE{Set $\mathrm{flag}=\mathrm{true}$.}
\WHILE{$\mathrm{flag}=\mathrm{true}$}
\STATE{Sample $\hat{U}_j\sim\mathrm{Unif}(\mathbb{D})$.}
\IF{$c_1\cdot\tau^i<\sigma^{i-1}(k)$}
\STATE{Sample $p\sim\mathrm{Unif}(0,1)$.}
\IF{$p<2\mathcal{I}^{i}(\hat{U}_j)$}
\STATE{Set $\mathrm{flag}=\mathrm{false}$.}
\ENDIF
\ENDIF
\ENDWHILE
\ENDFOR
\end{algorithmic}
\end{algorithm}

\subsection{Sandwich Principle and Bounds on the Approximation Error}\label{sec:learning-estimate-error}

In this section, we evaluate the approximation error of the estimate $\hat{\mathcal{D}}^i$ of the set $\mathcal{D}$. We construct inner and outer approximations $\hat{\mathcal{D}}^i_-$ and $\hat{\mathcal{D}}^i_+$ of $\mathcal{D}$ that \emph{sandwich} the true set. We find these inner and outer approximations by constructing lower and upper approximations $m^i_-,m^i_+\colon\mathbb{D}\to\mathbb{R}$ of $\mu\colon\mathbb{D}\to\mathbb{R}$, in the context of Gaussian process regression. We study approximation errors under the assumptions that are described in Theorem~\ref{thm:GPR}. The metric we use is given by the commonly studied Nikodym metric\footnote{There are other distance metrics between sets. We refer, for example, to \textcite{Cuevas2009} or \textcite{Brunel2018} for an overview. Another common metric is the Hausdorff metric, which has a more visual character. Here, the distance between two sets $A_1,A_2\subseteq \mathbb{D}$ is defined by $d_H\left(A_1,A_2\right)=\inf\{\varepsilon>0\mid A_1\subseteq \bigcup_{k\in A_2} B(k,\varepsilon)~,A_2\subseteq \bigcup_{k\in A_1}B(k,\varepsilon)\}$ where $B(k,\varepsilon)$ is the open ball of radius $\varepsilon$ centered in $k$.}
\begin{equation*}
d_N\left(\hat{\mathcal{D}}^i,\mathcal{D}\right)=\mathrm{vol}\left(\hat{\mathcal{D}}^i\Delta \mathcal{D}\right),
\end{equation*}\normalsize
where $\hat{\mathcal{D}}^i\Delta \mathcal{D}=\left(\hat{\mathcal{D}}^i\setminus \mathcal{D}\right)\cup \left(\mathcal{D}\setminus \hat{\mathcal{D}}^i\right)$ is the symmetric difference between $\hat{\mathcal{D}}^i$ and $\mathcal{D}$, and the volume $\mathrm{vol}(\cdot)$ refers to the $l$-dimensional Lebesgue measure. Given inner and outer approximations $\hat{\mathcal{D}}^i_-$ and $\hat{\mathcal{D}}^i_+$ that sandwich $\mathcal{D}$, we can upper bound the true error $d_N\left(\hat{\mathcal{D}}^i,\mathcal{D}\right)$ as follows.

\begin{lemma}[Sandwich Principle and Error Bound]\label{thm:errorBound}
Let $\hat{\mathcal{D}}^i_-$, $\hat{\mathcal{D}}^i$, and $\hat{\mathcal{D}}^i_+$ be estimators of $\mathcal{D}$ such that $\hat{\mathcal{D}}^i_-\subseteq \hat{\mathcal{D}}^i\subseteq \hat{\mathcal{D}}^i_+$ and $P(\hat{\mathcal{D}}^i_-\subseteq \mathcal{D} \subseteq \hat{\mathcal{D}}^i_+) \geq 1-\delta$ for $\delta \in (0,1)$. Then it holds\small
\begin{equation*}
P\left(d_N\left(\hat{\mathcal{D}}, \mathcal{D}\right) \leq \mathrm{vol}\left(\hat{\mathcal{D}}^i_+\setminus \hat{\mathcal{D}}^i_-\right)\right) \; \geq \; 1-\delta.
\end{equation*}\normalsize
\end{lemma}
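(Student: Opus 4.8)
The plan is to reduce the probabilistic assertion to a purely deterministic, set-theoretic inequality that holds on the ``good'' event $A := \{\hat{\mathcal{D}}^i_- \subseteq \mathcal{D} \subseteq \hat{\mathcal{D}}^i_+\}$, whose probability is at least $1-\delta$ by hypothesis. The key observation is that on $A$ both the estimator $\hat{\mathcal{D}}^i$ and the target $\mathcal{D}$ are trapped between the \emph{same} pair of sets: the chain $\hat{\mathcal{D}}^i_- \subseteq \hat{\mathcal{D}}^i \subseteq \hat{\mathcal{D}}^i_+$ holds deterministically by assumption, while on $A$ we additionally have $\hat{\mathcal{D}}^i_- \subseteq \mathcal{D} \subseteq \hat{\mathcal{D}}^i_+$. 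Since the bound to be proven is a deterministic inequality on $A$, it then suffices to intersect with $A$ and take probabilities.

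First I would isolate the elementary lemma driving the argument: if two sets $B_1$ and $B_2$ both satisfy $L \subseteq B_j \subseteq U$ for $j \in \{1,2\}$, then $B_1 \Delta B_2 \subseteq U \setminus L$. This is a one-line case analysis: any $x \in B_1 \setminus B_2$ lies in $U$ (because $B_1 \subseteq U$) but cannot lie in $L$ (because $L \subseteq B_2$ while $x \notin B_2$), so $x \in U \setminus L$; the case $x \in B_2 \setminus B_1$ is symmetric. Applying this with $B_1 = \hat{\mathcal{D}}^i$, $B_2 = \mathcal{D}$, $L = \hat{\mathcal{D}}^i_-$, and $U = \hat{\mathcal{D}}^i_+$ gives, on the event $A$, the inclusion $\hat{\mathcal{D}}^i \Delta \mathcal{D} \subseteq \hat{\mathcal{D}}^i_+ \setminus \hat{\mathcal{D}}^i_-$.

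To finish, I would invoke monotonicity of the Lebesgue measure $\mathrm{vol}(\cdot)$ to conclude that, pointwise on $A$, $d_N(\hat{\mathcal{D}}^i, \mathcal{D}) = \mathrm{vol}(\hat{\mathcal{D}}^i \Delta \mathcal{D}) \leq \mathrm{vol}(\hat{\mathcal{D}}^i_+ \setminus \hat{\mathcal{D}}^i_-)$. Hence the event $\{ d_N(\hat{\mathcal{D}}^i, \mathcal{D}) \leq \mathrm{vol}(\hat{\mathcal{D}}^i_+ \setminus \hat{\mathcal{D}}^i_-) \}$ contains $A$, and monotonicity of $P$ yields a probability of at least $P(A) \geq 1-\delta$, which is exactly the claim (here I read the $\hat{\mathcal{D}}$ in the displayed bound as the estimator $\hat{\mathcal{D}}^i$ appearing in the hypotheses).

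I do not expect any genuine obstacle: the entire content is the sandwich observation that $\hat{\mathcal{D}}^i$ and $\mathcal{D}$ share the same inner and outer bounds on $A$, after which the measure bound is automatic. The only point requiring a word of care is measurability, namely that $A$ is an event and that the sets $\hat{\mathcal{D}}^i_\pm$, $\hat{\mathcal{D}}^i$, and $\mathcal{D}$ are Lebesgue-measurable so that all volumes and probabilities are well defined; this is guaranteed because the approximations are superlevel sets of the (measurable) GPR mean and bound functions and $\mathbb{D}$ is bounded, making every volume finite and the stated bound meaningful.
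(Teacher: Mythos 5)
Your proof is correct and rests on the same sandwich idea as the paper's: on the event $\{\hat{\mathcal{D}}^i_-\subseteq\mathcal{D}\subseteq\hat{\mathcal{D}}^i_+\}$ both $\hat{\mathcal{D}}^i$ and $\mathcal{D}$ are trapped between the same inner and outer sets, so the symmetric difference is controlled by $\hat{\mathcal{D}}^i_+\setminus\hat{\mathcal{D}}^i_-$. The only (immaterial) difference is presentational: you establish the set inclusion $\hat{\mathcal{D}}^i\Delta\mathcal{D}\subseteq\hat{\mathcal{D}}^i_+\setminus\hat{\mathcal{D}}^i_-$ once and invoke monotonicity of $\mathrm{vol}$, whereas the paper splits the symmetric difference into its two disjoint pieces, bounds each by monotonicity, and recombines via additivity.
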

\begin{proof}
See Section \ref{proof:thm:errorBound}.
\end{proof}

We consider two alternatives to define the lower and upper approximations $m^i_-,m^i_+\colon\mathbb{D}\to\mathbb{R}$ and apply them in the sandwich principle Lemma \ref{thm:errorBound}. These choices are associated to uniform and pointwise bounds, respectively.\footnote{Appendix \ref{sec:computeErrorBounds} discusses how to compute the proposed upper bounds. Appendix \ref{sec:interpret-pointwise} presents additional ideas for robustifying the pointwise bound.}

\paragraph{Uniform Bounds.}  
Functions $m^i_-,m^i_+\colon\mathbb{D}\to\mathbb{R}$ are called \emph{uniform bounds} for some $\delta\in(0,1)$, if \small
\begin{equation}\label{eq:credBand}
P\left(\forall~k\in\mathbb{D}\colon m^i_-(k)\leq M(k) \leq m^i_+(k)\mid \hat{M}(\mathbb{D}^i)=\hat{\mu}(\mathbb{D}^i)\right) \; \geq \;  1-\delta.
\end{equation}\normalsize
The region in between is referred to as a \emph{credible band} for the unknown function $\mu\colon\mathbb{D}\to\mathbb{R}$. 

\textcite{Lederer2019}\footnote{See, in particular, Theorem 3.1 in \textcite{Lederer2019}. These results are extended in \textcite{Lederer2021}.} discuss the derivation of such uniform bounds for Gaussian process regression. They impose Lipschitz conditions on the true function $\mu\colon\mathbb{D}\to\mathbb{R}$ and the covariance function $c\colon\mathbb{D}\times\mathbb{D}\to\mathbb{R}$. The uniform bounds can then be constructed by applying a bound on values at sampled points and bounding the values in between using the Lipschitz assumptions.\footnote{For $s\in\mathbb{R}_+$, the bounds can be defined by
\begin{equation*}
m^i_\pm(k) =m^i(k)\pm \sqrt{\alpha(s)} \sigma^i(k) \pm \beta(s),\quad k\in\mathbb{D},
\end{equation*}
where $\alpha(s)=2\log\left(\frac{M(s,\mathbb{D})}{\delta}\right)$ and $\beta(s)=(L_{m^i}+L_\mu)s+\sqrt{\alpha(s)}\omega_{\sigma^i}(s)$ for Lipschitz constants $L_{m^i},L_\mu\geq 0$, $\omega_{\sigma^i}(\cdot)$ being the modulus of continuity of $\sigma^i$, and $M(s,\mathbb{D})$ the $s$-convering number of $\mathbb{D}$. We refer to \textcite{Lederer2019} for details.} While a Lipschitz constant on $\mu$ is often unknown, a Lipschitz condition is satisfied by commonly used covariance functions.  As an immediate consequence, we can relate such uniform bounds to probabilistic bounds of the set $\mathcal{D}$, so that the sandwich principle can be applied. We set $\mathfrak{D}=\{k\in\mathbb{D}\colon M(k)\geq \gamma\}$.

\begin{corollary}[Credible Band for the Acceptable Design and Error Bound]\label{cor:error}
Define the estimators $\hat{\mathcal{D}}^i=\{k\in\mathbb{D}\colon m^i(k)\geq \gamma\}$, $\hat{\mathcal{D}}^i_-=\{k\in\mathbb{D}\colon m^i_-(k)\geq \gamma\}$, and $\hat{\mathcal{D}}^i_+=\{k\in\mathbb{D}\colon m^i_+(k)\geq \gamma\}$. If  condition \eqref{eq:credBand} is satisfied, then \small
\begin{align*}
P(\hat{\mathcal{D}}^i_-\subseteq \mathfrak{D}\subseteq \hat{\mathcal{D}}^i_+\mid \hat{M}(\mathbb{D}^i)=\hat{\mu}(\mathbb{D}^i))\geq 1-\delta,\\
P(d_N(\mathfrak{D}, \hat{\mathcal{D}}^i)\leq \mathrm{vol}(\hat{\mathcal{D}}^i_+\Delta \hat{\mathcal{D}}^i_-)\mid \hat{M}(\mathbb{D}^i)=\hat{\mu}(\mathbb{D}^i))\geq 1-\delta
\end{align*}\normalsize
where $\mathrm{vol}(\hat{\mathcal{D}}^i_+\Delta \hat{\mathcal{D}}^i_-)=\mathrm{vol}(\hat{\mathcal{D}}^i_+\setminus \hat{\mathcal{D}}^i_-)=\mathrm{vol}\left\{k\in\mathbb{D}\colon m^i_+(k)\geq \gamma > m^i_-(k)\right\}$.
\end{corollary}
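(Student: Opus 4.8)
The plan is to reduce the corollary to the purely set-theoretic Sandwich Principle of Lemma~\ref{thm:errorBound}, applied conditionally on the data $\hat{M}(\mathbb{D}^i)=\hat{\mu}(\mathbb{D}^i)$. The starting observation is that the three functions are pointwise ordered: by construction $m^i_-(k)\leq m^i(k)\leq m^i_+(k)$ for every $k\in\mathbb{D}$, since the bands are obtained from $m^i$ by adding and subtracting the nonnegative quantity $\sqrt{\alpha(s)}\,\sigma^i(k)+\beta(s)$. Passing to superlevel sets at level $\gamma$ is monotone, so this ordering immediately yields the nesting $\hat{\mathcal{D}}^i_-\subseteq\hat{\mathcal{D}}^i\subseteq\hat{\mathcal{D}}^i_+$, which supplies one of the two hypotheses demanded by Lemma~\ref{thm:errorBound}.

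Next I would verify the probabilistic sandwich hypothesis by a deterministic argument on the event $A=\{\forall k\in\mathbb{D}\colon m^i_-(k)\leq M(k)\leq m^i_+(k)\}$, which by \eqref{eq:credBand} has conditional probability at least $1-\delta$. On $A$: if $k\in\hat{\mathcal{D}}^i_-$ then $M(k)\geq m^i_-(k)\geq\gamma$, so $k\in\mathfrak{D}$; and if $k\in\mathfrak{D}$ then $m^i_+(k)\geq M(k)\geq\gamma$, so $k\in\hat{\mathcal{D}}^i_+$. Hence $A\subseteq\{\hat{\mathcal{D}}^i_-\subseteq\mathfrak{D}\subseteq\hat{\mathcal{D}}^i_+\}$, which gives the first displayed inequality. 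It is important here to keep track of the roles: given the conditioning, the estimators $\hat{\mathcal{D}}^i,\hat{\mathcal{D}}^i_\pm$ are deterministic functions of the observed data, while $\mathfrak{D}$ is the random posterior target set, so $\mathfrak{D}$ plays exactly the part of the true set $\mathcal{D}$ in Lemma~\ref{thm:errorBound}.

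With both hypotheses in place, the second displayed inequality is a direct invocation of Lemma~\ref{thm:errorBound} in its conditional form, yielding $P(d_N(\mathfrak{D},\hat{\mathcal{D}}^i)\leq\mathrm{vol}(\hat{\mathcal{D}}^i_+\setminus\hat{\mathcal{D}}^i_-)\mid\hat{M}(\mathbb{D}^i)=\hat{\mu}(\mathbb{D}^i))\geq 1-\delta$. Finally, the identification of the error volume is elementary: the nesting $\hat{\mathcal{D}}^i_-\subseteq\hat{\mathcal{D}}^i_+$ collapses the symmetric difference to $\hat{\mathcal{D}}^i_+\setminus\hat{\mathcal{D}}^i_-$, and writing out the two superlevel sets gives $\hat{\mathcal{D}}^i_+\setminus\hat{\mathcal{D}}^i_-=\{k\in\mathbb{D}\colon m^i_+(k)\geq\gamma>m^i_-(k)\}$.

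I do not anticipate a genuine obstacle: the corollary is essentially a bookkeeping translation of the uniform credible band \eqref{eq:credBand} into statements about superlevel sets, followed by the already-proved sandwich lemma. The only points demanding care are (i) justifying the ordering $m^i_-\leq m^i\leq m^i_+$ so that the sets nest, and (ii) being scrupulous about the conditioning, under which the estimators are deterministic and the sole randomness resides in the posterior process $M$, hence in $\mathfrak{D}$.
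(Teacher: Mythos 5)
Your proposal is correct and follows essentially the same route as the paper's own proof: establish the pointwise ordering $m^i_-\leq m^i\leq m^i_+$ to get the nesting of the superlevel sets, translate the uniform credible band \eqref{eq:credBand} into the probabilistic sandwich $\hat{\mathcal{D}}^i_-\subseteq\mathfrak{D}\subseteq\hat{\mathcal{D}}^i_+$, invoke Lemma~\ref{thm:errorBound}, and collapse the symmetric difference using $\hat{\mathcal{D}}^i_-\subseteq\hat{\mathcal{D}}^i_+$. Your explicit argument on the event $A$ and your remark on where the ordering of the three functions comes from merely spell out details the paper leaves implicit.
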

\begin{proof}
See Section \ref{proof:cor:error}.
\end{proof}

Controlling the approximation error uniformly with high probability typically results in a large set $\hat{\mathcal{D}}^i_+\setminus \hat{\mathcal{D}}^i_-$ used to locate the boundary of $\mathfrak{D}$, which is not desirable. Moreover, uniform bounds require more prior information on the true function $\mu\colon\mathbb{D}\to\mathbb{R}$, i.e., the Lipschitz constant. These problems do not occur with pointwise bounds.

\paragraph{Pointwise Bounds.} In practice, it is often sufficient to assess the quality of a single candidate design $k$. This issue can be approached as follows.
\begin{lemma}[Pointwise Approximation Error]\label{lem:pointwiseApprox}
Let $\delta\in(0,1)$. It holds \small
\begin{equation*}
\forall~k\in\mathbb{D}\colon P\left(|M(k)-m^i(k)|\leq \Phi^{-1}\left(1-\frac{\delta}{2}\right)\sigma^i(k) \quad \bigg|  \quad \hat{M}(\mathbb{D}^i)=\hat{\mu}(\mathbb{D}^i)\right) \;\geq \; 1-\delta
\end{equation*} \normalsize
where $\Phi^{-1}$ denotes the inverse CDF of the standard normal distribution.
\end{lemma}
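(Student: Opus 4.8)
The plan is to reduce the statement to a one-dimensional fact about the standard normal distribution, using the posterior characterization already established in Theorem~\ref{thm:GPR}. First I would fix $k\in\mathbb{D}$ and invoke that theorem: under the hypotheses stated there, the conditional law of the process is $(M\mid \hat{M}(\mathbb{D}^i)=\hat{\mu}(\mathbb{D}^i))\sim\mathcal{GP}(m^i,c^i)$. Since a Gaussian process has normal finite-dimensional marginals by definition, the conditional law of the single evaluation $M(k)$ is the one-dimensional marginal $\mathcal{N}\!\left(m^i(k),c^i(k,k)\right)=\mathcal{N}\!\left(m^i(k),(\sigma^i(k))^2\right)$, recalling that $\sigma^i(k)=\sqrt{c^i(k,k)}$. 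This is the only place where the machinery of GPR enters; everything afterwards is elementary.

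Next I would split into two cases according to whether the posterior standard deviation vanishes. If $\sigma^i(k)=0$, the marginal is the Dirac mass at $m^i(k)$, so $M(k)=m^i(k)$ almost surely under the conditioning, and the event $\{|M(k)-m^i(k)|\leq \Phi^{-1}(1-\tfrac{\delta}{2})\,\sigma^i(k)\}$ reduces to $\{0\leq 0\}$, which has conditional probability $1\geq 1-\delta$. If $\sigma^i(k)>0$, I set $Z:=(M(k)-m^i(k))/\sigma^i(k)$; conditionally on $\hat{M}(\mathbb{D}^i)=\hat{\mu}(\mathbb{D}^i)$ we have $Z\sim\mathcal{N}(0,1)$, and dividing the inequality inside the event by $\sigma^i(k)>0$ identifies it with $\{|Z|\leq \Phi^{-1}(1-\tfrac{\delta}{2})\}$.

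Finally I would evaluate this probability directly from the symmetry of the standard normal law and the definition of its quantile function. Writing $z_\delta:=\Phi^{-1}(1-\tfrac{\delta}{2})$ and using $\Phi(-z_\delta)=1-\Phi(z_\delta)=\tfrac{\delta}{2}$, one obtains
\begin{equation*}
P\!\left(|Z|\leq z_\delta\right)=\Phi(z_\delta)-\Phi(-z_\delta)=\left(1-\tfrac{\delta}{2}\right)-\tfrac{\delta}{2}=1-\delta.
\end{equation*}
Thus in the nondegenerate case the stated probability equals $1-\delta$ exactly, and combining the two cases yields the claimed bound for every $k\in\mathbb{D}$.

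I do not anticipate a genuine obstacle here: the entire content of the lemma is carried by Theorem~\ref{thm:GPR}, and once the Gaussian posterior marginal is available the remainder is the standard two-sided normal quantile identity. The only point requiring mild care is the degenerate variance $\sigma^i(k)=0$, where the marginal collapses to a point mass; isolating it as a separate case avoids dividing by zero and confirms the inequality still holds (trivially, with probability one). It is worth remarking that the bound is in fact tight — equality with $1-\delta$ holds whenever $\sigma^i(k)>0$ — so the ``$\geq$'' in the statement cannot be improved in general.
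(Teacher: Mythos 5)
Your proposal is correct and follows essentially the same route as the paper: invoke Theorem~\ref{thm:GPR} to obtain the posterior marginal $\mathcal{N}\left(m^i(k),(\sigma^i(k))^2\right)$ and then apply the standard two-sided normal quantile identity. The explicit treatment of the degenerate case $\sigma^i(k)=0$ and the remark on tightness are welcome refinements but do not change the argument.
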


\begin{proof}
This is a standard argument, see Section \ref{proof:lem:pointwiseApprox} for a proof.
\end{proof}

This pointwise credible band can be attained with the functions $\bar{m}^i_-(k) := m^i(k) + \Phi^{-1}\left(\frac{\delta}{2}\right)\sigma^i(k)$ and $\bar{m}^i_+(k) := m^i(k) + \Phi^{-1}\left(1 - \frac{\delta}{2}\right)\sigma^i(k)$ for the GPR estimates $m^i,\sigma^i$, i.e.,\small
\begin{equation*}
P\left(\bar{m}^i_-(k)\leq M(k) \leq \bar{m}^i_+(k) \mid \hat{M}(\mathbb{D}^i)=\hat{\mu}(\mathbb{D}^i)\right) \; \geq \; 1-\delta \quad \forall~k \in \mathbb{D}.
\end{equation*}\normalsize

To set up a sandwich principle, we define the inner and outer approximations 
\begin{equation*}
\hat{\mathcal{D}}^i_\pm=\{k\in\mathbb{D}\colon \bar{m}^i_\pm(k)\geq \gamma\}
\end{equation*}\normalsize
and evaluate the Nikodym metric $d_N\left(\hat{\mathcal{D}}^i_-,\hat{\mathcal{D}}^i_+\right)$. For $k\in \hat{\mathcal{D}}^i_+\setminus \hat{\mathcal{D}}^i_-$, we have that $\gamma \in (\bar{m}^i_- (k), \bar{m}^i_+ (k)]$, thus, due to Lemma \ref{lem:pointwiseApprox} and the choice of $\bar{m}^i_-(k)$ and $\bar{m}^i_+(k)$,\small
$$P\left(| M(k) - \gamma| < 2\cdot  \Phi^{-1}\left(1 - \frac{\delta}{2}\right)\sigma^i(k) \quad  \bigg|  \quad\hat{M}(\mathbb{D}^i)=\hat{\mu}(\mathbb{D}^i)\right) \; \geq \;  1-\delta .$$ \normalsize

\section{Case Studies}\label{sec:case-studies}

We study two traffic networks, one with two signalized intersections and another one with variable capacities of highways, speed limits and bottlenecks due to roundabouts. Appendix \ref{app:TrafficSimulation} provides a pseudocode for the implementation of our traffic model for general networks. We investigate acceptable designs based on demand proportional flows. For selected design parameters, we compare these with the cooperative driving benchmark model. 

\subsection{Urban Network}\label{sec:casestudyI}

Traffic signals are essential control elements in modern urban networks. Their main function is to temporarily block certain traffic flows so that competing traffic flows can pass safely. Efficient placement and design are nontrivial problems; issues include the choice of traffic flow to be interrupted and the duration of the interruption. Complex interdependencies arise in networks, for example, when there is more than one traffic light in a network.

\subsubsection{Set-Up}

\begin{figure}[!htpb]
\centering
\includegraphics[scale=0.7]{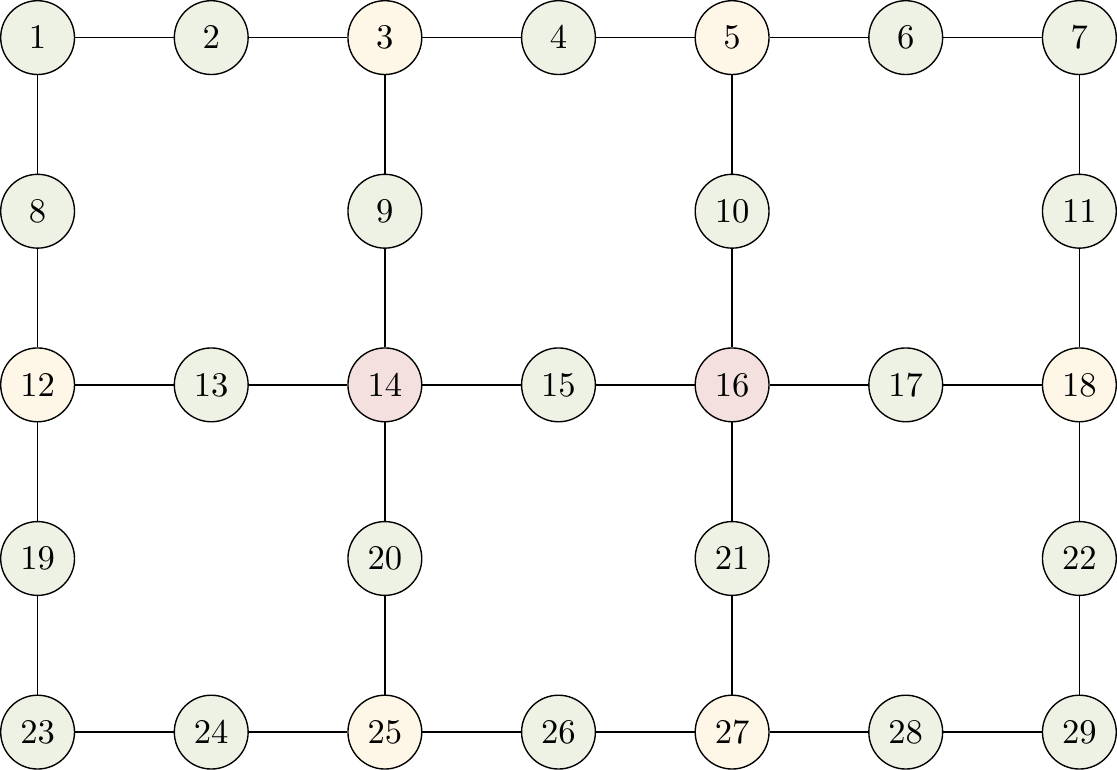}
\caption{Signalized urban network.}
\label{fig:network1}
\end{figure}

\paragraph{Network.} We consider a stylized urban network as depicted in Figure \ref{fig:network1}. It consists of 29 nodes $V=\{1,\dots,29\}$, which are interconnected in a grid-like manner. The network features three types of nodes (cf.~Section \ref{sec:ex_tn}):
\begin{itemize}
\item \emph{Signalized Intersection.}  We consider two signalized intersections $\mathcal{R}=\{14,16\}$, marked in red. Both are connected to four adjacent intersections and allow vehicles to travel horizontally or vertically through the network while blocking vehicles in orthogonal directions. We study the duration of their green time $T^g$ and the displacement of the two green phases $T^s$ of nodes $14$ and $16$ as important design parameters. Details on the implementation of the traffic lights are given in Section \ref{sec:trafficLightImplementation}.
\item \emph{Unsignalized Intersection.} On the periphery are six unsignalized intersections $\mathcal{Y}=\{3,5,12,15,25,27\}$, highlighted in yellow. Each of these intersections connects traffic flows from three adjacent nodes.
\item \emph{Bidirectional Road.} The remaining nodes of the network are simple bidirectional roads $\mathcal{G}=V\setminus(\mathcal{R}\cup\mathcal{Y})$ which are highlighted in green.
\end{itemize}

We assume that the signalized and unsignalized intersections have the same diameter of $\unit[40]{m}$ while the green connecting bidirectional roads are of length $\unit[120]{m}$. Vehicles move at a free-flow speed of $\unit[50]{km/h}$. We assume symmetric and constant turning fractions.\footnote{The turning fractions are $f_{(x,u,v)\to w}(t+1)=1$ for $v\in\mathcal{G}$ $w\in\cO(v),~w\neq u\in\cI(v),~u\neq x\in\cI(u)$ and, in analogy, $f_{(x,u,v)\to w}(t+1)=1/2$ for $v\in\mathcal{Y}$ and $f_{(x,u,v)\to w}(t+1)=1/3$ for $v\in\mathcal{R}$.} The connecting bidirectional roads $\mathcal{G}$ are of length $\unit[120]{m}$. We normalize traffic densities, i.e., we let $l_v=1$ for $v\in \mathcal{R}\cup\mathcal{Y}$ and set $l_v=3$ for $v\in\mathcal{G}$. As an initial configuration, we set $\rho_{(\cdot,v,\cdot)}(0)=5$ for $v\in\mathcal{G}$ and $\rho_{(\cdot,v,\cdot)}(0)=1$ for $v\in\mathcal{Y}\cup\mathcal{R}$. At $\unit[50]{km/h}$, intersections can be passed in $\unit[2.88]{s}$, i.e., the interval $[t,t+1]$ corresponds to $t^\mathrm{real}=\unit[2.88]{s}$. We simulate $T=1,250$ time steps corresponding to $\unit[1]{h}$ of traffic. The remaining parameters of the modules are listed in Table \ref{table:parameterNetwork1}.

\begin{table}[!htbp]
\centering 

\begin{threeparttable}
\caption[]{Parameter choice.} 
\label{table:parameterNetwork1}
	\begin{tabular}{rcccccc}\toprule 
		
&		$s^\mathrm{max}_v$ & $\rho^\mathrm{max}_v$& $a_v$ & $b_v$ & $c_v$ & $\zeta_v$ \\ \toprule 
$v\in\mathcal{R}$&		$5$ & $16$ &  $1$ & $1$ & $1$  & $1/10$\\
$v\in\mathcal{Y}$&		$5$ & $10$ &  $1$ & $1$ & $1$  & $1/10$\\
$v\in\mathcal{G}$&		$5$ & $30$ &  $1$ & $1$ & $1$  & -\\
\bottomrule 
\end{tabular} 

\end{threeparttable} 

\end{table}

\paragraph{Random Environment.} We place the traffic network described above in a random environment by introducing random sources and sinks. Specifically, we implement two stochastic processes to model $q^\mathrm{net}_{(6,7,11)}(t)$ and $q^\mathrm{net}_{(24,23,19)}(t)$ for $t=1,\dots,T$. As building blocks, we take two autoregressive models with a given dependence structure. The details are described in Section \ref{sec:highwayNetworkDetailsA}. The key idea is that both time series models include white noise $\varepsilon_{(6,7,11)}(t+1)$ and $\varepsilon_{(24,23,19)}(t+1)$. For each time step, these are normal random variables centered around $0$ with standard deviations $\sigma_{(6,7,11)}\geq 0$  and  $\sigma_{(24,23,19)}\geq 0$. We assume a particular dependence structure on $\varepsilon_{(6,7,11)}(t+1)$ and $\varepsilon_{(24,23,19)}(t+1)$ modeled by the Frank copula\footnote{A copula is a multivariate distribution function with uniform marginals. It captures dependence among the marginals of a random vector by virtue of Sklar's theorem. We refer to \textcite{Mcneil2015} for more details.}
\begin{equation*}
C^r(u_1,u_2)=-\frac{1}{r}\log\left(1+\frac{(e^{-ru_1}-1)(e^{-ru_2}-1)}{e^{-r}-1}\right),\quad u_1,u_2\in(0,1),
\end{equation*}
which is parametrized by $r\in\mathbb{R}$. The Frank copula interpolates from full countermonotonicity for $r\to -\infty$ (i.e.,  $\varepsilon_{(6,7,11)}(t+1)$ is large when $\varepsilon_{(24,23,19)}(t+1)$ is small and vice versa) to full comonotonicity for $r\to\infty$ (i.e.,  $\varepsilon_{(6,7,11)}(t+1)$ and $\varepsilon_{(24,23,19)}(t+1)$ move in the same direction). In the limit $r\to 0$, $\varepsilon_{(6,7,11)}(t+1)$ and $\varepsilon_{(24,23,19)}(t+1)$ are stochstically independent.

\subsubsection{Acceptable Configurations and Design}\label{sec:caseStudy1accept}

\paragraph{Construction of Acceptance Sets.} To assess the performance of traffic systems, we employ the normative approach of Section \ref{sec:pfad} and compare acceptable designs $\mathcal{D}_{u,\gamma}=\left\{ k\in\mathbb{D}\colon \mathbb{E}(u(Q_k)) \geq \gamma\right\}$ for different utility functions $u$ and levels $\gamma$. Specifically, we consider expectation, polynomial utility, expectile utility and square root utility.\footnote{While the square-root utility is a standard utility function (increasing and concave), polynomial utility and expectile utility, with appropriately chosen constants $c_p,c_e\in\mathbb{R}$, place special emphasis on downside risk. We set $c_p=2\mathbb{E}(Q_{k^*})$ and $c_e=\mathbb{E}(Q_{k^*})$ for a ``good'' design parameter $k^*\in\mathbb{D}$. Polynomial utility evaluates only flows smaller than $c_p$, and expectile utility evaluates random fluctuations around $c_e$ asymmetrically, with a stronger penalty for outcomes below this value.}

The comparison across different utility functions is facilitated by calibrating the thresholds to benchmark flow distributions $\tilde{Q}^A$, $\tilde{Q}^B$, $\tilde{Q}^C$. For all utility functions, we choose the corresponding threshold levels as  $\gamma_u^A=\mathbb{E}(u(\tilde{Q}^A))$, $\gamma_u^B=\mathbb{E}(u(\tilde{Q}^B))$, and $\gamma_u^C=\mathbb{E}(u(\tilde{Q}^C))$.\footnote{Specifically, let $X=X(\beta)\sim \mathrm{Beta}(\beta,\beta)$ be a beta distribution with mean $1/2$ and standard deviation $\sigma(X)=1/\sqrt{8\beta+4}$. We compute $\beta^A$, $\beta^B$ and $\beta^C$ such that $\sigma(X(\beta^A))=0.1$, $\sigma(X(\beta^B))=0.15$ and $\sigma(X(\beta^C))=0.2$. We obtain benchmark flow distributions $\tilde{Q}^A$, $\tilde{Q}^B$, $\tilde{Q}^C$ by setting $\tilde{Q}^A=e^A\cdot 2X(\beta^A)$, $\tilde{Q}^B=e^B\cdot 2X(\beta^B)$, and $\tilde{Q}^C=e^C\cdot 2X(\beta^C)$ for chosen benchmark expectations $e^A>e^B>e^C>0$.  Numerical evaluation yields the corresponding thresholds $\gamma^A=\mathbb{E}(u(\tilde{Q}^A))$, $\gamma^B=\mathbb{E}(u(\tilde{Q}^B))$, and $\gamma^C=\mathbb{E}(u(\tilde{Q}^C))$ for the different utility functions. Beta distributions were chosen as benchmarks because they are simple two-parametric distributions on compact intervals that generalize the uniform distribution; they are uniquely specified by their mean and variance.}

\paragraph{Simulation Set-Up.}

We consider five design parameters that characterize traffic models, $k=\left(r,\sigma_{(6,7,11)},\sigma_{(24,23,19)},T^g,T^s\right)$:
\begin{itemize}
\item \emph{Sources and Sinks.} We vary the dependence structure of the autoregressive models determining the source and sink flows, i.e., the dependence parameter $r$ and the respective standard deviations of the noise terms $\sigma_{(6,7,11)}$ and $\sigma_{(24,23,19)}$.
\item \emph{Signal Control.} We vary the duration of the green phases of the two traffic lights $T^g$ and the displacement of the green phases $T^s$.
\end{itemize} 
Performance is characterized by average network traffic flow defined by
\begin{equation*}
Q=Q_k=\frac{1}{T}\sum_{t=0}^{T-1}\sum_{v\in V}\sum_{u\in\cI(v)}\sum_{x\in\cI(u)} q_{(x,u, v)}^\mathrm{out}(t+1)
\end{equation*}
for $k\in \mathbb{D}: =[-50,50]\times [0,0.025] \times  [0,0.025]\times [0,100]\times[0,100]$. The network flow is simulated\footnote{Since this is a discrete-time model, this property must be respected for each simulation run by $T^g, T^s$. This issue is addressed by independently sampling discrete numbers for each simulation run from $\{\lfloor T^g \rfloor, \lceil T^g \rceil \}$ and $\{\lfloor T^s \rfloor, \lceil T^s \rceil\}$ such that their respective expected value is $T^g$ or $T^s$.} and evaluated by our stochastic search algorithm.\footnote{For the level set estimation, we apply the following computational budget: We consider $8$ iterations of our algorithm, where $n_\mathrm{initial}=150$ points are sampled uniformly in the initial phase and $n_\mathrm{loop}=50$ are sampled in the following $7$ iterations according to the acquisition function. We define target noises as $\{5\%,~10\%,~8\%,~6\%,~5\%,~4\%,~3\%,~2\%\}\cdot (\gamma^C-\gamma^A)$ for the respective utility functions. We consider at least $n_\mathrm{min}=20$ independent simulations and set $n_\mathrm{max}=500,~150,~200,~300,~400,~650,~1200,~3000$.\newline In our case studies, we find that a design parameter $k^*$ for the first network is good if $\mathbb{E}(Q_{k^*})=60$; therefore, we set $e^A=60$, $e^B=55$, $e^C=50$ and use these to calibrate $\gamma^A$, $\gamma^B$, and $\gamma^C$ as described above.}    

\subsubsection{Results}

\paragraph{Impact of Sources and Sinks.} 
Fixing $T^g=10$, $T^s=0$, we vary the parameters $r,\sigma_{(6,7,11)},\sigma_{(24,23,19)}$ under the condition $\sigma_{(6,7,11)}=\sigma_{(24,23,19)}$. The impact of the dependence structure on performance is small. Increasing noise decreases the system performance considerably.\footnote{This is shown in Figure \ref{fig:network1AcceptableDependence} and Figure \ref{fig:network1AcceptableNoise} in the appendix.}

\paragraph{Acceptable Traffic Lights Design.} Second, we fix the parameters of the random environment as $r=2.5$ and $\sigma_{(6,7,11)}=\sigma_{(24,23,19)}=0.01$ and examine the acceptable designs of the two traffic lights. Figure \ref{fig:network1AcceptableLightsMATERN} shows acceptable configurations of green time duration $T^g$ and shift $T^s$. The resulting quantities are nontrivial. 

$\mathbb{E}(u(Q))$, as a function of $T^g$ and $T^s$, is concave in $T^g$ -- first increasing and then decreasing, since too short green times lead to low traffic flow and are unacceptable, and the same applies to values that are too high; moreover, it is periodic in $T^s$. The simulations show that for longer green times $T^g$ (i.e., $T^g\geq 40$), acceptable designs can also be found on a diagonal in the $(T^g, T^s)$ plane. The comparison across different utility functions indicates qualitatively similar behavior. However, different normative assessments of risk are reflected in the different sizes of the domains that are preferred over the three benchmark levels.

\begin{figure}[!htbp]

\begin{minipage}{\textwidth}
\centering

\includegraphics[scale=0.4,trim={4.5cm 8cm 5cm 8cm},clip]{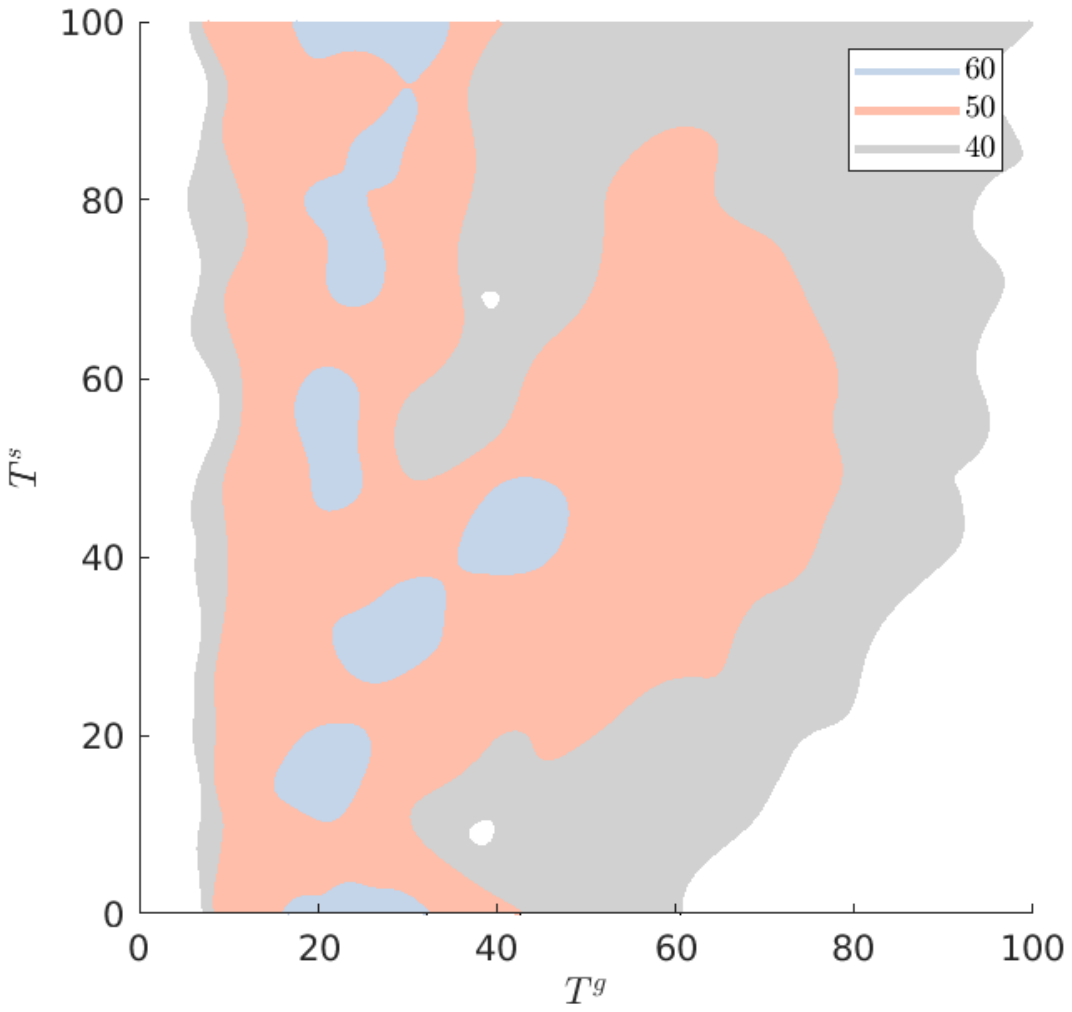}

\subcaption{$u(x)=x$}

\end{minipage}

\begin{minipage}{0.3\textwidth}
\centering

\includegraphics[scale=0.35,trim={4.5cm 8cm 5cm 8cm},clip]{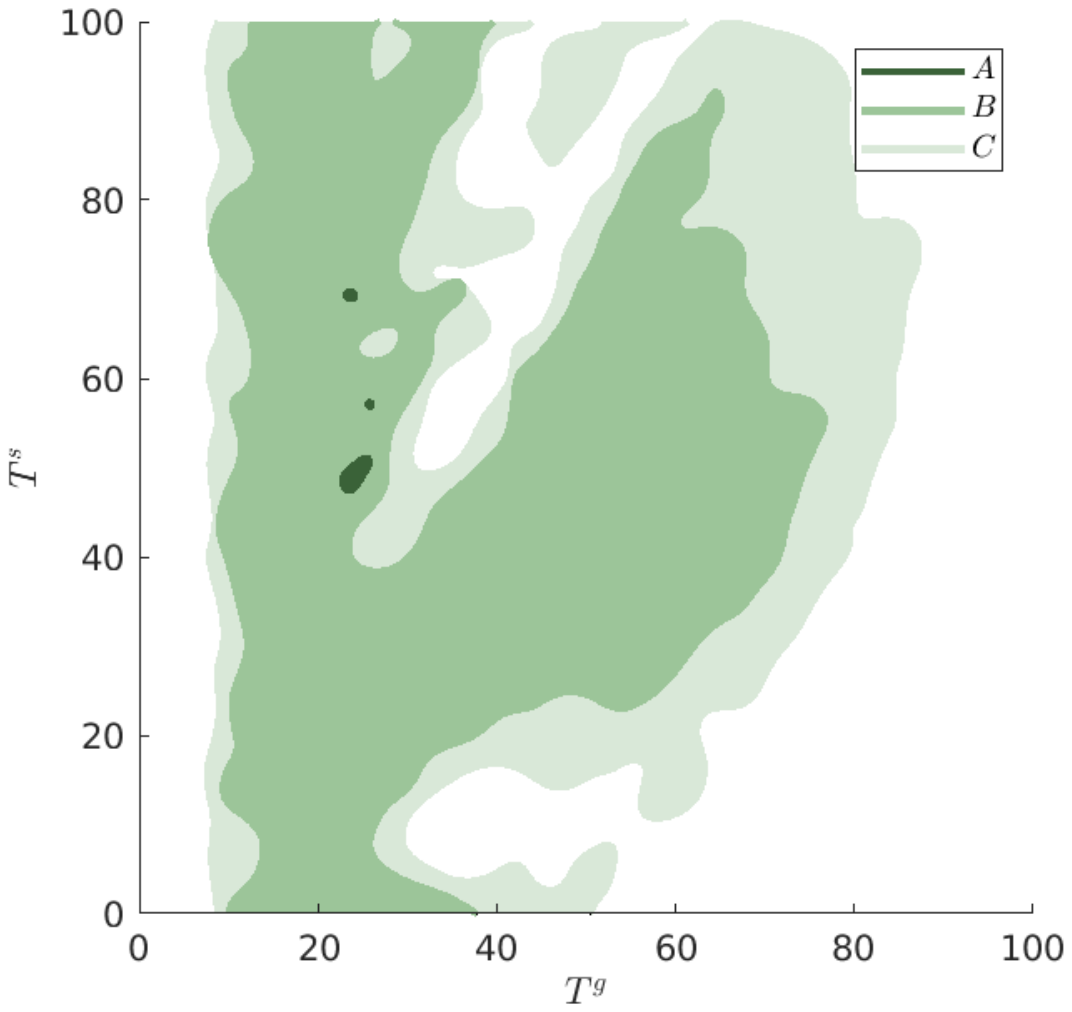}

\subcaption{$u(x)=0.1(x-60)_+ - 0.9(x-60)_-$ }

\end{minipage}\hspace{0.5cm}\begin{minipage}{0.3\textwidth}
\centering

\includegraphics[scale=0.35,trim={4.5cm 8cm 5cm 8cm},clip]{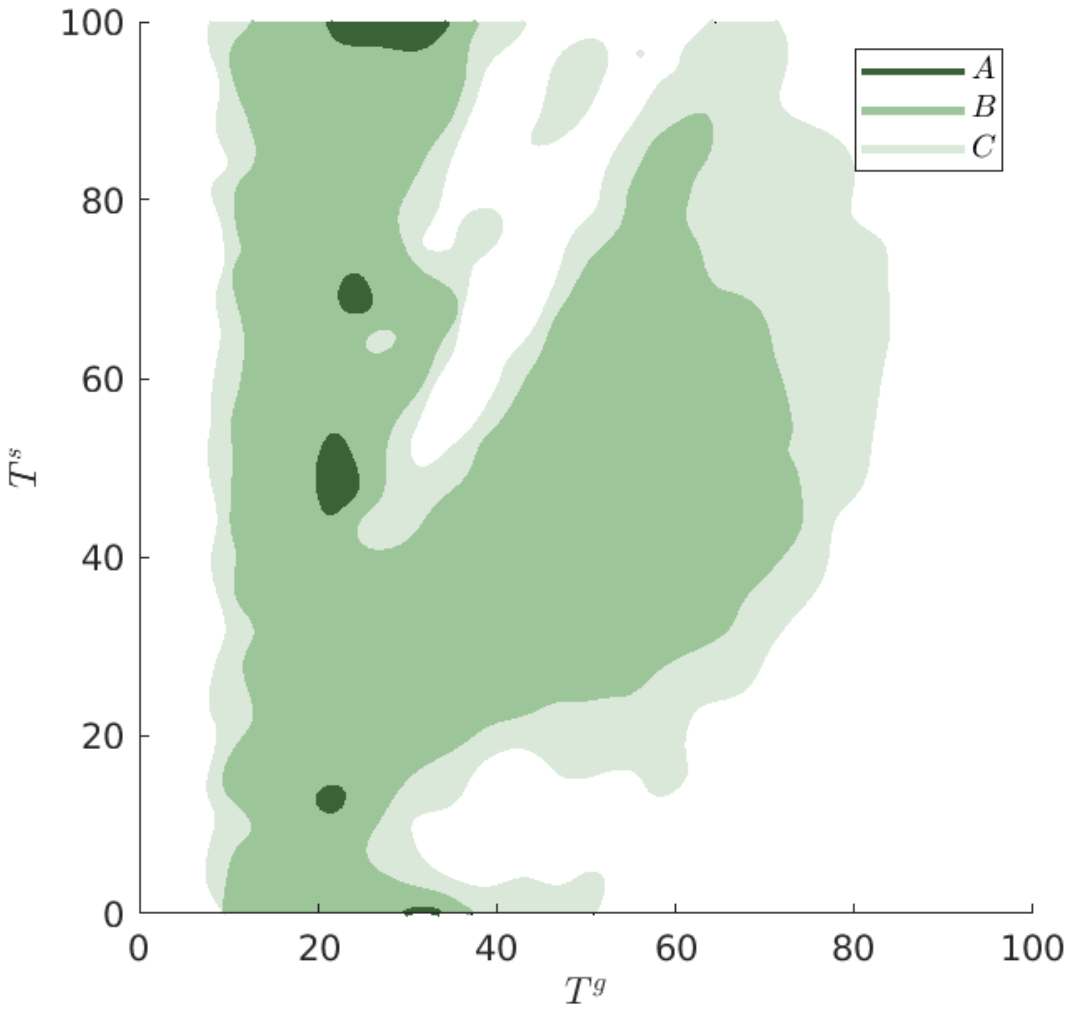}

\subcaption{$u(x)=0.2(x-60)_+ - 0.8(x-60)_-$}

\end{minipage}\hspace{0.5cm}\begin{minipage}{0.3\textwidth}
\centering

\includegraphics[scale=0.35,trim={4.5cm 8cm 5cm 8cm},clip]{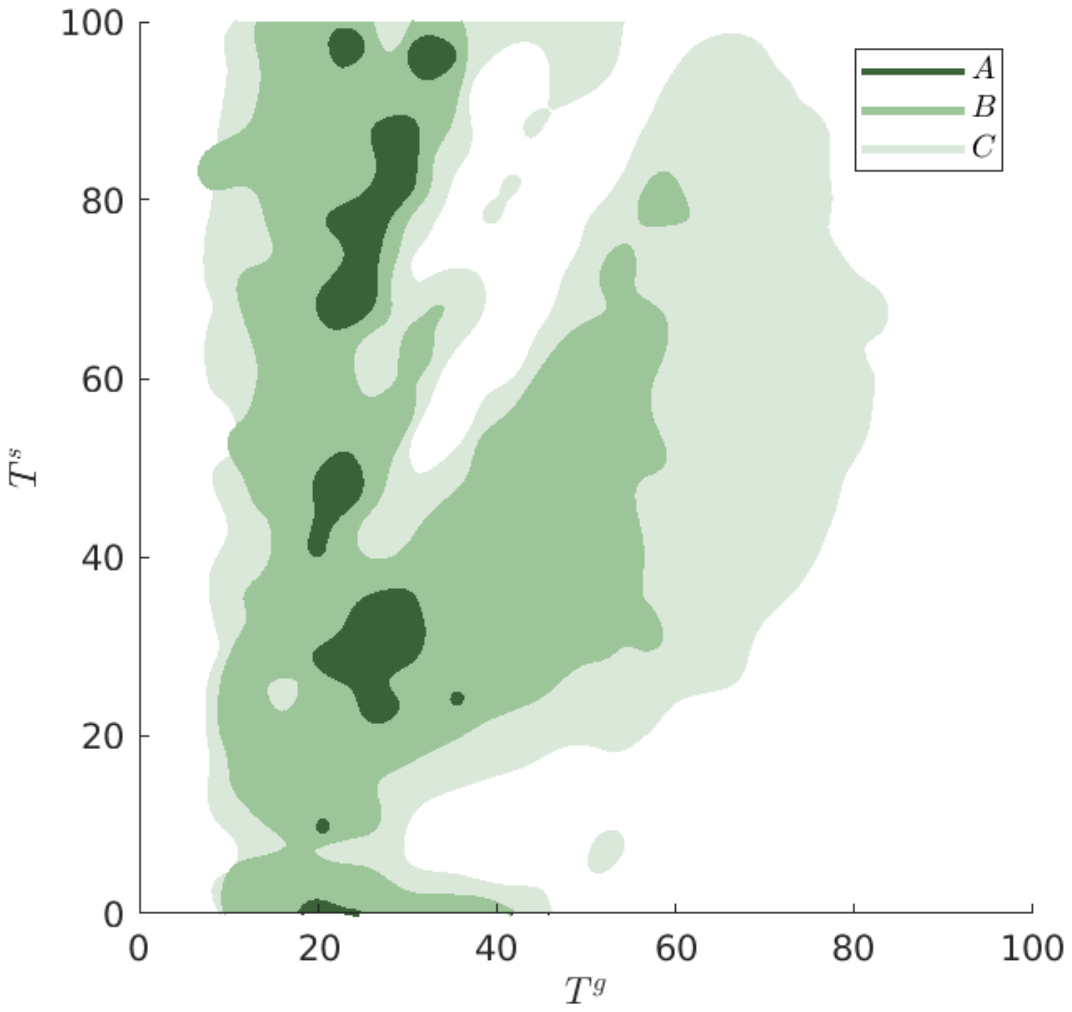}

\subcaption{$u(x)=\sqrt{x}$}

\end{minipage}

\begin{minipage}{0.3\textwidth}
\centering

\includegraphics[scale=0.35,trim={4.5cm 8cm 5cm 8cm},clip]{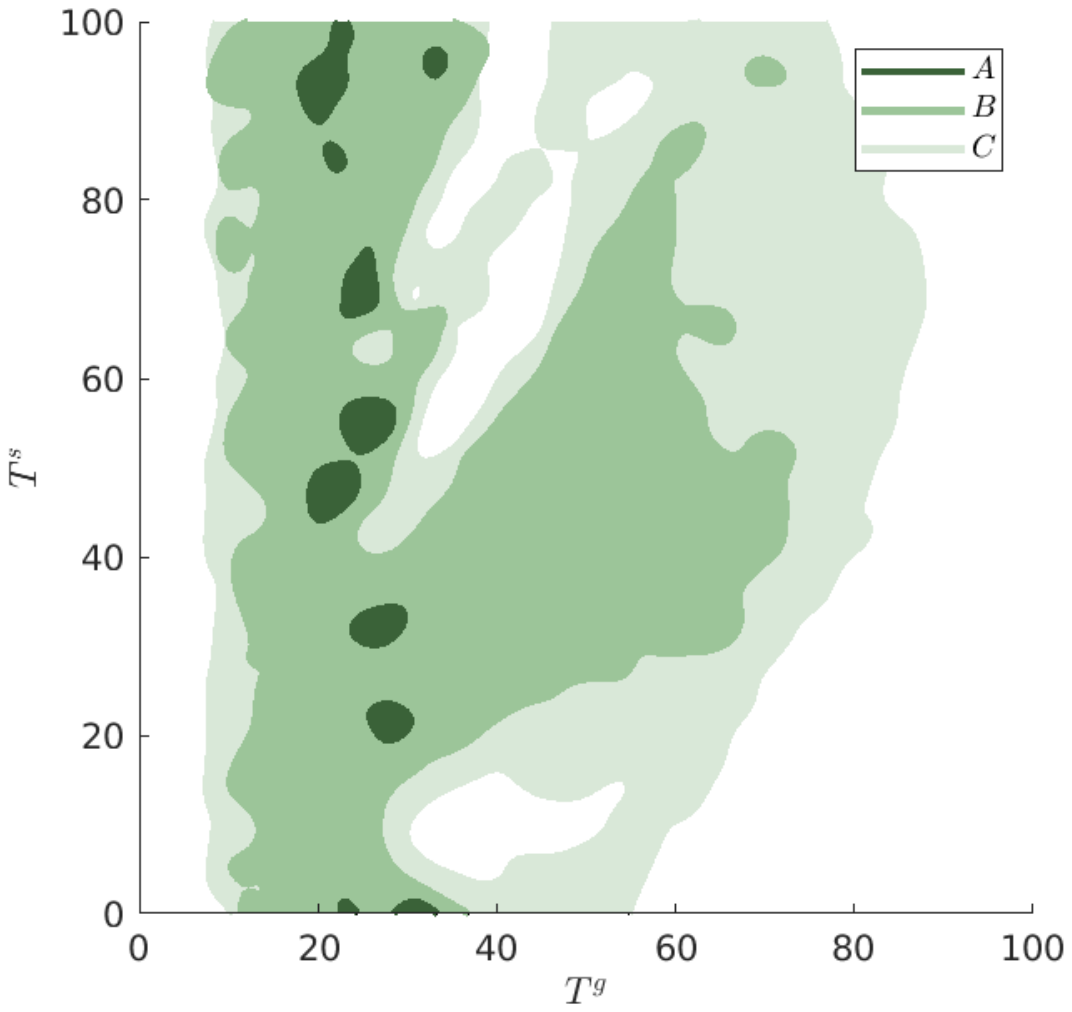}

\subcaption{$u(x)=-|x-2\cdot 60|^3\mathbbm{1}\{x\leq 2\cdot 60\}$}

\end{minipage}\hspace{0.5cm}\begin{minipage}{0.3\textwidth}
\centering

\includegraphics[scale=0.35,trim={4.5cm 8cm 5cm 8cm},clip]{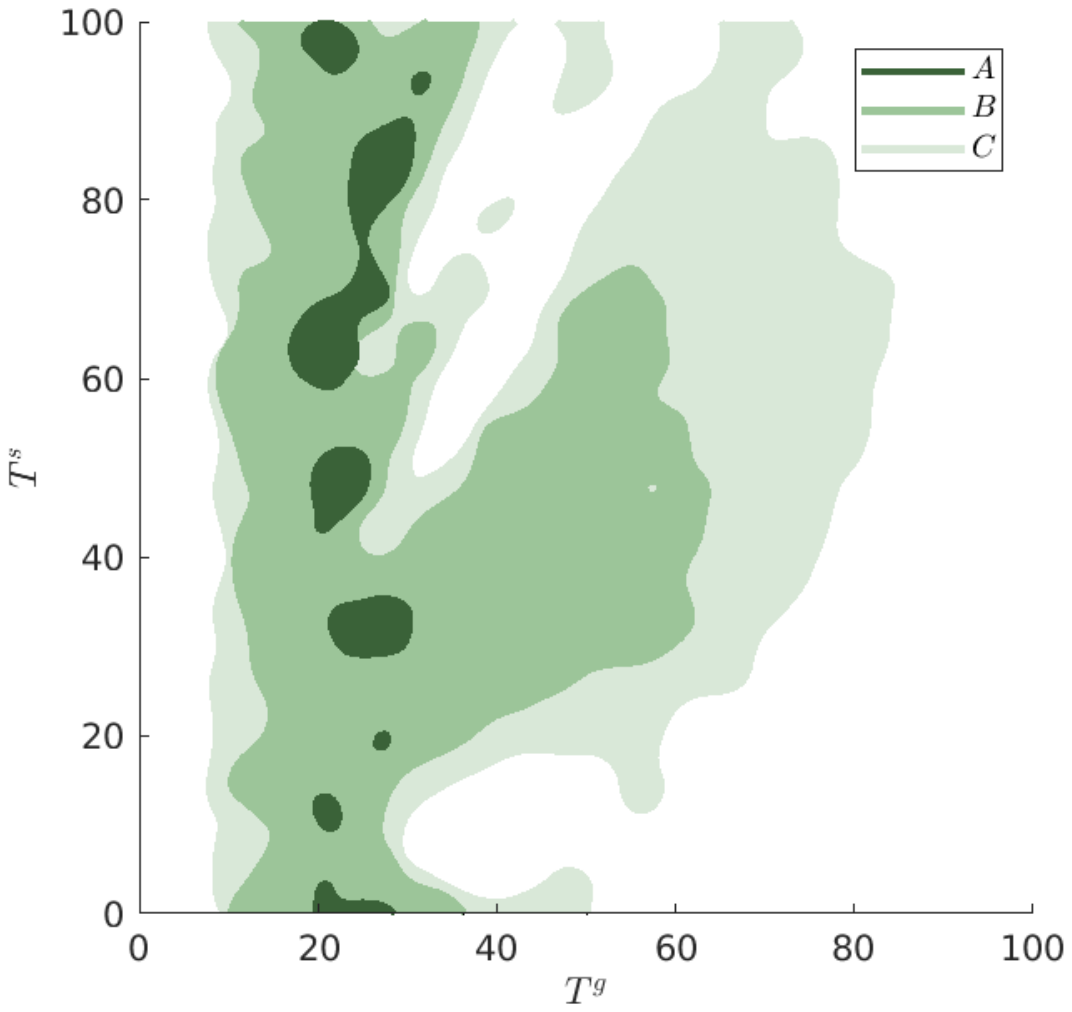}

\subcaption{$u(x)=-|x-2\cdot 60|^2\mathbbm{1}\{x\leq 2\cdot 60\}$}

\end{minipage}\hspace{0.5cm}\begin{minipage}{0.3\textwidth}
\centering

\includegraphics[scale=0.35,trim={4.5cm 8cm 5cm 8cm},clip]{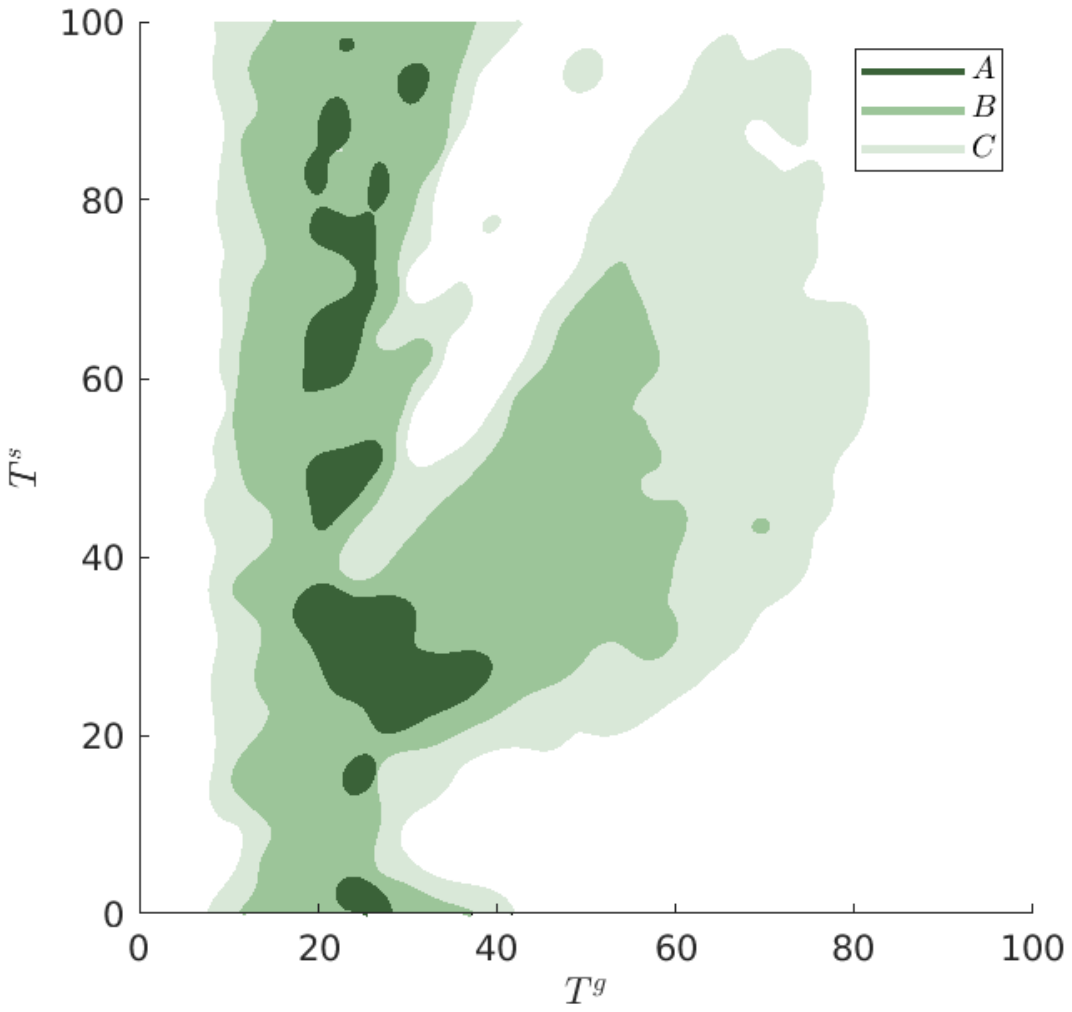}

\subcaption{$u(x)=-|x-2\cdot 60|^{3/2}\mathbbm{1}\{x\leq 2\cdot 60\}$}

\end{minipage}

\caption{Acceptable traffic lights configurations for $r=2.5$ and $\sigma_{(6,7,11)}=\sigma_{(24,23,19)}=0.01$. GPR is based on the Matérn kernel.}
\label{fig:network1AcceptableLightsMATERN}
\end{figure}

\paragraph{Comparison with the Cooperative Driving Benchmark Model.} The preceding results are based on the demand proportional flows as an interaction rule.  We investigate the benchmark model of cooperative driving as a theoretical alternative for selected design parameters.\footnote{The simulations are more complex because they have to solve a linear program for each node per time step. We use Matlab's built-in solver for these linear programs.}
Table \ref{tab:comp} compares the expected traffic flows for four selected design parameters. We find that the myopically optimized cooperative driving model yields moderately but consistently higher traffic flows in these examples.

\begin{table}[!htbp]
\centering 

\begin{threeparttable}
\caption[]{Comparison with the benchmark model for cooperative driving.} 
\label{tab:comp}
\begin{tabular}{rcccc} \toprule 
& \multicolumn{4}{c}{$(T^g,T^s)$}    \\ \midrule 
& $(20,75)$& $(20,10)$ & $(30,30)$ & $(30,20)$  \\     \toprule 
DPF &		 60.48 & 60.49 & 61.75 & 59.17 \\
CDBM &		 65.78 & 64.71 & 64.29 & 64.28\\
\bottomrule 
\end{tabular} 
\begin{tablenotes} 
\small 
\item Approximation of $\mathbb{E}(Q)$ for demand proportional flows (DPF) and the cooperative driving benchmark model (CDBM) based on $500$ independent simulations per traffic light configuration.
\end{tablenotes} 
\end{threeparttable}

\end{table}

\subsection{Highway Network}

Highways are an important part of the road network as they allow efficient travel over longer distances.  Vehicles can move at higher speeds, with the risk of accidents limited by the fact that the directions of travel are physically separated. Control mechanisms (such as different speed limits, optional driving on the shoulder, etc.) can be used to increase efficiency, depending on the traffic situation.

\subsubsection{Set-Up}

\paragraph{Network.} 

We consider a stylized highway network as shown in Figure \ref{fig:network2}. The network consists of 33 nodes $V=\{1,\dots,33\}$. Three roundabouts $\mathcal{Y}=\{1,12,23\}$ connect two types of highway modules. Those in the core $\mathcal{R}=\{7,\dots,11,18,\dots,22,29,\dots,33\}$ allow optional driving on the shoulder, while those in the periphery $\mathcal{G}=V\setminus\{\mathcal{Y}\cup\mathcal{R}\}$ do not. When the shoulder is open to traffic, the maximum density and maximum traffic flow are increased while the maximum speed is reduced.

All highway modules $\mathcal{G}\cup\mathcal{R}$ have the same length of $\unit[3]{km}$, while the roundabouts have a diameter of $\unit[200]{m}$. We assume a free-flow speed on the highways of $\unit[100]{km/h}$ and an average speed of $\unit[40]{km/h}$ on the roundabout. Highway modules $v\in\mathcal{R}$ allow for an optional driving on the hard shoulder: In this case, maximal flow and density are increased by $50$ \%, while the free-flow speed is decreased to $\unit[80]{km/h}$. We normalize traffic densities in terms of the roundabouts, i.e., we let $l_v=1$ for $v\in \mathcal{Y}$ and set $l_v=15$ for $v\in\mathcal{G}\cup\mathcal{R}$. Moreover, we specify symmetric and constant turning fractions, i.e., $f_{(x,u,v)\to w}(t+1)=1/3$ for $v\in\mathcal{Y}$ and  $w\in\cO(v),~w\neq u\in\cI(v),~u\neq x\in\cI(u)$ and $f_{(x,u,v)\to w}(t+1)=1$ for $v\in\mathcal{G}\cup \mathcal{R}$ in analogy. 

At $\unit[100]{km/h}$, a roundabout would be traversed in $\unit[7.2]{s}$, i.e., the interval $[t,t+1]$ corresponds to $t^\mathrm{real}=\unit[7.2]{s}$. Initially, the system is homogeneously filled with $5$\,\% of the maximum density.\footnote{To be precise, let $\rho^0=\sum_{v\in V}\sum_{u\in\mathcal{I}(v),~u\neq w\in\mathcal{O}(u)} \rho_{(u,v,w)}(0) /\sum_{v\in V}\rho_v^\mathrm{max}\in[0,1]$ denote the total initial density, as a fraction of the maximal density. For given $\rho^0=0.05$, we distribute the density homogeneously via
\begin{equation*}
\rho_{(u,v,w)}(0) =\frac{\rho_v^\mathrm{max}}{\#\{(u,v,w)\colon u\in\mathcal{I}(v),~u\neq w\in\mathcal{O}(u)\}}\cdot\rho^0.
\end{equation*}} We simulate $T=500$ time steps corresponding to $\unit[1]{h}$ of traffic. The remaining parameters of the modules are given in Table \ref{table:parameterNetwork2}. In this section, (CLOSED) means that the shoulder is closed to traffic, while (OPEN) denotes the configuration in which it is open.

\begin{table}[!htbp]
\centering 

\begin{threeparttable}
\caption[]{Parameter choice.} 
\label{table:parameterNetwork2}
	\begin{tabular}{lcccccc}\toprule 
		
&		$s^\mathrm{max}_v$ & $\rho^\mathrm{max}_v$& $a_v$ & $b_v$ & $c_v$ & $d_v$ \\ \toprule 
$v\in\mathcal{G}$&		$20$ & $100$ &  $1$ & $1$& $1$& $1$ \\
$v\in\mathcal{Y}$&		$5$ & $30$ &  $2/5$ & $1$& $1$& $1$   \\
$v\in\mathcal{R}$ (CLOSED)&		$20$ & $100$ &  $1$ & $1$& $1$& $1$   \\
$v\in\mathcal{R}$ (OPEN)&		$30$ & $150$ &  $4/5$ & $1$& $1$& $1$  \\
\bottomrule 
\end{tabular} 

\end{threeparttable} 

\end{table}

\paragraph{Random Sources and Sinks.} We focus mainly on traffic flowing from the bottom to the top of the network. For this purpose, we consider sources and sinks at the periphery, so that the corresponding traffic has to cross two traffic circles. These can be considered as bottlenecks in the network. We mix this traffic with additional traffic in the core of the network that uses only the highways, which have a high capacity; however, heavy traffic can cause congestion. Opening the shoulder with a lower speed limit can alleviate this problem. 

\begin{itemize}
\item \emph{Periphery.} The sources at the bottom of the network are associated with auxiliary flows $q_{(5,4,3)}^\mathrm{aux}(t+1)=q_{(14,15,16)}^\mathrm{aux}(t+1)$, where $q_{(5,4, 3)}^\mathrm{aux}(t+1)\sim \mathcal{N}(\xi_1,\psi_1^2 \cdot \xi_1^2)$. At the top, deterministic sinks are defined via auxiliary flows $q_{(27,26,25)}^\mathrm{aux}(t+1)=q_{(25,26,27)}^\mathrm{aux}(t+1)=-2\xi_1$.
\item \emph{Core.} The additional traffic on the highways is specified by sources and sinks associated with the following auxiliary flows:
\begin{alignat*}{3}
q_{(18,19,20)}^\mathrm{aux}(t+1)&\sim \mathcal{N}(\xi_2,\psi^2_2\cdot \xi_2^2),\quad &&q_{(20,21,22)}^\mathrm{aux}(t+1)&&=-q_{(18,19,20)}^\mathrm{aux}(t+1),\\
q_{(33,32,31)}^\mathrm{aux}(t+1)&\sim \mathcal{N}(\xi_2,\psi^2_2\cdot \xi_2^2),\quad &&q_{(31,30,29)}^\mathrm{aux}(t+1)&&=-q_{(33,32,31)}^\mathrm{aux}(t+1),\\
q_{(7,8,9)}^\mathrm{aux}(t+1)&\sim \mathcal{N}(\xi_2,\psi^2_2\cdot \xi_2^2),\quad &&q_{(9,10,11)}^\mathrm{aux}(t+1)&&=-q_{(7,8,9)}^\mathrm{aux}(t+1),\\
q_{(11,10,9)}^\mathrm{aux}(t+1)&\sim \mathcal{N}(\xi_2,\psi^2_2\cdot \xi_2^2),\quad &&q_{(9,8,7)}^\mathrm{aux}(t+1)&&=-q_{(11,10,9)}^\mathrm{aux}(t+1).
\end{alignat*}
\end{itemize}

All random variables are assumed to be independent. As in Case Study 1, $q^\mathrm{net}_{(u,v,w)}(t+1)$ is equal to $q^\mathrm{aux}_{(u,v,w)}(t+1)$, if this leads to $0\leq \rho_{(u,v,w)}(t+1) \leq (\rho_v^{\mathrm{max}})/2$; otherwise, the absolute value of $q^\mathrm{aux}_{(u,v,w)}(t+1)$ is reduced, such that one of these boundaries is attained. The quantity $q^\mathrm{aux}_{(u,v,w)}(t+1)$ should be interpreted as the flow of vehicles that attempt to enter the network in the considered time period. In our simulations, we set the coefficient of variation to be $\psi_1=\psi_2=0.1$ and vary $\xi_1$ and $\xi_2$.

\subsubsection{Acceptable Configurations and Design}

We study the impact of varying the design parameters $\xi_1$ and $\xi_2$, which control the volume and fluctuation of traffic originating from the periphery and the core, respectively. We compare two highway configurations: driving on shoulder prohibited (CLOSED) vs. driving on shoulder allowed (OPEN).  We focus on the mean performance $\mathbb{E}(\cdot)$ and consider two different performance measures.

The set $\mathcal{N}$ contains all travel directions $(u,v,w)$ that are sources or sinks. Letting
\begin{align*}
Q^a&=\frac{\frac{1}{T}\sum_{t=0}^{T-1}\sum_{(u,v,w)\in\mathcal{N}} \left( q_{(u,v,w)}^\mathrm{net}(t+1)\right)_-}{\frac{1}{T}\sum_{t=0}^{T-1}\sum_{v\in V}\sum_{(u,v,w)\in\mathcal{N}} \left( q_{(u,v,w)}^\mathrm{aux}(t+1)\right)_+},
\end{align*}
$Q^a$ can be regarded as measure of the actual throughput: The sum of the flows that are actually removed from the network is divided by the sum of the available flows that attempt to enter the network.

Another performance measure is
\begin{align*}
Q^b&=\frac{1}{T}\sum_{t=0}^{T-1}\left( \frac{q_{(12,18,19)}^\mathrm{out}(t+1)}{\rho_{(12,18,19)}(t)} + \frac{q_{(1,33,32)}^\mathrm{out}(t+1)}{\rho_{(1,33,32)}(t)} \right).
\end{align*}
that measures average velocity on $(12,18,19)$ and $(1,33,32)$ by considering the fraction of flow that actually moves divided by the available density. We compute acceptable designs\footnote{For the level set estimation, we apply the following computational budget: We consider $8$ iterations of our algorithm, where $n_\mathrm{initial}=100$ points are sampled uniformly in the initial phase and $n_\mathrm{loop}=50$ are sampled in the following $7$ iterations according to the acquisition function. We define target noises as $\{5\%,~10\%,~8\%,~6\%,~5\%,~4\%,~3\%,~2\%\}\cdot 0.1$. We consider at least $n_\mathrm{min}=20$ independent simulations and set $n_\mathrm{max}=500,~150,~200,~300,~400,~650,~1200,~3000$.} based on the two performance measures $Q^a$ and $Q^b$ in the two regions $\mathbb{D}^a=[1,61]\times[1,61]$ and $\mathbb{D}^b=[1,31]\times[1,31]$.\footnote{We assume $\xi_1,\xi_2\geq 1>0$ to exclude simulations with almost no traffic that might lead to small values in the denominators of the performance measures.}

\subsubsection{Results}

We evaluate the two highway configurations (CLOSED) and (OPEN) based on the two performance measures $Q^a$ and $Q^b$. To better compare the driving configurations, we also investigate the differences for (CLOSED) and (OPEN). The results are presented in Figure \ref{fig:resultsCaseStudyB}.

\begin{figure}[!htbp]

\begin{minipage}{0.5\textwidth}
\centering

\includegraphics[scale=0.35,trim={3.5cm 7cm 4cm 7cm},clip]{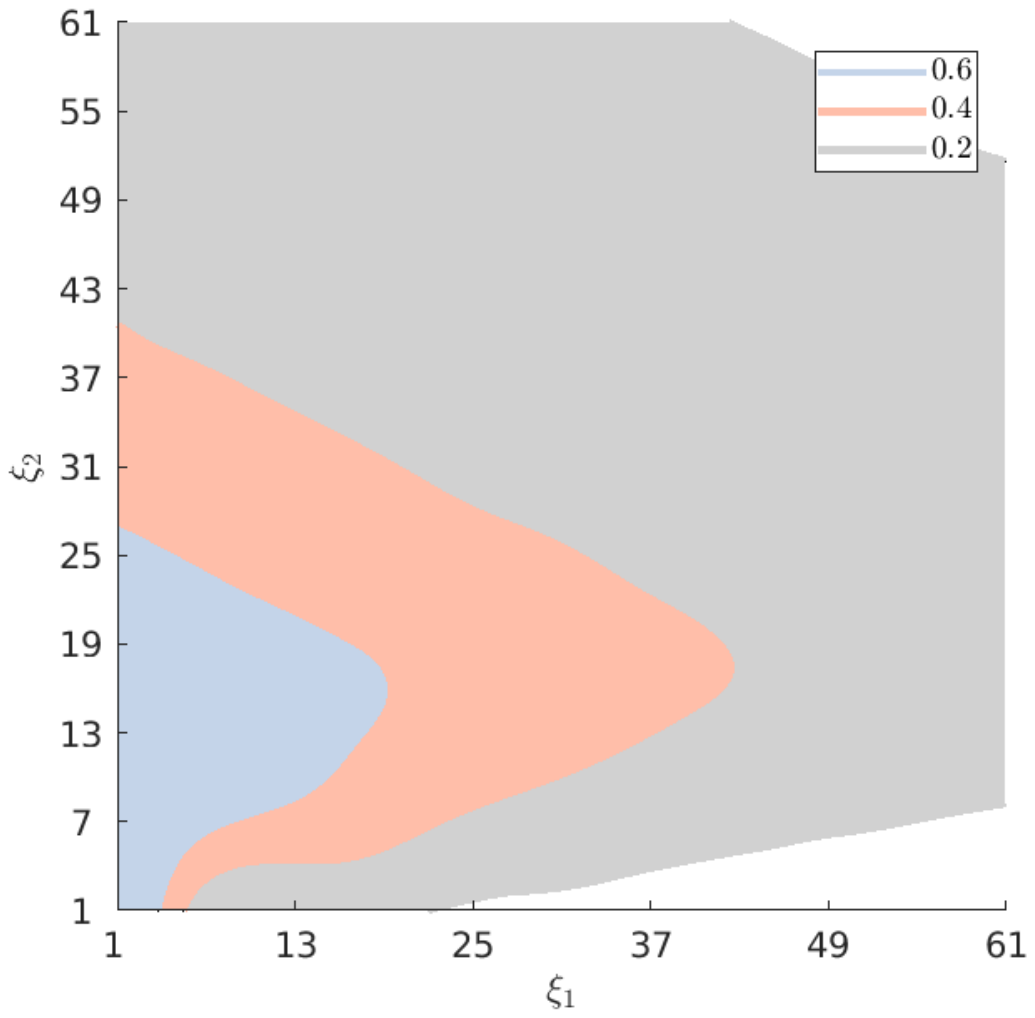}

\subcaption{$Q^a$: (CLOSED)}\label{fig:QA-closed}

\end{minipage}\begin{minipage}{0.5\textwidth}
\centering

\includegraphics[scale=0.35,trim={3.5cm 7cm 4cm 7cm},clip]{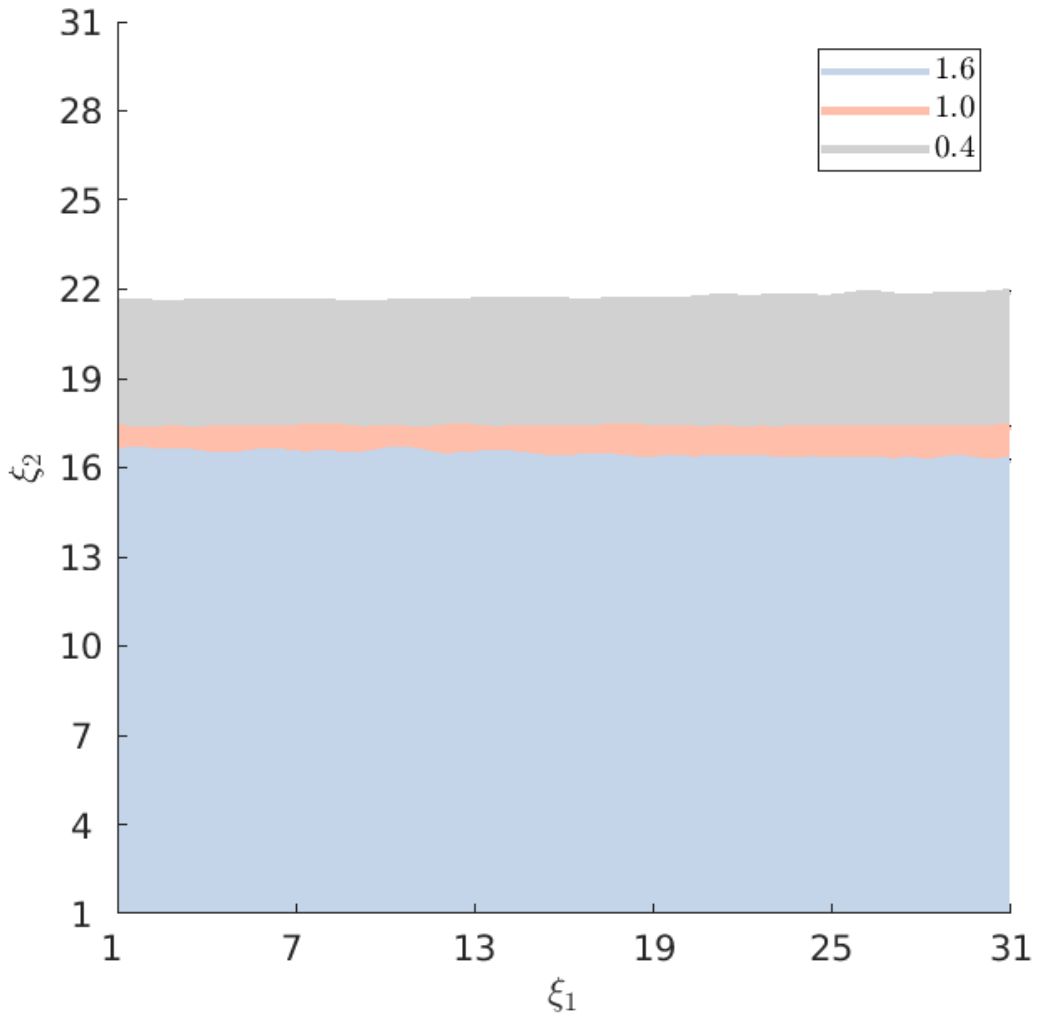}

\subcaption{$Q^b$: (CLOSED)}\label{fig:QB-closed}

\end{minipage}

\begin{minipage}{0.5\textwidth}
\centering

\includegraphics[scale=0.35,trim={3.5cm 7cm 4cm 7cm},clip]{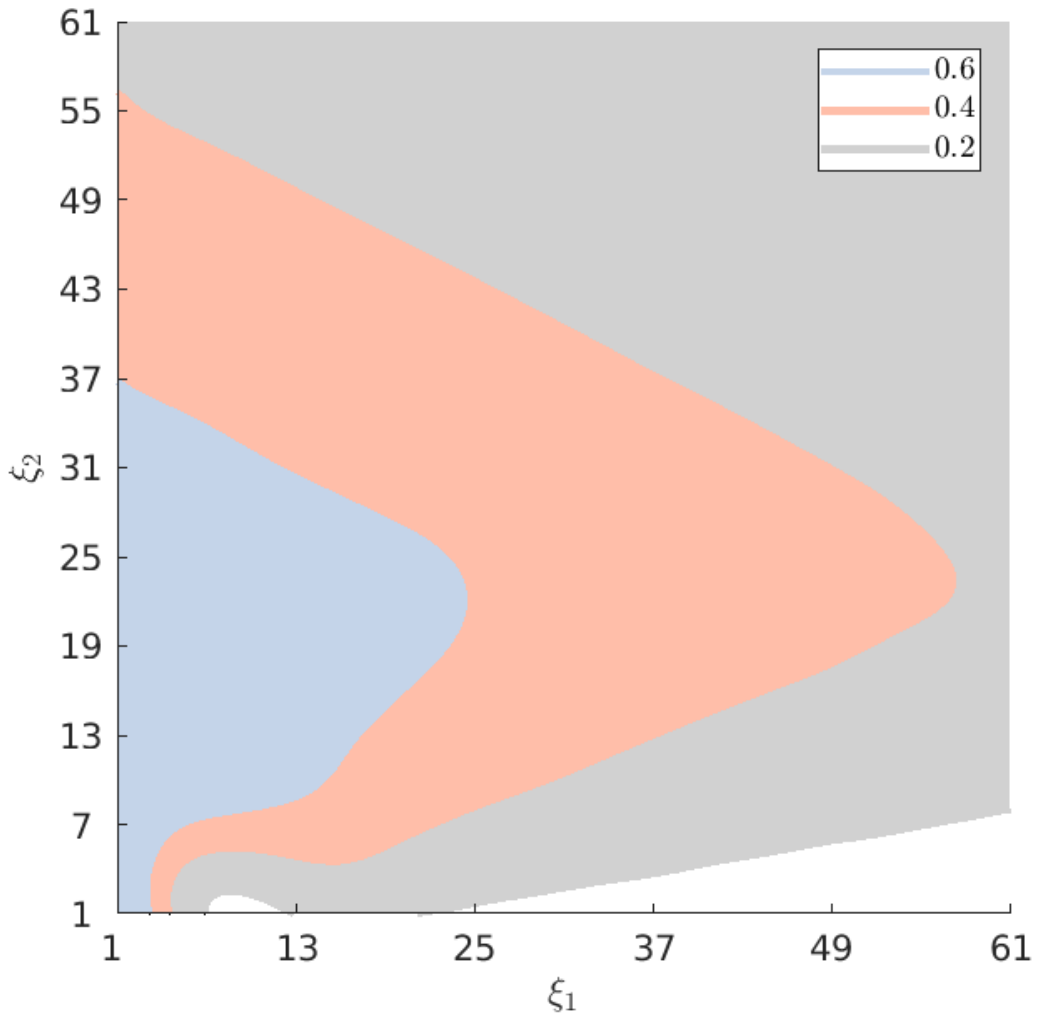}

\subcaption{$Q^a$: (OPEN)}\label{fig:QA-open}

\end{minipage}\begin{minipage}{0.5\textwidth}
\centering

\includegraphics[scale=0.35,trim={3.5cm 7cm 4cm 7cm},clip]{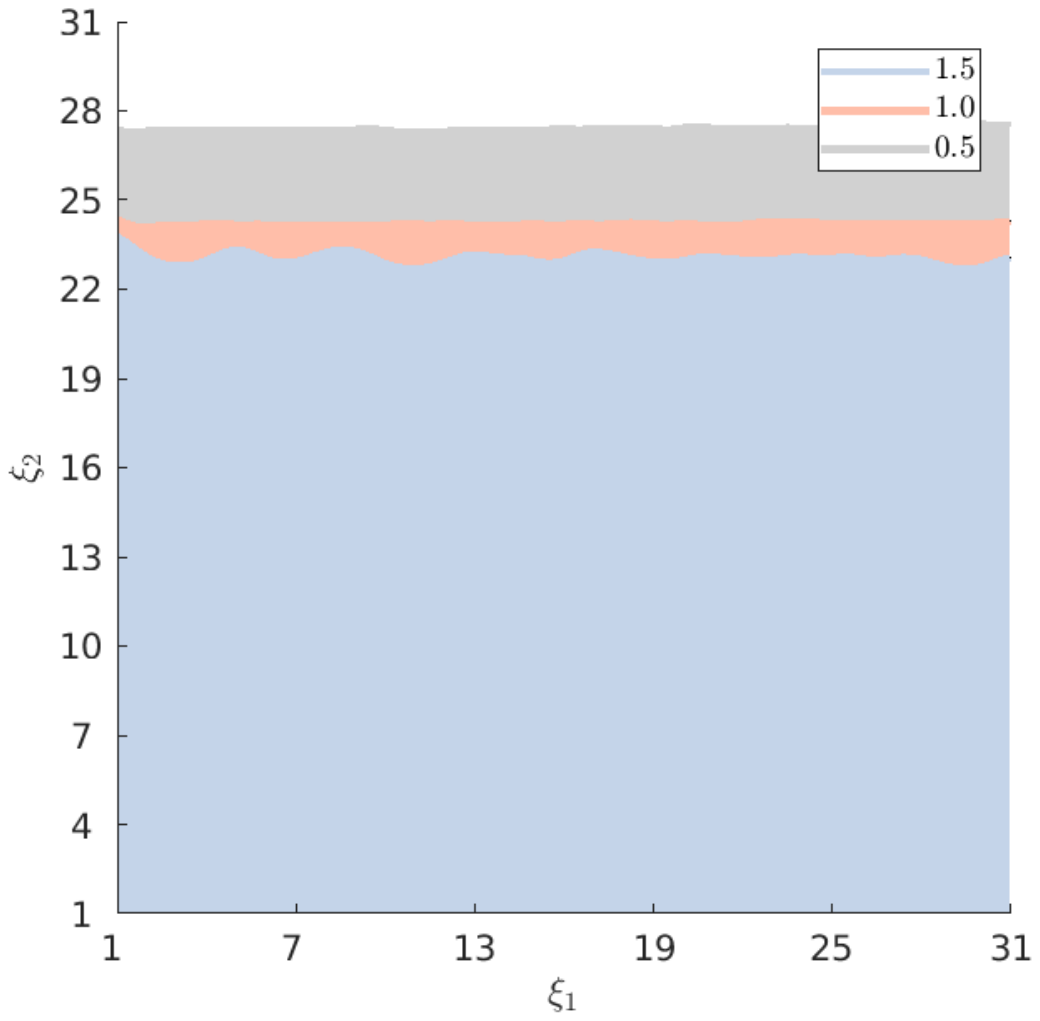}

\subcaption{$Q^b$: (OPEN)}\label{fig:QB-open}

\end{minipage}

\begin{minipage}{0.5\textwidth}
\centering

\includegraphics[scale=0.35,trim={3.5cm 7cm 4cm 7cm},clip]{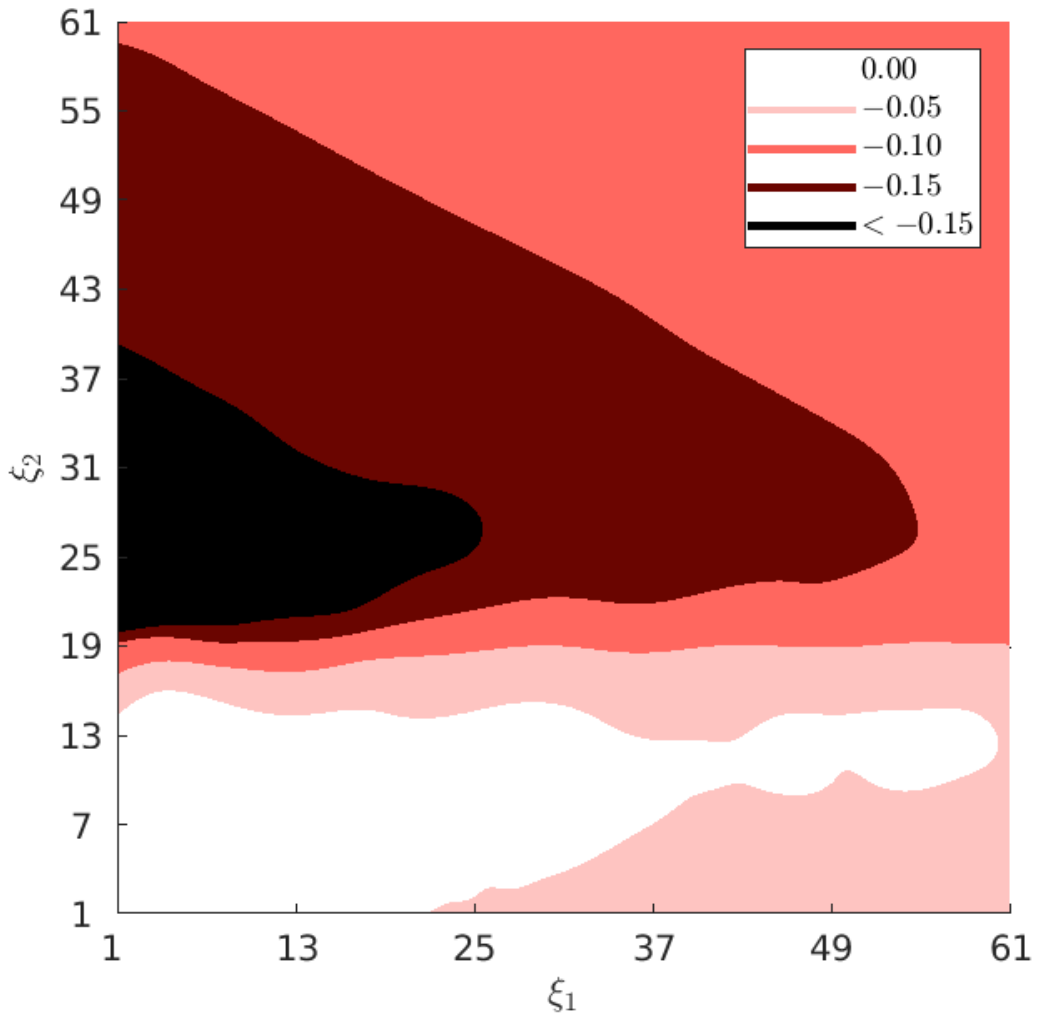}

\subcaption{$Q^a$: Difference}\label{fig:QA-diff}

\end{minipage}\begin{minipage}{0.5\textwidth}
\centering

\includegraphics[scale=0.35,trim={3.5cm 7cm 4cm 7cm},clip]{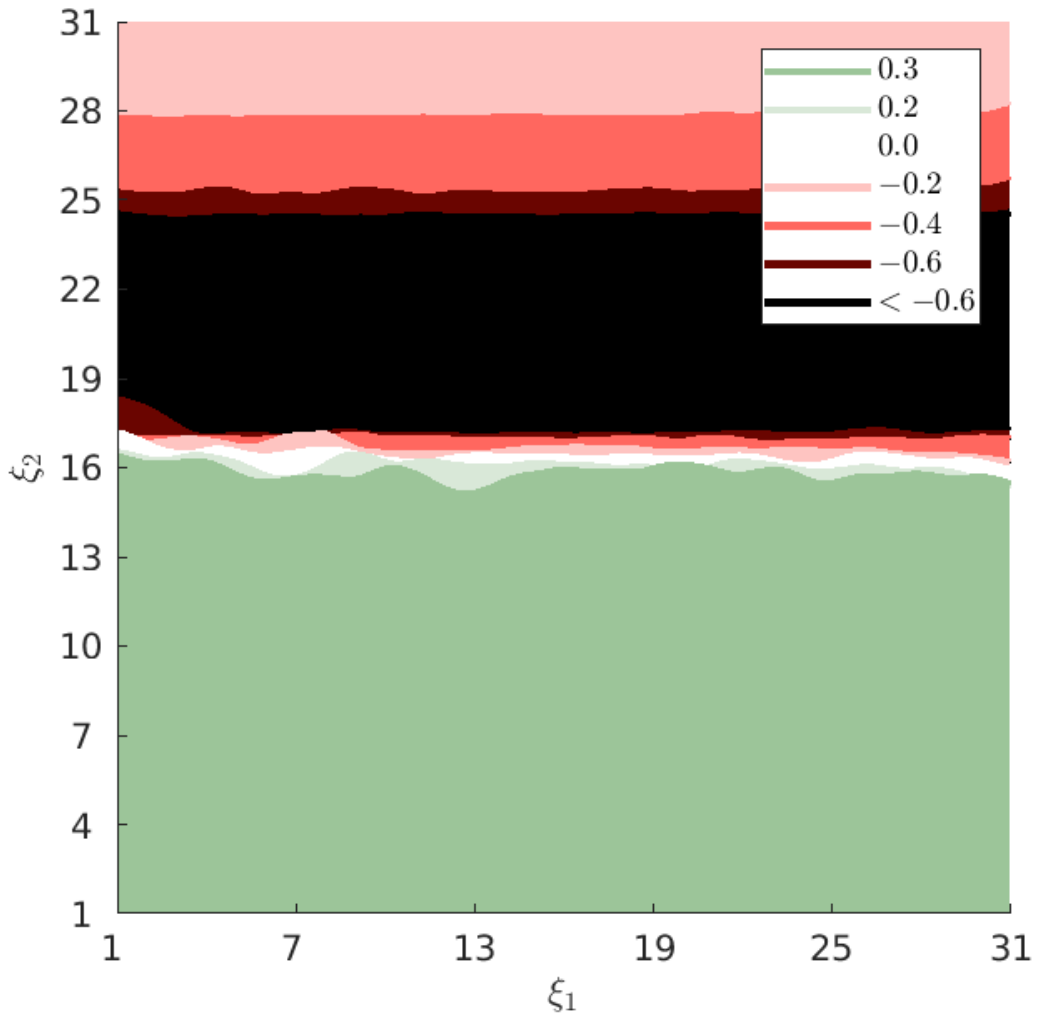}

\subcaption{$Q^b$: Difference}\label{fig:QB-diff}

\end{minipage}

\caption{Acceptable designs and configurations.}
\label{fig:resultsCaseStudyB}
\end{figure}

Figure \ref{fig:QA-closed}, Figure \ref{fig:QA-open}, and Figure \ref{fig:QA-diff} show the set of acceptable designs based on $Q^a$. We can clearly distinguish the effects of $\xi_1$ (traffic generated in the periphery) and $\xi_2$ (traffic generated in the core). 
If we increase $\xi_1$ (traffic from the periphery) and keep $\xi_2$ constant, performance deteriorates because the former traffic needs to pass through roundabouts, which are bottlenecks. Increasing $\xi_2$ while $\xi_1$ is fixed initially increases the overall performance of the system. This is because the highway has enough capacity for core traffic; increasing $\xi_2$ increases the proportion of traffic that performs well. The statistic $Q^a$ measures overall performance, and its expected value is therefore increasing. If $\xi_2$ becomes even larger, congestion will occur on the highway, again reducing performance. Opening the shoulder to traffic is advantageous when traffic density is higher, as can be seen in Figure \ref{fig:QA-diff}. However, the advantages are less pronounced when the system as a whole is too congested.

Figure \ref{fig:QB-closed}, Figure \ref{fig:QB-open}, and Figure \ref{fig:QB-diff} show the set of acceptable designs based on $Q^b$. $Q^b$ essentially measures the speed of traffic originating from the periphery that has just entered the core area after passing through a traffic circle. It can be clearly observed that the acceptable designs do not depend on $\xi_1$: The roundabouts serve as bottlenecks that control flow into the core area so that no additional congestion is caused by these traffic participants and therefore no reduction in speed. An increase in $\xi_2$, in contrast, leads to a decrease in speed; this is due to congestion at nodes 19, 20 and 32, 31. In low density regimes, traffic flows with constant free-flow speed. Once a critical density is reached, the speed decreases relatively quickly. Depending on $\xi_2$, there is a clear region where it is beneficial to open the shoulder to traffic (see Figure \ref{fig:QB-open}). This effect is again less pronounced when the system is too congested.

\section{Conclusion}\label{sec:concl}

In this work, we introduced a rigorous framework for stochastic cell transmission models for general traffic networks. The performance of traffic systems was evaluated based on preference functionals. The numerical implementation combined simulation, Gaussian process regression, and a stochastic exploration procedure. The approach was illustrated in two case studies that served as proofs of concept.

Future research should address the following tasks: a) Our flexible framework can be applied to many traffic systems. This requires careful calibration and validation at both the traffic cell level and the traffic system level. b) These models can then be used to answer specific questions in traffic planning. c) As shown in a simple example in the appendix, the setting can be extended to multiple interacting populations.  
This requires a closer look at model extensions. d) The algorithm combines stochastic search and Gaussian process regression. The latter could be replaced by other techniques, e.g., Bayesian neural networks (cf.~\textcite{Goan2020}), and the performance of different techniques should be compared. e) The normative criteria in this paper were based on expected utility. Other preference functionals might be appropriate in the face of uncertainty, for example. Their implementation requires adapted estimation procedures. 

\printbibliography

\newpage 

\appendix

\section{Further Examples of Cells}\label{sec:a-ex}

\subsection{Roundabout}

\paragraph{Unidirectional Roundabout.}

Consider an unidirectional roundabout $\#$ with four entries/exits enumerated counterclockwise and identified with $\mathbb{Z}_4=  \cI(\#) = \cO(\#) $ as shown in Figure \ref{fig:roundabout}. In right-hand traffic, vehicles travel counterclockwise through the roundabout. For simplicity, we assume that the roundabout is completely symmetric.

\begin{figure}[!htbp]
\begin{center}
\includegraphics[scale=0.8]{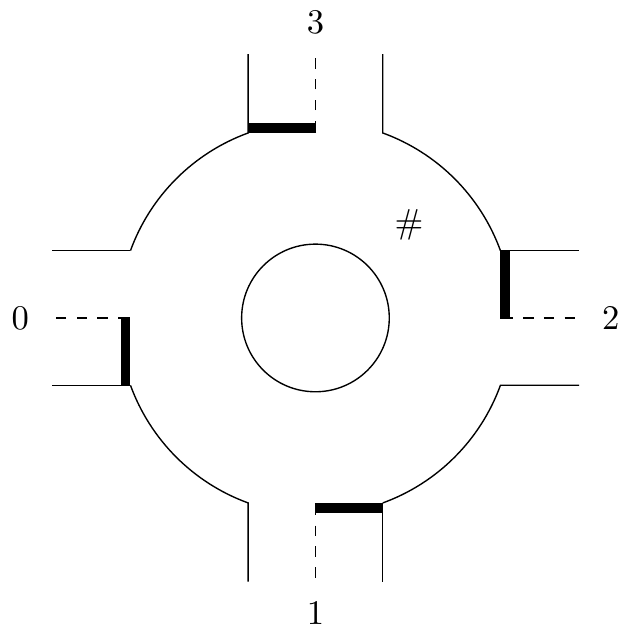}
\end{center}
\caption{Symmetric roundabout.}
\label{fig:roundabout}
\end{figure}

One possibility is to assume that sending functions are as for highways, but receiving functions capture that different paths overlap in the roundabout. This leads to sending function of the following form:\small
\begin{align*}
S_{(u,\#,w)}\left( \left(\rho_{(u',\#,w')} \right)_{u'\in\cI(\#),w'\in\cO(\#)}\right)&=\min\left(s^\mathrm{max}_\#,a\rho_{(u,\#,w)} \right),\quad w\neq u
\end{align*}\normalsize
The receiving functions have a similar shape as for bidirectional linear interfaces, but the counterdensity of vehicles traveling in opposite direction is replaced by the densities on overlapping paths. These densities must be adjusted by a factor corresponding to the length of the overlap in the roundabout. This is due to the fact that the densities are normalized for each node and are proportional to the number of vehicles on each path within each node. We assume that for each path the vehicles are uniformly distributed over the segments of the path. Under these assumptions we obtain the following receiving functions:\small
\begin{align*}
R_{(u,\#,u+1)}\left( \left(\rho_{(u',\#,w')} \right)_{u'\in\cI(\#),w'\in\cO(\#)}\right)&=\max\bigg(b\bigg(\frac{\rho^\mathrm{max}_\#}{4}-c\rho_{(u,\#,u+1)} -d\bigg(\frac{1}{2}\rho_{(u,\#,u+2)} +\frac{1}{3}\rho_{(u,\#,u+3)} \\
&+\frac{1}{3}\rho_{(u+2,\#,u+1)} +\frac{1}{2}\rho_{(u+3,\#,u+1)} +\frac{1}{3}\rho_{(u+3,\#,u+2)} \bigg) \bigg),0\bigg),\\
R_{(u,\#,u+2)}\left( \left(\rho_{(u',\#,w')} \right)_{u'\in\cI(\#),w'\in\cO(\#)}\right)&=\max\bigg(b\bigg(\frac{2\rho^\mathrm{max}_\#}{4}-c\rho_{(u,\#,u+2)} -d\bigg( \rho_{(u,\#,u+1)} +\frac{2}{3}\rho_{(u,\#,u+3)} \\
&+\rho_{(u+1,\#,u+2)} +\frac{1}{2}\rho_{(u+1,\#,u+3)} +\frac{1}{3}\rho_{(u+1,\#,u)} \\
&+\frac{1}{3}\rho_{(u+2,\#,u+1)} +\frac{1}{2}\rho_{(u+3,\#,u+1)} +\frac{2}{3}\rho_{(u+3,\#,u+2)}\bigg) \bigg),0\bigg),\\
R_{(u,\#,u+3)}\left( \left(\rho_{(u',\#,w')} \right)_{u'\in\cI(\#),w'\in\cO(\#)}\right)&=\max\bigg(b\bigg(\frac{3\rho^\mathrm{max}_\#}{4}-c \rho_{(u,\#,u+3)} -d\bigg(\rho_{(u,\#,u+1)} + \rho_{(u,\#,u+2)}  \\
&+\rho_{(u+1,\#,u+2)} +\rho_{(u+1,\#,u+3)} +\frac{2}{3}\rho_{(u+1,\#,u)} \\
&+\rho_{(u+2,\#,u+3)} +\frac{1}{2} \rho_{(u+2,\#,u)} +\frac{1}{3}\rho_{(u+2,\#,u+1)} \\
&+\frac{1}{2}\rho_{(u+3,\#,u+1)} +\frac{2}{3}\rho_{(u+3,\#,u+2)} \bigg)\bigg),0\bigg)
\end{align*}\normalsize

\paragraph{Bidirectional Roundabout.}

The roundabout model discussed above can be easily extended to bidirectional traffic flows of pedestrians in a bidirectional traffic area that has the form of a roundabout. The main difference is that the densities of traffic on overlapping paths for participants moving in the same direction and in the opposite direction must be considered in the receiving functions. In addition, pedestrians are assumed to choose the shortest path in the traffic circle. If two paths have the same length, half of the pedestrians will use the first path and the other half will use the second path. These assumptions lead to the following formalization:\small
\begin{align*}
S_{(u,\#,w)}\left( \left(\rho_{(u',\#,w')} \right)_{u'\in\cI(\#),w'\in\cO(\#)}\right)&=\min\left(s^\mathrm{max}_\#,a\rho_{(u,\#,w)} \right),\quad w\neq u,\\
R_{(u,\#,u+1)}\left( \left(\rho_{(u',\#,w')} \right)_{u'\in\cI(\#),w'\in\cO(\#)}\right)&=\max\bigg(b\bigg(\frac{\rho^\mathrm{max}_\#}{4}-c\rho_{(u,\#,u+1)} -d\bigg(\frac{1}{4}\rho_{(u,\#,u+2)} +\rho_{(u+1,\#,u)} \\
&+\frac{1}{4}\rho_{(u+1,\#,u+3)} +\frac{1}{4}\rho_{(u+2,\#,u)} +\frac{1}{4}\rho_{(u+3,\#,u+1)} \bigg)\bigg),0\bigg),\\
R_{(u,\#,u+2)}\left( \left(\rho_{(u',\#,w')} \right)_{u'\in\cI(\#),w'\in\cO(\#)}\right)&=\max\bigg(b\bigg(\rho^\mathrm{max}_\#-c\rho_{(u,\#,u+2)}-d \sum_{\substack{u'\in\cI(\#)\setminus\{u\},\\w'\in\cO(\#)\setminus\{u+2\}}} \rho_{(u',\#,w')} \bigg),0\bigg),\\
R_{(u,\#,u+3)}\left( \left(\rho_{(u',\#,w')} \right)_{u'\in\cI(\#),w'\in\cO(\#)}\right)&=\max\bigg(b\bigg(\frac{\rho^\mathrm{max}_\#}{4} -c\rho_{(u,\#,u+3)} -d\bigg(\frac{1}{4}\rho_{(u,\#,u+2)} \\
&+\frac{1}{4}\rho_{(u+1,\#,u+3)}+\frac{1}{4}\rho_{(u+2,\#,u)} +\rho_{(u+3,\#,u)} +\frac{1}{4}\rho_{(u+3,\#,u+1)} \bigg)\bigg),0\bigg)
\end{align*}\normalsize

\paragraph{Roundabout with Vehicles and Pedestrians.}

The conceptual framework we develop can be generalized to multiple populations. In this paper, for simplicity, we focus only on explaining generalized cell transmission models for traffic participants of one type. However, in the current example, we describe how an extension to more than one population is feasible.

Again, we focus on a node $\#$ with $\cI(\#)= \cO(\#)= \bbz_4$ with vehicles ($k=1$) moving as in the unidirectional roundabout. The cell transmission model can be implemented on different time scales. Here we assume that each time step corresponds to a relatively short real time span. Pedestrians ($k=2$) move in both directions, but -- according to their lower speed -- only up to the next exit. Pedestrians have priority in the roundabout. The dynamics of the pedestrians is independent of the movement of the vehicles. Vehicles move as in the unidirectional roundabaout, but can be blocked by pedestrians who have priority. This canonically leads to the following formalization, where we introduce an additional subscript for the type ($k=1,2$): \footnotesize
\begin{align*}
S_{(u,\#,w),1}&\left( \left(\rho_{(u',\#,w'),k'}  \right)_{u'\in\cI(\#),w'\in\cO(\#),k'=1,2}\right) =\quad\quad\quad\quad\quad\quad\quad\quad\quad\quad\quad\quad\quad\quad\quad\quad\quad\quad\quad\quad\quad\quad\quad\quad\quad
\\ & \quad\min\left(s^\mathrm{max}_{\#,1},a_{1}\rho_{(u,\#,w),1}\right)\cdot
\mathbbm{1}\{\rho_{(w-1,\#,w),2}+\rho_{(w,\#,w-1),2}=0\} , \; w\neq u \quad \text{ (pedestrians may block exit)},\\
S_{(u,\#,w),2} & \left( \left(\rho_{(u',\#,w'),k'} \right)_{u'\in\cI(\#),w'\in\cO(\#),k'=1,2}\right)=\\
& \min\bigg\{s^\mathrm{max}_{\#,2},a_{2}\rho_{(u,\#,w),2}\bigg\},\; w\in\{u-1,u+1\} \quad \text{ (pedestrians move independently of vehicles)},
\\
R_{(u,\#,u+1),1} & \left( \left(\rho_{(u',\#,w'),k'} \right)_{u'\in\cI(\#),w'\in\cO(\#),k'=1,2}\right)= \\
&\mathbbm{1}\{\rho_{(u,\#,u+1),2}+\rho_{(u+1,\#,u),2}=0\}\cdot\max\bigg(b_1\bigg(\frac{\rho^\mathrm{max}_{\#,1}}{4}-c_1\rho_{(u,\#,u+1),1} -d_1\bigg(\frac{1}{2}\rho_{(u,\#,u+2),1} +\frac{1}{3}\rho_{(u,\#,u+3),1} \\
&+\frac{1}{3}\rho_{(u+2,\#,u+1),1} +\frac{1}{2}\rho_{(u+3,\#,u+1),1} +\frac{1}{3}\rho_{(u+3,\#,u+2),1} \bigg) \bigg),0\bigg) \quad  \quad  \text{(pedestrians may block entrance)},\\
R_{(u,\#,u+2),1} & \left( \left(\rho_{(u',\#,w'),k'} \right)_{u'\in\cI(\#),w'\in\cO(\#),k'=1,2}\right)= \\ 
& \mathbbm{1}\{\rho_{(u,\#,u+1),2}+\rho_{(u+1,\#,u),2}=0\}  \cdot \max\bigg(b_1\bigg(\frac{2\rho^\mathrm{max}_{\#,1}}{4}-c_1\rho_{(u,\#,u+2),1} -d_1\bigg( \rho_{(u,\#,u+1),1} +\frac{2}{3}\rho_{(u,\#,u+3),1} \\
&+\rho_{(u+1,\#,u+2),1} +\frac{1}{2}\rho_{(u+1,\#,u+3),1} +\frac{1}{3}\rho_{(u+1,\#,u),1} +\frac{1}{3}\rho_{(u+2,\#,u+1),1} \\&+\frac{1}{2}\rho_{(u+3,\#,u+1),1} +\frac{2}{3}\rho_{(u+3,\#,u+2),1}\bigg) \bigg),0\bigg)
 \quad  \text{(pedestrians may block entrance)},\\
R_{(u,\#,u+3),1}& \left( \left(\rho_{(u',\#,w'),k'} \right)_{u'\in\cI(\#),w'\in\cO(\#),k'=1,2}\right)=
\\& \mathbbm{1}\{\rho_{(u,\#,u+1),2}+\rho_{(u+1,\#,u),2}=0\}\cdot \max\bigg(b_1\bigg(\frac{3\rho^\mathrm{max}_{\#,1}}{4}-c_1 \rho_{(u,\#,u+3),1} -d_1\bigg(\rho_{(u,\#,u+1),1} + \rho_{(u,\#,u+2),1}  \\
&+\rho_{(u+1,\#,u+2),1} +\rho_{(u+1,\#,u+3),1} +\frac{2}{3}\rho_{(u+1,\#,u),1} +\rho_{(u+2,\#,u+3),1} +\frac{1}{2} \rho_{(u+2,\#,u),1} +\frac{1}{3}\rho_{(u+2,\#,u+1),1} \\
&+\frac{1}{2}\rho_{(u+3,\#,u+1),1} +\frac{2}{3}\rho_{(u+3,\#,u+2),1} \bigg)\bigg),0\bigg) \quad  \text{(pedestrians may block entrance)},\\
R_{(u,\#,w),2} & \left( \left(\rho_{(u',\#,w'),k'} \right)_{u'\in\cI(\#),w'\in\cO(\#),k'=1,2}\right)= 
\max\Bigg(b_{2}\bigg(\frac{\rho^\mathrm{max}_{\#,2}}{4}-c_2\rho_{(u,\#,w),2}-d_2\rho_{(w,\#,u),2}\bigg),0\Bigg),\\ & \quad \quad \quad \quad\quad \quad \quad\quad \quad\quad \quad \quad\quad  w\in\{u-1,u+1\}, \quad \text{(pedestrians move independently of vehicles)}
\end{align*}\normalsize
Here, for types $k=1,2$, $\rho^\mathrm{max}_{\#, k}>0$ is the maximum density, $s^\mathrm{max}_{\#,k}>0$ is the maximum flow, $0<a_k\leq 1$ is the free-flow speed, $0<b_k\leq 1$ is the congestion wave speed, and $c_k, d_k >0$ are interaction parameters.

\section{The Algorithm}

\subsection{A Bayesian Approach to Sampling}\label{sec:BayesianSampling}

In most applications, generating the samples of the complex system $Q_k$ is expensive.  A Bayesian approach exploits the previous GPR $(m^i,\sigma^i)$ in order to do variance reduction and, thus, reduce computational costs.  Specifically, we propose utilizing the GPR (with its normal distribution) as a prior for the estimation of $\hat{\mu}_k$.
With this structure, we consider Bayesian inference for normal mean conditional on the variance (see, e.g., \textcite[Section 5.2]{Hoff2009}) with a single sample taken at a time. Mathematically, we have the following structure:
\begin{itemize}
\item \emph{Prior distribution}: $\mu(k)\sim\mathcal{N}(m^i(k),\sigma^i(k)^2)$,
\item \emph{Sampling distribution}: $\hat{\mu}_k^n\mid \mu(k)\sim \mathcal{N}\left(\mu(k),\frac{(\hat{\sigma}_k^n)^2}{n}\right)$ as a central limit theorem heuristic by prior assumptions, and
\item \emph{Posterior distribution}: $\mu(k)\mid \hat{\mu}_k^n\sim \mathcal{N}\left(t_{\mathrm{post},n},s_{\mathrm{post},n}^2\right)$
where 
\begin{align*}
t_{\mathrm{post},n}=\frac{  \frac{1}{\left(\sigma^i(k)\right)^2}  m^i(k) + \frac{n}{(\hat{\sigma}_k^n)^2 } \hat{\mu}_k^n }{ \frac{1}{\left(\sigma^i(k)\right)^2} + \frac{n}{(\hat{\sigma}_k^n)^2 } } \quad\text{and}\quad s_{\mathrm{post},n}^2=\frac{1}{\frac{1}{\left(\sigma^i(k)\right)^2}+\frac{n}{(\hat{\sigma}_k^n)^2}}.
\end{align*}
\end{itemize}
Note that $s_{\mathrm{post},n}^2\leq \frac{(\hat{\sigma}_k^n)^2}{n}$, i.e., the variance (and corresponding sample size) is reduced compared to the purely frequentist view described in Section \ref{sec:leaning-alg}. At the same time, precision may be reduced as $t_\mathrm{post}$ in general may not be an unbiased estimator of $\mathbb{E}(u(Q_k))$. The updated stopping criterion in the Bayesian approach, i.e., so that the sample variance drops below $(\tau^i)^2$, is given by
\begin{equation*}
n=\min\left\{\min\left\{n_\mathrm{min}\leq \bar{n} \colon s_{\mathrm{post},\bar{n}}^2\leq (\tau^i)^2\right\},n_\mathrm{max}\right\}.
\end{equation*}

\subsection{Computing the Error Bounds}\label{sec:computeErrorBounds}
We wish to return to our discussion of the error bounds with some remarks on its computation with Monte Carlo estimation.
As shown in Section \ref{sec:learning-estimate-error}, the estimation error can be upper bounded by integrals of the form
\begin{equation*}
\mathrm{vol}\left\{k\in\mathbb{D}\colon m^i_+(k)\geq \gamma > m^i_-(k)\right\},
\end{equation*}
where $m^i_-, m^i_+\colon\mathbb{D}\to\mathbb{R}$ are constructed as either uniform or pointwise error bounds. 

The computation of these integrals is not trivial as the functions $m^i_-$ and $m^i_+$ are typically not analytically accessible. Yet, values at specific positions $k\in\mathbb{D}$ can be evaluated. This provides a natural setting for approximation via Monte Carlo simulation:
\begin{itemize}
\item For a fixed budget $n_\mathrm{eval}\in\mathbb{N}$, let $U_1,\dots,U_{n_\mathrm{eval}}\sim \mathrm{Unif}(\mathbb{D})$.
\item An approximation is given by
\begin{align*}
\mathrm{vol}\left\{k\in\mathbb{D}\colon m^i_+(k)\geq \gamma > m^i_-(k)\right\}&=\int_\mathbb{D} \mathbbm{1}\{m^i_+(k)\geq \gamma > m^i_-(k)\}\mathrm{d}k\\
&\approx \frac{\mathrm{vol}(\mathbb{D})}{n_\mathrm{eval}}\sum_{j=1}^{n_\mathrm{eval}} \mathbbm{1}\left\{m^i_+(U_j)\geq \gamma > m^i_-(U_j)\right\}
\end{align*}
\item In order to eliminate random fluctuations in the comparison of the error bound for different iterations $i$, we fix a particular sequence\footnote{This is a natural application for Quasi-Monte Carlo methods in order to decrease the approximation error; here, we use the \emph{Sobol sequence}. We refer to \textcite{Glasserman2003} for an overview on Quasi-Monte Carlo methods.} of samples $\hat{U}_1,\dots,\hat{U}_{n_\mathrm{eval}}$.
\end{itemize}
The details of the evaluation procedure with the Monte Carlo approximation are given in Algorithm \ref{alg:PEval}.
\begin{algorithm}[!htp]
\caption{Evaluation Procedure for the Approximation Error.}
\label{alg:PEval}
\begin{algorithmic}
\STATE{ \textbf{Input:} $n_\mathrm{eval}\in\mathbb{N}$, $m^i_+,m^i_-\colon\mathbb{D}\to\mathbb{R}$, samples $\hat{U}_1,\dots,\hat{U}_{n_\mathrm{eval}}\in\mathbb{D}$.}
\STATE{Compute
\begin{equation*}
\hat{e}^i=\frac{\mathrm{vol}(\mathbb{D})}{n_\mathrm{eval}}\sum_{j=1}^{n_\mathrm{eval}} \mathbbm{1}\left\{m^i_+(\hat{U}_j)\geq \gamma > m^i_-(\hat{U}_j)\right\}.
\end{equation*}}
\STATE{\textbf{Output:} $\hat{e}^i$.}
\end{algorithmic}
\end{algorithm}

\subsection{Robustification of the Pointwise Error Bound}\label{sec:interpret-pointwise}

The pointwise error bound can be extended \emph{locally} if we impose a Lipschitz assumption on $M$. This can be understood as a robustification of the pointwise bounds. We present the following statement in analogy to the uniform error bounds by \textcite{Lederer2019}.

\begin{proposition}[Local Credible Band]\label{thm:locCredBand}
Let $\delta\in(0,1)$ and $\varepsilon>0$. For fixed $k^*\in\mathbb{D}$ with $B_\varepsilon(k^*)\subseteq \mathbb{D}$, let $L=L(k^*)>0$ and assume that $|M(k^*)-M(k)|\leq L\|k^*-k\|$ $P$-a.s. for all $k\in B_\varepsilon(k^*)$. It holds
\begin{align*}
P\left(\forall~k\in B_\varepsilon(k^*)\colon |M(k)-m^i(k^*)|\leq \Phi^{-1}\left(1-\frac{\delta}{2}\right)\sigma^i(k^*)+L\|k-k^*\|\mid \hat{M}(\mathbb{D}^i)=\hat{\mu}(\mathbb{D}^i)\right)\geq 1-\delta
\end{align*}
\end{proposition}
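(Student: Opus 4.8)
The plan is to reduce the statement to the pointwise bound of Lemma~\ref{lem:pointwiseApprox} applied at the single anchor point $k^*$, and then to propagate this bound across the whole ball $B_\varepsilon(k^*)$ using the triangle inequality together with the assumed Lipschitz control of $M$. Throughout I work under the posterior law, which by Theorem~\ref{thm:GPR} is again a Gaussian process $\mathcal{GP}(m^i,c^i)$; in particular its marginal at $k^*$ is $\mathcal{N}(m^i(k^*),\sigma^i(k^*)^2)$, so Lemma~\ref{lem:pointwiseApprox} applies verbatim there.

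First I would introduce the two relevant events,
$$B=\left\{|M(k^*)-m^i(k^*)|\leq \Phi^{-1}\left(1-\tfrac{\delta}{2}\right)\sigma^i(k^*)\right\},\qquad A=\left\{\forall~k\in B_\varepsilon(k^*)\colon |M(k^*)-M(k)|\leq L\|k^*-k\|\right\}.$$
Lemma~\ref{lem:pointwiseApprox} evaluated at $k=k^*$ gives $P(B\mid \hat{M}(\mathbb{D}^i)=\hat{\mu}(\mathbb{D}^i))\geq 1-\delta$, while the Lipschitz hypothesis is precisely the statement that $A$ has full probability.

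The core of the argument is a deterministic inclusion. On $A\cap B$, for every $k\in B_\varepsilon(k^*)$ the triangle inequality yields
$$|M(k)-m^i(k^*)|\leq |M(k)-M(k^*)|+|M(k^*)-m^i(k^*)|\leq L\|k-k^*\|+\Phi^{-1}\left(1-\tfrac{\delta}{2}\right)\sigma^i(k^*),$$
so $A\cap B$ is contained in the target event $E$. Since $E\supseteq A\cap B$, it suffices to bound $P(A\cap B\mid\hat{M}(\mathbb{D}^i)=\hat{\mu}(\mathbb{D}^i))$ from below; using the conditional inequality $P(A\cap B)\geq P(B)-P(A^c)$, the claim follows once $A$ is shown to have full conditional probability.

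The two points requiring care constitute the only genuine (and still modest) obstacles. First, the hypothesis is naturally read as: the sample paths of $M$ are $L$-Lipschitz on $B_\varepsilon(k^*)$ almost surely, so that $A$ is a single full-probability event; if instead one is only given the weaker ``for each $k$, $P$-a.s.'' form, I would fix a countable dense subset of $B_\varepsilon(k^*)$ and extend to the whole ball using almost sure path-continuity of $M$ (which the chosen kernel guarantees, cf.\ the discussion of regular versions in Section~\ref{sec:learning-estimate-gpr}). Second, one must check that the unconditional null event $A^c$ remains null after conditioning on the data; this I would justify via the regular conditional distribution (disintegration): $P(A^c)=0$ forces $P(A^c\mid \hat{M}(\mathbb{D}^i)=\hat{\mu}(\mathbb{D}^i))=0$ for $P$-almost every observed value, which covers the observed $\hat{\mu}(\mathbb{D}^i)$. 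Combining, $P(E\mid\text{data})\geq P(A\cap B\mid\text{data})\geq P(B\mid\text{data})-P(A^c\mid\text{data})\geq 1-\delta$, as required.
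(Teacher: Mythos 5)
Your proof is correct and follows essentially the same route as the paper's: apply the pointwise bound of Lemma~\ref{lem:pointwiseApprox} at the anchor $k^*$, then extend to all of $B_\varepsilon(k^*)$ via the triangle inequality and the Lipschitz hypothesis. The only difference is that you make explicit two measure-theoretic points (that the Lipschitz event survives as a full-probability event under the posterior, and that null sets remain null after conditioning) which the paper's proof passes over silently.
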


\begin{proof}
Let $k^*\in\mathbb{D}$ be fixed. Applying the pointwise approximation error for $k^*$, we have
\begin{equation*}
P\left(|M(k^*)-m^i(k^*)|\leq \Phi^{-1}\left(1-\frac{\delta}{2}\right)\sigma^i(k^*)\mid \hat{M}(\mathbb{D}^i)=\hat{\mu}(\mathbb{D}^i)\right)\geq 1-\delta.
\end{equation*}
For all $k\in\mathbb{D}$, triangular inequality and Lipschitz assumption imply
\begin{align*}
|M(k)-m^i(k^*)|&\leq |M(k)-M(k^*)|+|M(k^*)-m^i(k^*)|\leq L\|k-k^*\|+|M(k^*)-m^i(k^*)|.
\end{align*}
We conclude
\begin{align*}
P\left(\forall~k\in B_\varepsilon(k^*)\colon |M(k)-m^i(k^*)|\leq \Phi^{-1}\left(1-\frac{\delta}{2}\right)\sigma^i(k^*)+L\|k-k^*\|\mid \hat{M}(\mathbb{D}^i)=\hat{\mu}(\mathbb{D}^i)\right)\geq 1-\delta
\end{align*}
\end{proof}

The local credible band gives rise to a local sandwich principle; we can upper bound the approximation error $\hat{\mathcal{D}}^i$ locally by intersecting it with $B_\varepsilon(k^*):=\{k\in\mathbb{D}\colon \|k-k^*\|<\varepsilon\}$. 

\begin{corollary}[Local Credible Band for the Acceptable Design and Error Bound]\label{cor:error-local}
In the setting of Proposition \ref{thm:locCredBand}, let 
$m^i_{\pm,k^*}(k):=m^i(k^*)\pm \Phi^{-1}\left(1-\frac{\delta}{2}\right)\sigma^i(k^*)\pm L\|k-k^*\|$ and define the estimators $\hat{\mathcal{D}}^i=\{k\in\mathbb{D}\colon m^i(k)\geq \gamma\}$, $\hat{\mathcal{D}}^i_{-,k^*}=\{k\in\mathbb{D}\colon m^i_{-,k^*}(k)\geq \gamma\}$, and $\hat{\mathcal{D}}^i_{+,k^*}=\{k\in\mathbb{D}\colon m^i_{+,k^*}(k)\geq \gamma\}$. Let $\mathfrak{D}=\{k\in\mathbb{D}\colon M(k)\geq \gamma\}$ be the corresponding prior for $\mathcal{D}$. Then, for all $k^*\in\mathbb{D}$, it holds
\begin{align*}
P(\hat{\mathcal{D}}^i_{-,k^*}\cap B_\varepsilon(k^*)\subseteq \mathfrak{D}\cap B_\varepsilon(k^*)\subseteq \hat{\mathcal{D}}^i_{+,k^*}\cap B_\varepsilon(k^*)\mid \hat{M}(\mathbb{D}^i)=\hat{\mu}(\mathbb{D}^i))\geq 1-\delta
\end{align*}
and
\begin{align*}
P\bigg(d_N(\mathfrak{D}\cap B_\varepsilon(k^*), \hat{\mathcal{D}}^i\cap B_\varepsilon(k^*))&\leq d_N\left((\hat{\mathcal{D}}^i_{+,k^*}\cap B_\varepsilon(k^*),~ \hat{\mathcal{D}}^i_{-,k^*})\cap B_\varepsilon(k^*)\right)\\
&\mid \hat{M}(\mathbb{D}^i)=\hat{\mu}(\mathbb{D}^i)\bigg)\geq 1-\delta.
\end{align*}
\end{corollary}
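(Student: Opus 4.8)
The plan is to follow the same two-step pattern as in Corollary~\ref{cor:error}, but with every set intersected with the ball $B_\varepsilon(k^*)$ and with Proposition~\ref{thm:locCredBand} playing the role of the uniform credible band~\eqref{eq:credBand}. Throughout I would work conditionally on $\hat{M}(\mathbb{D}^i)=\hat{\mu}(\mathbb{D}^i)$. Let $A$ denote the event appearing in Proposition~\ref{thm:locCredBand}; by that proposition $P(A\mid\hat{M}(\mathbb{D}^i)=\hat{\mu}(\mathbb{D}^i))\geq 1-\delta$, and on $A$ the inequality $|M(k)-m^i(k^*)|\leq \Phi^{-1}(1-\tfrac{\delta}{2})\sigma^i(k^*)+L\|k-k^*\|$ rearranges, for every $k\in B_\varepsilon(k^*)$, into the two-sided bound $m^i_{-,k^*}(k)\leq M(k)\leq m^i_{+,k^*}(k)$.

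First I would deduce the inclusion asserted in the first display. Fix an outcome in $A$. If $k\in \hat{\mathcal{D}}^i_{-,k^*}\cap B_\varepsilon(k^*)$ then $m^i_{-,k^*}(k)\geq\gamma$, so $M(k)\geq m^i_{-,k^*}(k)\geq\gamma$, i.e.\ $k\in\mathfrak{D}\cap B_\varepsilon(k^*)$; conversely if $k\in\mathfrak{D}\cap B_\varepsilon(k^*)$ then $M(k)\geq\gamma$, so $m^i_{+,k^*}(k)\geq M(k)\geq\gamma$, i.e.\ $k\in\hat{\mathcal{D}}^i_{+,k^*}\cap B_\varepsilon(k^*)$. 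Hence the chain $\hat{\mathcal{D}}^i_{-,k^*}\cap B_\varepsilon(k^*)\subseteq \mathfrak{D}\cap B_\varepsilon(k^*)\subseteq \hat{\mathcal{D}}^i_{+,k^*}\cap B_\varepsilon(k^*)$ holds on all of $A$, which is the first assertion.

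For the error bound I would invoke the sandwich principle, Lemma~\ref{thm:errorBound}, applied conditionally with $\mathcal{D}\rightsquigarrow\mathfrak{D}\cap B_\varepsilon(k^*)$, $\hat{\mathcal{D}}^i\rightsquigarrow\hat{\mathcal{D}}^i\cap B_\varepsilon(k^*)$ and $\hat{\mathcal{D}}^i_\pm\rightsquigarrow\hat{\mathcal{D}}^i_{\pm,k^*}\cap B_\varepsilon(k^*)$. The probabilistic hypothesis of the lemma is exactly the inclusion just established. What remains is its \emph{deterministic} hypothesis $\hat{\mathcal{D}}^i_{-,k^*}\cap B_\varepsilon(k^*)\subseteq \hat{\mathcal{D}}^i\cap B_\varepsilon(k^*)\subseteq \hat{\mathcal{D}}^i_{+,k^*}\cap B_\varepsilon(k^*)$. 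The crucial observation is that the posterior mean $m^i$ is itself $L$-Lipschitz on $B_\varepsilon(k^*)$: since $m^i(\cdot)=\mathbb{E}[M(\cdot)\mid \hat{M}(\mathbb{D}^i)=\hat{\mu}(\mathbb{D}^i)]$ and $|M(k)-M(k^*)|\leq L\|k-k^*\|$ almost surely, passing the bound through the conditional expectation gives $|m^i(k)-m^i(k^*)|\leq L\|k-k^*\|$. Using $\Phi^{-1}(1-\tfrac{\delta}{2})\sigma^i(k^*)\geq 0$, this yields $m^i_{-,k^*}(k)\leq m^i(k)\leq m^i_{+,k^*}(k)$ for $k\in B_\varepsilon(k^*)$, hence the required deterministic sandwiching of the plug-in estimator. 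Lemma~\ref{thm:errorBound} then bounds $d_N(\mathfrak{D}\cap B_\varepsilon(k^*),\hat{\mathcal{D}}^i\cap B_\varepsilon(k^*))$ by $\mathrm{vol}\big((\hat{\mathcal{D}}^i_{+,k^*}\cap B_\varepsilon(k^*))\setminus(\hat{\mathcal{D}}^i_{-,k^*}\cap B_\varepsilon(k^*))\big)$, which equals $d_N(\hat{\mathcal{D}}^i_{+,k^*}\cap B_\varepsilon(k^*),\hat{\mathcal{D}}^i_{-,k^*}\cap B_\varepsilon(k^*))$ since $m^i_{-,k^*}\leq m^i_{+,k^*}$ forces the inner set inside the outer one; this is the second assertion.

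The step I expect to be the real obstacle is precisely this deterministic sandwiching of $\hat{\mathcal{D}}^i$. In the uniform setting of Corollary~\ref{cor:error} it is free because the bounds $m^i_\pm$ are centered at $m^i(k)$, whereas here $m^i_{\pm,k^*}$ is centered at the \emph{fixed} value $m^i(k^*)$ and merely widened by a Lipschitz cone, so the inclusion is no longer automatic and hinges on transferring the almost-sure Lipschitz property of $M$ to $m^i$. Some care is also needed because the conditioning event $\{\hat{M}(\mathbb{D}^i)=\hat{\mu}(\mathbb{D}^i)\}$ is a null event, so one should argue via the regular conditional law $\mathcal{GP}(m^i,c^i)$ of Theorem~\ref{thm:GPR}, under which $M$ remains almost surely $L$-Lipschitz, rather than with naive conditioning.
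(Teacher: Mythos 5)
Your proposal is correct and follows the route the paper intends: Proposition~\ref{thm:locCredBand} supplies the probabilistic inclusion $\hat{\mathcal{D}}^i_{-,k^*}\cap B_\varepsilon(k^*)\subseteq \mathfrak{D}\cap B_\varepsilon(k^*)\subseteq \hat{\mathcal{D}}^i_{+,k^*}\cap B_\varepsilon(k^*)$, and Lemma~\ref{thm:errorBound} converts it into the Nikodym bound. The paper's own proof is literally ``Clear.'', so there is nothing to compare step by step; but you have put your finger on the one point that is \emph{not} clear, namely the deterministic hypothesis $\hat{\mathcal{D}}^i_{-,k^*}\cap B_\varepsilon(k^*)\subseteq \hat{\mathcal{D}}^i\cap B_\varepsilon(k^*)\subseteq \hat{\mathcal{D}}^i_{+,k^*}\cap B_\varepsilon(k^*)$ that Lemma~\ref{thm:errorBound} genuinely needs (its proof decomposes $\mathrm{vol}(\hat{\mathcal{D}}^i_+\setminus\hat{\mathcal{D}}^i_-)$ using exactly this chain). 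Unlike in Corollary~\ref{cor:error}, where the bounds are centered at $m^i(k)$ and the inclusion is free, here it requires that $m^i$ be $L$-Lipschitz on $B_\varepsilon(k^*)$, and your derivation of this from the almost-sure Lipschitz property of $M$ by passing through the (regular) conditional expectation is the right argument; together with $\Phi^{-1}(1-\delta/2)\sigma^i(k^*)\geq 0$ it yields $m^i_{-,k^*}\leq m^i\leq m^i_{+,k^*}$ on the ball, and the final identification of the set difference with the symmetric difference via $m^i_{-,k^*}\leq m^i_{+,k^*}$ is also correct. So your proof is not merely a restatement of the paper's; it closes a small but real gap that the authors' ``Clear.'' leaves open.
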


\begin{proof}
Clear.
\end{proof}

The preceding statement tells us that widening the pointwise lower and upper approximations \emph{locally} allows to probabilistically bound the approximation error of the acceptable design \emph{locally}. In analogy to the uniform error bounds by \textcite{Lederer2019}, this requires an additional Lipschitz assumption with a (local) Lipschitz constant $L$ which, in practice, is typically unknown.

\subsection{Proofs}

\subsubsection{Proof of Lemma \ref{thm:errorBound}}\label{proof:thm:errorBound}

We compute\footnotesize
\begin{align*}
\mathrm{vol}\left(\hat{\mathcal{D}}^i\Delta \mathcal{D}\right)&=\mathrm{vol}\left(\left(\hat{\mathcal{D}}^i\setminus \mathcal{D}\right)\cup \left(\mathcal{D}\setminus \hat{\mathcal{D}}^i\right)\right) =\mathrm{vol}\left(\hat{\mathcal{D}}^i\setminus \mathcal{D}\right)+\mathrm{vol}\left(\mathcal{D}\setminus \hat{\mathcal{D}}^i\right)\\
&\leq \mathrm{vol}\left(\hat{\mathcal{D}}^i\setminus \hat{\mathcal{D}}^i_-\right)+\mathrm{vol}\left(\hat{\mathcal{D}}^i_+\setminus \hat{\mathcal{D}}^i\right) =\mathrm{vol}\left(\hat{\mathcal{D}}^i\setminus \hat{\mathcal{D}}^i_-\right)+\mathrm{vol}\left(\hat{\mathcal{D}}^i_+\setminus \hat{\mathcal{D}}^i_-\right)-\mathrm{vol}\left(\hat{\mathcal{D}}^i\setminus \hat{\mathcal{D}}^i_-\right)\\
&=\mathrm{vol}\left(\hat{\mathcal{D}}^i_+ \setminus \hat{\mathcal{D}}^i_-\right).
\end{align*}\normalsize
Depending on the type of inclusion, also the inequality is guaranteed $P$-a.s. or with probability greater than $1-\delta$.

\subsubsection{Proof of Corollary \ref{cor:error}}\label{proof:cor:error}

It is clear that, for all $k\in\mathbb{D}$, $m^i_-(k)\leq m^i(k)\leq m^i_+(k)$. This implies the inclusion of the corresponding set estimators. More precisely, we have
\begin{align*}
\hat{\mathcal{D}}^i_-&=\{k\in\mathbb{D}\colon m^i_-(k)\geq \gamma\}\subseteq \{k\in\mathbb{D}\colon m^i(k)\geq \gamma\}=\hat{\mathcal{D}}^i\subseteq \{k\in\mathbb{D}\colon m^i_+(k)\geq \gamma\}=\hat{\mathcal{D}}^i_+.
\end{align*}
Correspondingly, $P\left(\forall~k\in\mathbb{D}\colon m^i_-(k)\leq M(k)\leq m^i_+(k)\right)\geq 1-\delta$ implies
\begin{equation*}
P(\hat{\mathcal{D}}^i_-\subseteq \mathfrak{D}\subseteq \hat{\mathcal{D}}^i_+)\geq 1-\delta.
\end{equation*}
Thus, Theorem \ref{thm:errorBound} yields the claimed error bound $P(\mathrm{vol}(\mathfrak{D}\Delta \hat{\mathcal{D}}^i)\leq \mathrm{vol}(\hat{\mathcal{D}}^i_+\Delta \hat{\mathcal{D}}^i_-))\geq 1-\delta$ and, due to the inclusion $\hat{\mathcal{D}}^i_-\subseteq \hat{\mathcal{D}}^i_+$, it follows
\begin{align*}
\mathrm{vol}(\hat{\mathcal{D}}^i_+\Delta \hat{\mathcal{D}}^i_-)&=\mathrm{vol}(\hat{\mathcal{D}}^i_+\setminus \hat{\mathcal{D}}^i_-)\\
&=\mathrm{vol}\left(\{k\in\mathbb{D}\colon m^i_+(k)\geq \gamma\}\setminus \{k\in\mathbb{D}\colon m^i_-(k)< \gamma\}\right)\\
&=\mathrm{vol}\left\{k\in\mathbb{D}\colon m^i_+(k)\geq \gamma > m^i_-(k)\right\}.
\end{align*}

\subsubsection{Proof of Lemma \ref{lem:pointwiseApprox}}\label{proof:lem:pointwiseApprox}

The GPR based on observed data $\hat{\mu}(\mathbb{D}^i)$ yields
\begin{equation*}
\forall~k\in\mathbb{D}\colon M(k)\mid \hat{M}(\mathbb{D}^i)=\hat{\mu}(\mathbb{D}^i)\sim\mathcal{N}\left(m^i(k),(\sigma^i(k))^2\right).
\end{equation*}
The pointwise approximation error directly follows from standard confidence intervals for the mean of the normal distribution $\mathcal{N}\left(m^i(k),(\sigma^i(k))^2\right)$.

\subsection{Algorithms}

Algorithm \ref{alg:PEstInitial} includes a pre-processing of the data: Subtracting the sample mean resembles the prior assumption $m\equiv 0$ (see also \textcite{Schulz2018}); the additional standardization of the data by its sample standard deviation serves to circumvent numerical issues. In our implementation, we use Matlab's built-in optimization routine. In case of numerical issues, we restart with a random initial point or reduce the number of points considered in the log likelihood. 

\begin{algorithm}[!htbp]
\caption{Pre-Processing and Estimation of Hyperparameters.}
\label{alg:PEstInitial}
\begin{algorithmic}
\STATE{\textbf{Input:}
\begin{itemize}
\item Noisy data $(k,\hat{\mu}_k)_{k\in\mathbb{D}^0}$ such that $\hat{\mu}_k=\mu(k)+\varepsilon_k$ with $\varepsilon_k\sim \mathcal{N}(0,\tau_k^2)$ independent,
\item prior mean $m\equiv 0$,
\item prior covariance function $c\colon\mathbb{D}\times\mathbb{D}\to[0,\infty)$ depending on hyperparameters $\sigma_c,l>0$.
\end{itemize}}
\STATE{\textbf{Pre-Process Data:} Let $\bar{\mu}^0=1/|\mathbb{D}^0| \sum_{k\in\mathbb{D}^0}\hat{\mu}_k$, $\bar{\varsigma}^0=\sqrt{1/(|\mathbb{D}^0|-1)\sum_{k\in\mathbb{D}^0}(\hat{\mu}_k-\bar{\mu}^0)^2}$  and define $\hat{\nu}_k=(\hat{\mu}_k-\bar{\mu}^0)/\bar{\varsigma}^0$, $k\in\mathbb{D}^0$.}
\STATE{\textbf{Model Selection:} Determine $(\hat{\sigma}_c,\hat{l})$ by maximizing the (log) marginal likelihood
\begin{multline*}
\ell(\hat{\nu}(\mathbb{D}^0);\sigma_c,l)=-\frac{1}{2}\hat{\nu}(\mathbb{D}^0)^\top \left( \Sigma(\mathbb{D}^0,\mathbb{D}^0) + \operatorname{diag}\left(\tau_1^2,\dots,\tau_{|\mathbb{D}^0|}^2\right)\right)^{-1}\hat{\nu}(\mathbb{D}^0)\\
-\frac{1}{2}\det\left(\Sigma(\mathbb{D}^0,\mathbb{D}^0) + \operatorname{diag}\left(\tau_1^2,\dots,\tau_{|\mathbb{D}^0|}^2\right)\right)-\frac{|\mathbb{D}^0|}{2}\log(2\pi).
\end{multline*}}
\STATE{\textbf{Output:} $\bar{\mu}^0$, $\bar{\varsigma}^0$, $\hat{\sigma}_c$, $\hat{l}$.}
\end{algorithmic}
\end{algorithm}

\begin{algorithm}[!htbp]
\caption{Estimation Procedure with Gaussian Process Regression.}
\label{alg:PEstGPR}
\begin{algorithmic}
\STATE{\textbf{Input:}
\begin{itemize}
\item Noisy data $(k,\hat{\mu}_k)_{k\in\mathbb{D}^i}$ such that $\hat{\mu}_k=\mu(k)+\varepsilon_k$ with $\varepsilon_k\sim \mathcal{N}(0,\tau_k^2)$ independent,
\item $\bar{\mu}^0$, $\bar{\varsigma}^0$, $\hat{\sigma}_c$, $\hat{l}$ from Algorithm \ref{alg:PEstInitial}
\end{itemize}}
\STATE{\textbf{Transformation:} Define $\hat{\nu}_k=(\hat{\mu}_k-\bar{\mu}^0)/\bar{\varsigma}^0$, $k\in\mathbb{D}^i$.}
\STATE{\textbf{Bayesian Update:} Based on $(\hat{\sigma}_c,\hat{l})$ and the data $(k,\hat{\nu}_k)_{k\in\mathbb{D}^i}$, compute $m_\nu^i\colon\mathbb{D}\to\mathbb{R}$ and $\sigma^i_\nu\colon\mathbb{D}\to[0,\infty)$ given by $\sigma_\nu^i(k)=\sqrt{c_\nu(k,k)}$ according to Theorem \ref{thm:GPR}.}
\STATE{\textbf{Retransformation:} Define $m^i\colon\mathbb{D}\to\mathbb{R}$, $m^i(k):=m_\nu^i(k)\bar{\varsigma}^0+\bar{\mu}^0$ and $\sigma^i\colon\mathbb{D}\to[0,\infty)$, $\sigma^i(k):=\sigma_\nu^i(k)\bar{\varsigma}^0$.}
\STATE{\textbf{Output:} $\hat{\mathcal{D}}^i=\{k\in\mathbb{D}\colon m^i(k)\geq \gamma\}$, $m^i$, $\sigma^i$.}
\end{algorithmic}
\end{algorithm}

\section{Supplement to the Case Studies}

\subsection{Companion to Section~\ref{sec:casestudyI}: Traffic Simulation}\label{app:TrafficSimulation}

The implementations of our traffic models adhere to the structure of the following pseudo-code.
\begin{algorithm}[!htp]
\caption{Basic Traffic Simulation}
\label{alg:TrafficSim}
\begin{algorithmic}
\STATE{\textbf{Input:}
\begin{itemize}
\item Adjacency matrix: $A^E\in\{0,1\}^{|V|\times|V|}$ for a set of enumerated nodes $V=\{1,\dots,|V|\}$.
\item Initial traffic configuration: $\rho_{(u,v,w)}(0)\geq 0 $ for all $v\in V$, $u\in \cI(v)$, $u\neq w\in\cO(v)$
\item Parameters (including terminal time: $T\in\mathbb{N}$)
\end{itemize}}
\FOR{$t=0,\dots,T-1$}
\STATE{\textbf{Phase 1: Compute sending and receiving constraints.}}
\FOR{$v\in V$}
\FOR{$u\in\cI(v)$ and $u\neq w\in\cO(v)$}
\STATE{Compute sending function $S_{(u, v, w)}  \left((\rho_{(u', v, w'), k'} (t))_{u'\in \cI (v), w'\in \cO(v) } \right)$ and receiving function $R_{(u,v,w)} \left((\rho_{(u', v, w')} (t))_{u'\in \cI (v), w'\in \cO(v) } \right)$.}
\ENDFOR
\ENDFOR
\STATE{\textbf{Phase 2: Compute outflows.}}
\FOR{$u\in V$}
\FOR{$x\in\cI(u)$ and $x\neq v\in\cO(u)$}
\STATE{Compute $q_{(x,u,v)}^\mathrm{out}(t+1)$.}
\ENDFOR
\ENDFOR
\STATE{\textbf{Phase 3: Compute inflows.}}
\FOR{$v\in V$}
\FOR{$u\in\cI(v)$ and $u\neq w\in\cO(v)$}
\STATE{Compute $q_{(u,v,w)}^\mathrm{in}(t+1)=\sum_{x\in \cI(u)} f_{(x,u, v) \to w}(t+1) \cdot q_{(x,u, v)}^\mathrm{out}(t+1)$.}
\ENDFOR
\ENDFOR
\STATE{\textbf{Phase 4: Compute source/sink flows.}}
\FOR{$v\in V$}
\FOR{$u\in\cI(v)$ and $u\neq w\in\cO(v)$}
\STATE{Compute $q_{(u,v,w)}^\mathrm{net}(t+1)$.}
\ENDFOR
\ENDFOR
\STATE{\textbf{Phase 5: Update densities.}}
\FOR{$v\in V$}
\FOR{$u\in\cI(v)$ and $u\neq w\in\cO(v)$}
\STATE{Compute $\rho_{(u,v,w)}(t+1)=\rho_{(u,v,w)}(t)+q^\mathrm{in}_{(u,v,w)}(t+1)-q^\mathrm{out}_{(u,v,w)}(t+1)+q^\mathrm{net}_{(u,v,w)}(t+1)$.}
\ENDFOR
\ENDFOR
\ENDFOR
\end{algorithmic}
\end{algorithm}

\subsection{Urban Network}

\subsubsection{Companion to Section~\ref{sec:casestudyI}: Traffic Light Implementation}\label{sec:trafficLightImplementation}

Let $v\in\mathcal{R}=\{14,16\}$. For any $u\in \cI(v)$ and $u\neq w\in\cO(v)$, let $LA_{(u,v,w)}\in[0,1]$ model the traffic light adjustment for traffic users with traveling direction $(u,v,w)$ which is based on the respective traffic light signal $LS_{(u,v,w)}\in\{0,1\}$. 

In the following, we identify $\{13,20,15,9\}$ and $\{15,21,17,10\}$ with $\mathbb{Z}_4$ and set
\begin{align*}
S_{(u,v,u+1)}\left( \left(\rho_{(u',v,w')} \right)_{u'\in\cI(v),w'\in\cO(v)},LA_{(u,v,u+1)}\right)&=\min\left\{s^\mathrm{max}_v,a_vLA_{(u,v,u+1)}\rho_{(u,v,u+1)} \right\},\\
S_{(u,v,u+2)}\left( \left(\rho_{(u',v,w')} \right)_{u'\in\cI(v),w'\in\cO(v)},LA_{(u,v,u+2)}\right)&=\min\left\{s^\mathrm{max}_v,a_vLA_{(u,v,u+2)}\rho_{(u,v,u+2)} \right\},\\
S_{(u,v,u+3)}\left( \left(\rho_{(u',v,w')} \right)_{u'\in\cI(v),w'\in\cO(v)},LA_{(u,v,u+3)}\right)&=
\min\Bigg\{s^\mathrm{max}_v,a_vLA_{(u,v,u+3)}\rho_{(u,v,u+3)}\cdot\\
&\exp\left( -\zeta_v\left(\rho_{(u+2,v,u)} +\rho_{(u+2,v,u+3)}\right) \right)\Bigg\},\\
R_{(u,v,w)}\left( \left(\rho_{(u',v,w')}\right)_{u'\in\cI(v),w'\in\cO(v)}\right)&=\max\left( b_v \left(\frac{\rho^\mathrm{max}_v}{4}-\sum_{w'\in\cO(v)} \rho_{(u,v,w')} \right),0\right).
\end{align*}

We implement the signal policy as follows. Let $T^g\in\mathbb{N}$ be the duration of the green phase and $T^s\in\mathbb{N}$ the shift between the green times of the two traffic lights. Let $\mathcal{I}_{14}=\{13,15\}$, $\mathcal{J}_{14}=\{9,20\}$ and $\mathcal{I}_{16}=\{15,17\}$, $\mathcal{J}_{16}=\{10,21\}$. We set
\begin{equation*}
LS_{(u,14,w)}(t)=\begin{cases}
1,& t\mod 2T^g\in\{0,1,\dots,T^g-1\},~u\in \mathcal{I}_{14},w\neq u,\\
0,& t\mod 2T^g\in\{0,1,\dots,T^g-1\},~u\in \mathcal{J}_{14},w\neq u,\\
0,& t\mod 2T^g\in\{T^g,T^g+1,\dots,2T^g-1\},~u\in \mathcal{I}_{14},w\neq u,\\
1,& t\mod 2T^g\in\{T^g,T^g+1,\dots,2T^g-1\},~u\in \mathcal{J}_{14},w\neq u.
\end{cases}
\end{equation*}
and
\begin{equation*}
LS_{(u,16,w)}(t)=\begin{cases}
1,& t+T^s\mod 2T^g\in\{0,1,\dots,T^g-1\},~u\in \mathcal{I}_{16},w\neq u,\\
0,& t+T^s\mod 2T^g\in\{0,1,\dots,T^g-1\},~u\in \mathcal{J}_{16},w\neq u,\\
0,& t+T^s\mod 2T^g\in\{T^g,T^g+1,\dots,2T^g-1\},~u\in \mathcal{I}_{16},w\neq u,\\
1,& t+T^s\mod 2T^g\in\{T^g,T^g+1,\dots,2T^g-1\},~u\in \mathcal{J}_{16},w\neq u.
\end{cases}
\end{equation*}
We assume that vehicles accelerate comfortably with $a^\mathrm{real}=\unit[1.5]{m/s^2}$ when a traffic lights switches from red to green. We introduce $t^\mathrm{safe}=2$ to model the acceleration delay caused by safety and reaction time and set
\begin{equation*}
LA_{(u,v,w)}(t)=\max\left\{0,\min\left\{1, \left(t^\mathrm{switch}_{(u,v,w)}(t)-t^\mathrm{safe}\right)\cdot t^\mathrm{real}\cdot \frac{\mathrm{a^\mathrm{real}}}{v^\mathrm{real}}\right\}\right\}\cdot LS_{(u,v,w)}(t)
\end{equation*}
with the time intervals since the last switch being
\begin{equation*}
t^\mathrm{switch}_{(u,v,w)}(t)=\inf\{s\in\mathbb{N}\colon LS_{(u,v,w)}(t)\neq LS_{(u,v,w)}(t-s)\}.
\end{equation*}

\subsubsection{Companion to Section~\ref{sec:casestudyI}: Net Flows}\label{sec:highwayNetworkDetailsA}

Net flows are modelled as follows: For $(u,v,w)\in \{(6,7,11), (24,23,19)\}$, we define autoregressive models of order 1:\footnotesize
\begin{align*}
q^\mathrm{ar}_{(u,v,w)}(t+1)= q^\mathrm{ar}_{(u,v,w)}(t)+\varepsilon_{(u,v,w)}(t+1)
\end{align*}\normalsize
where $\varepsilon_{(u,v,w)}(t+1)\sim\mathcal{N}\left(0,\sigma_{(u,v,w)}\right)$, $\sigma_{(u,v,w)}^2\geq 0$, and innovations are stochastically independent across time. The initial value is $q^\mathrm{ar}_{(u,v,w)}(0)=0$. The dependence of $\varepsilon_{(6,7,11)}(t+1)$ and $\varepsilon_{(24,23,19)}(t+1)$ is governed by a Frank copula, parametrized by $r\in\mathbb{R}\setminus\{0\}$. In order to respect non-negativity constraints and maximal densities, we set \footnotesize
\begin{align*}
q^\mathrm{net}_{(u,v,w)}(t+1)=\min\bigg(& \max\big(q^\mathrm{ar}_{(u,v,w)}(t+1),~ q^\mathrm{out}_{(u,v,w)}(t+1)-q^\mathrm{in}_{(u,v,w)}(t+1)-\rho_{(u,v,w)}(t)\big),\\
& \rho^\mathrm{max}_v +q^\mathrm{out}_{(u,v,w)}(t+1)-q^\mathrm{in}_{(u,v,w)}(t+1)-\rho_{(u,v,w)}(t) \bigg).
\end{align*}\normalsize

\subsubsection{Companion to Section~\ref{sec:casestudyI}:  Dependence Parameter and Noise}\label{app:caseStudy1Comp}

To study the effects of the random environment, we set $T^g=10$ and $T^s=0$ and run simulations\footnote{GRP is based on the Matérn kernel. We also compared this to the squared exponential kernel which leads to almost the same results.} in the three-dimensional subset  $\left\{\left(r,\sigma_{(6,7,11)},\sigma_{(24,23,19)},10,0\right)\in\mathbb{D}\right\}$. Figure \ref{fig:network1AcceptableDependence} shows acceptable designs in terms of dependence structure and noise (where we set $\sigma_{(6,7,11)}=\sigma_{(24,23,19)}$) for the considered utility functions. The impact of the dependence parameter $r$ is small. In Figure \ref{fig:network1AcceptableNoise}, we set $r=0$, corresponding to independent noise at the sources. The system performance decreases with increasing noise present in the system.

\begin{figure}[!htbp]

\begin{minipage}{\textwidth}
\centering

\includegraphics[scale=0.4,trim={4.5cm 8cm 5cm 8cm},clip]{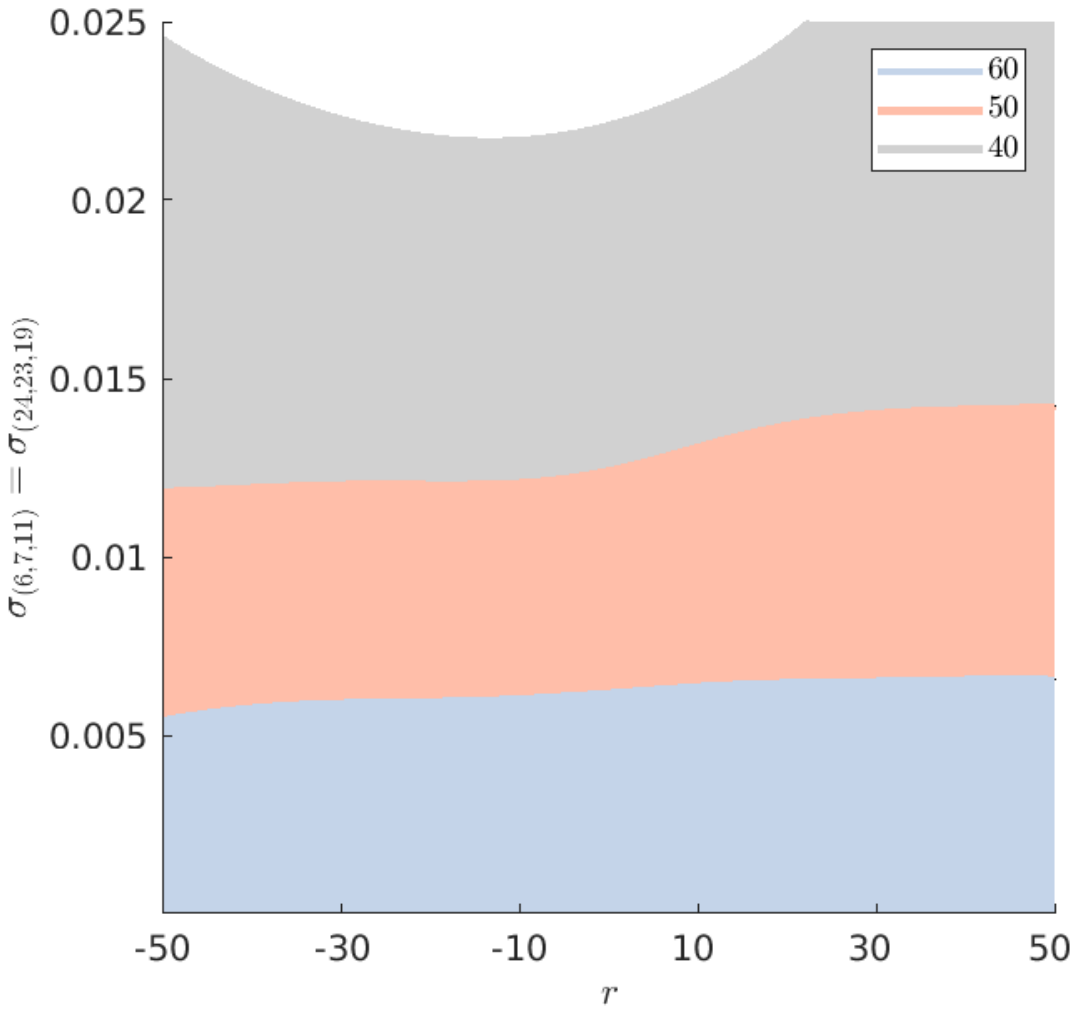}

\subcaption{$u(x)=x$}

\end{minipage}

\begin{minipage}{0.3\textwidth}
\centering

\includegraphics[scale=0.35,trim={4.5cm 8cm 5cm 8cm},clip]{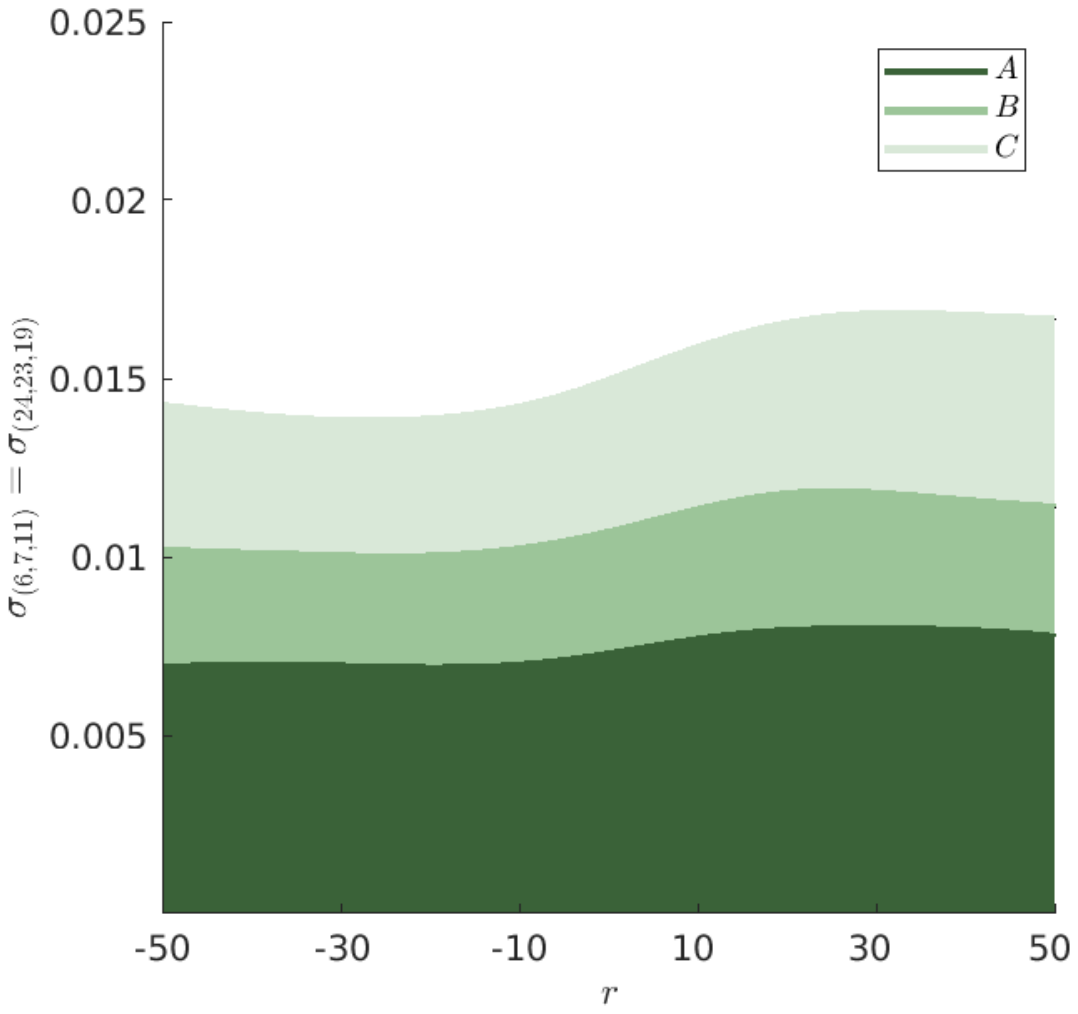}

\subcaption{$u(x)=0.1(x-60)_+ - 0.9(x-60)_-$}

\end{minipage}\hspace{0.5cm}\begin{minipage}{0.3\textwidth}
\centering

\includegraphics[scale=0.35,trim={4.5cm 8cm 5cm 8cm},clip]{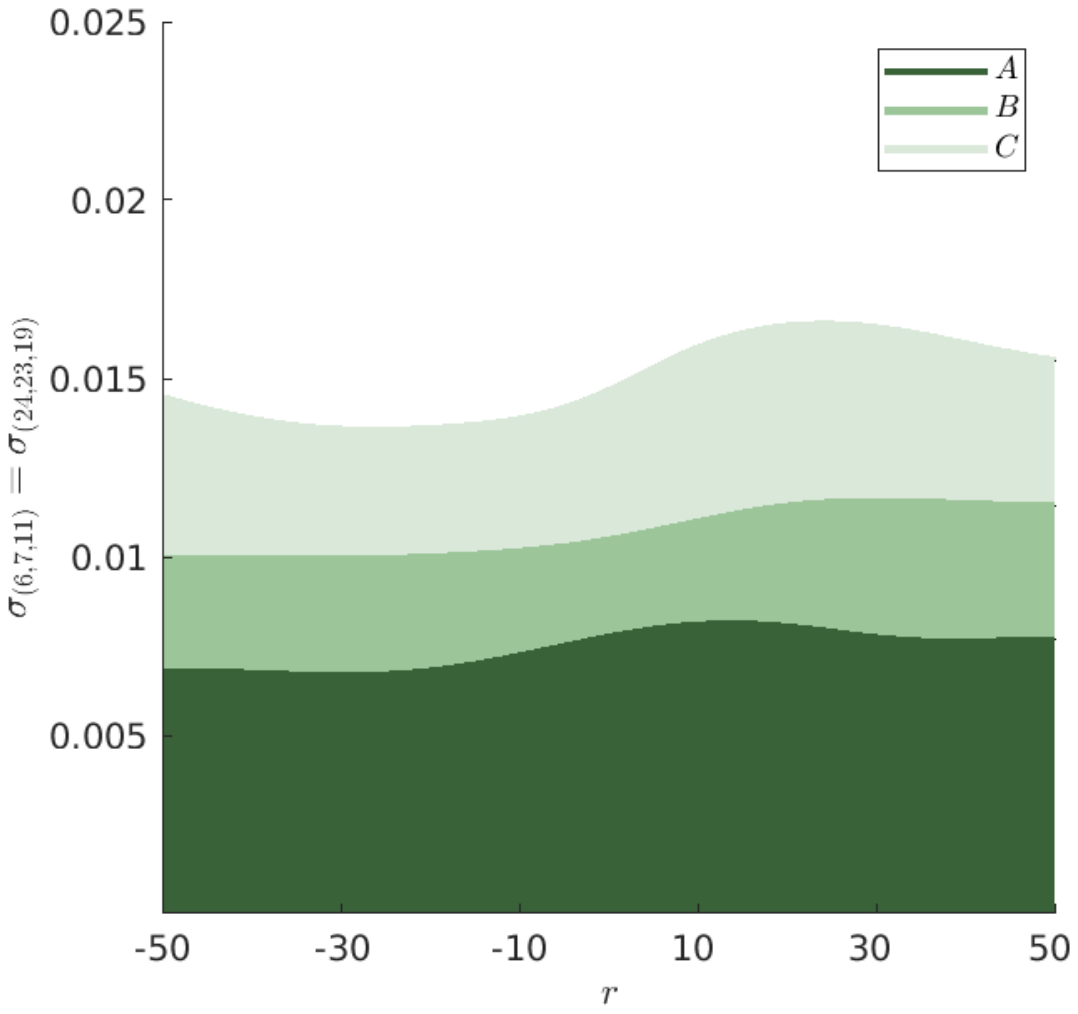}

\subcaption{$u(x)=0.2(x-60)_+ - 0.8(x-60)_-$}

\end{minipage}\hspace{0.5cm}\begin{minipage}{0.3\textwidth}
\centering

\includegraphics[scale=0.35,trim={4.5cm 8cm 5cm 8cm},clip]{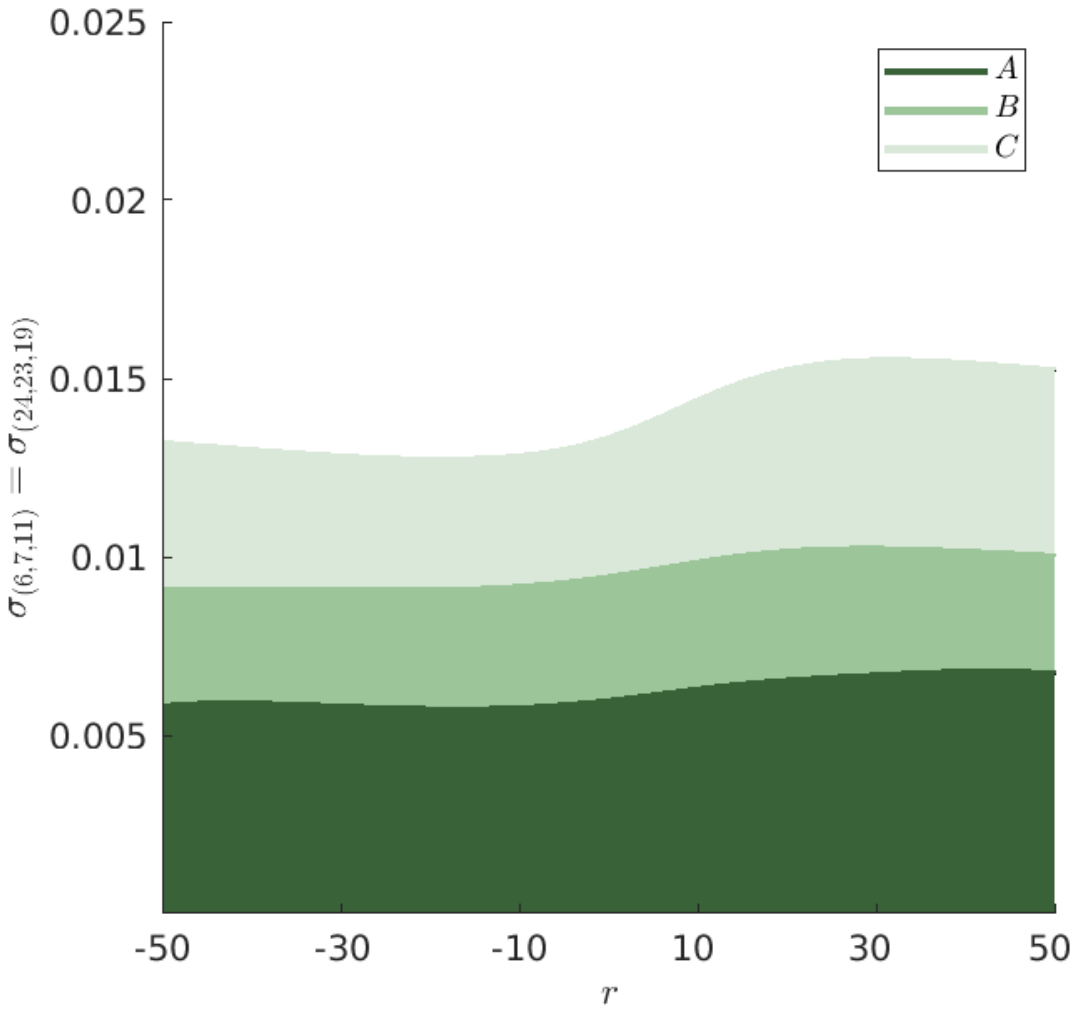}

\subcaption{$u(x)=\sqrt{x}$}

\end{minipage}

\begin{minipage}{0.3\textwidth}
\centering

\includegraphics[scale=0.35,trim={4.5cm 8cm 5cm 8cm},clip]{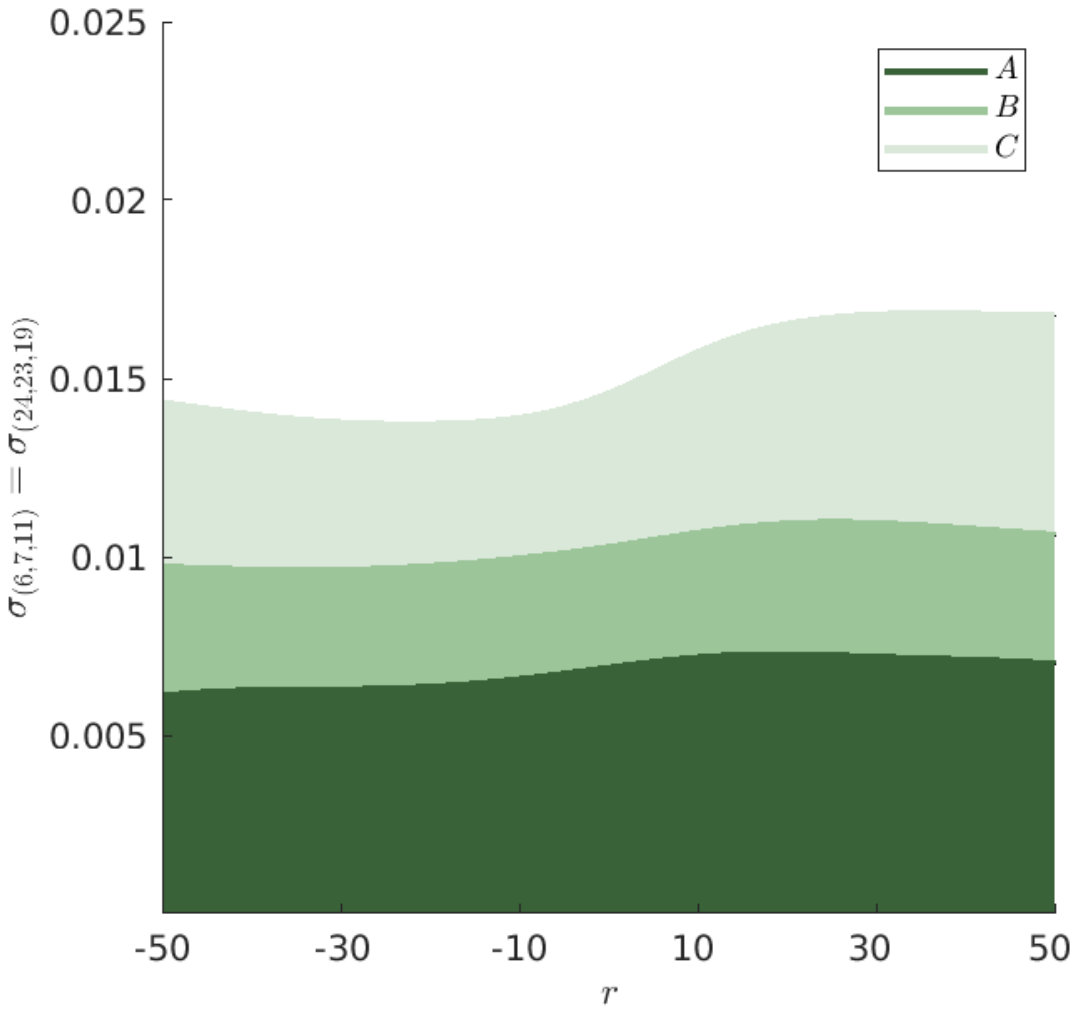}

\subcaption{$u(x)=-|x-2\cdot 60|^3\mathbbm{1}\{x\leq 2\cdot 60\}$}

\end{minipage}\hspace{0.5cm}\begin{minipage}{0.3\textwidth}
\centering

\includegraphics[scale=0.35,trim={4.5cm 8cm 5cm 8cm},clip]{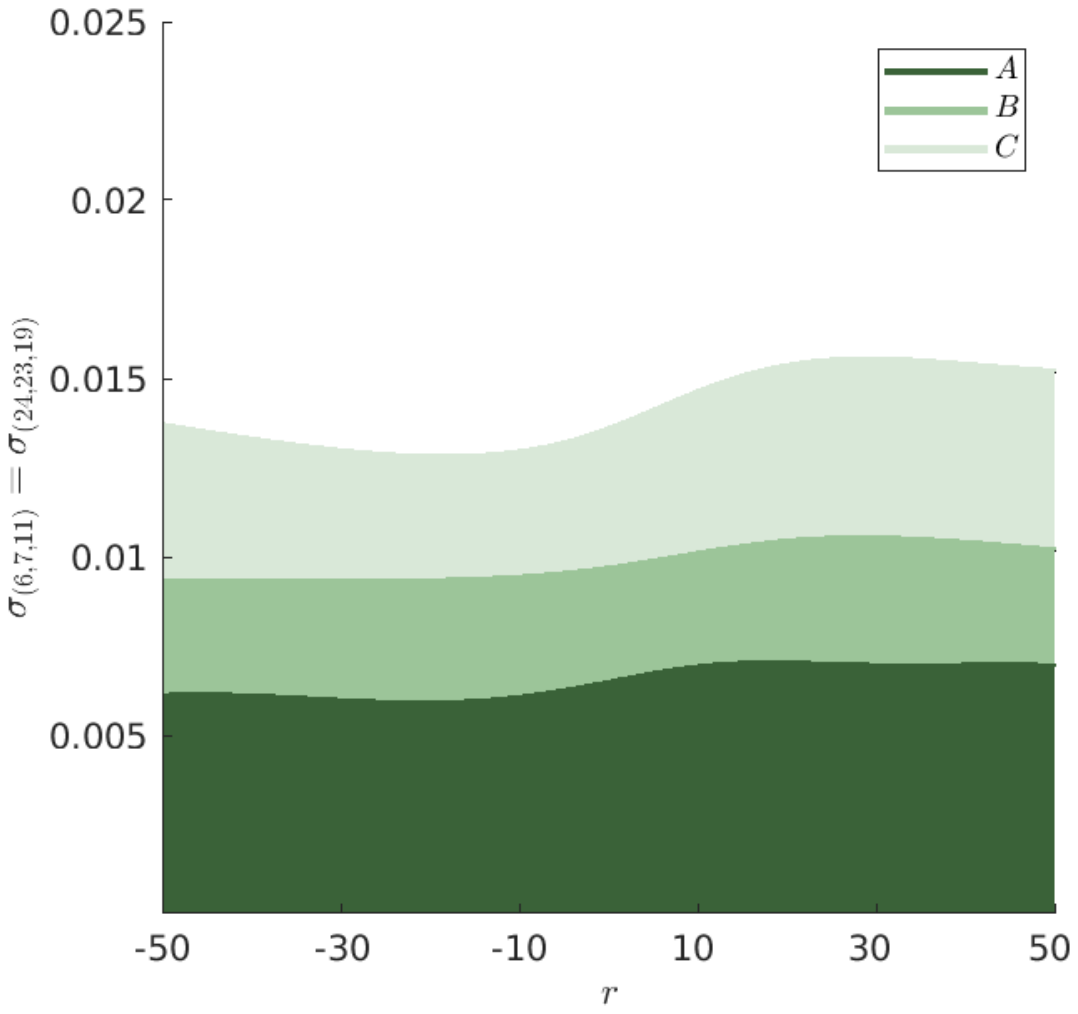}

\subcaption{$u(x)=-|x-2\cdot 60|^2\mathbbm{1}\{x\leq 2\cdot 60\}$}

\end{minipage}\hspace{0.5cm}\begin{minipage}{0.3\textwidth}
\centering

\includegraphics[scale=0.35,trim={4.5cm 8cm 5cm 8cm},clip]{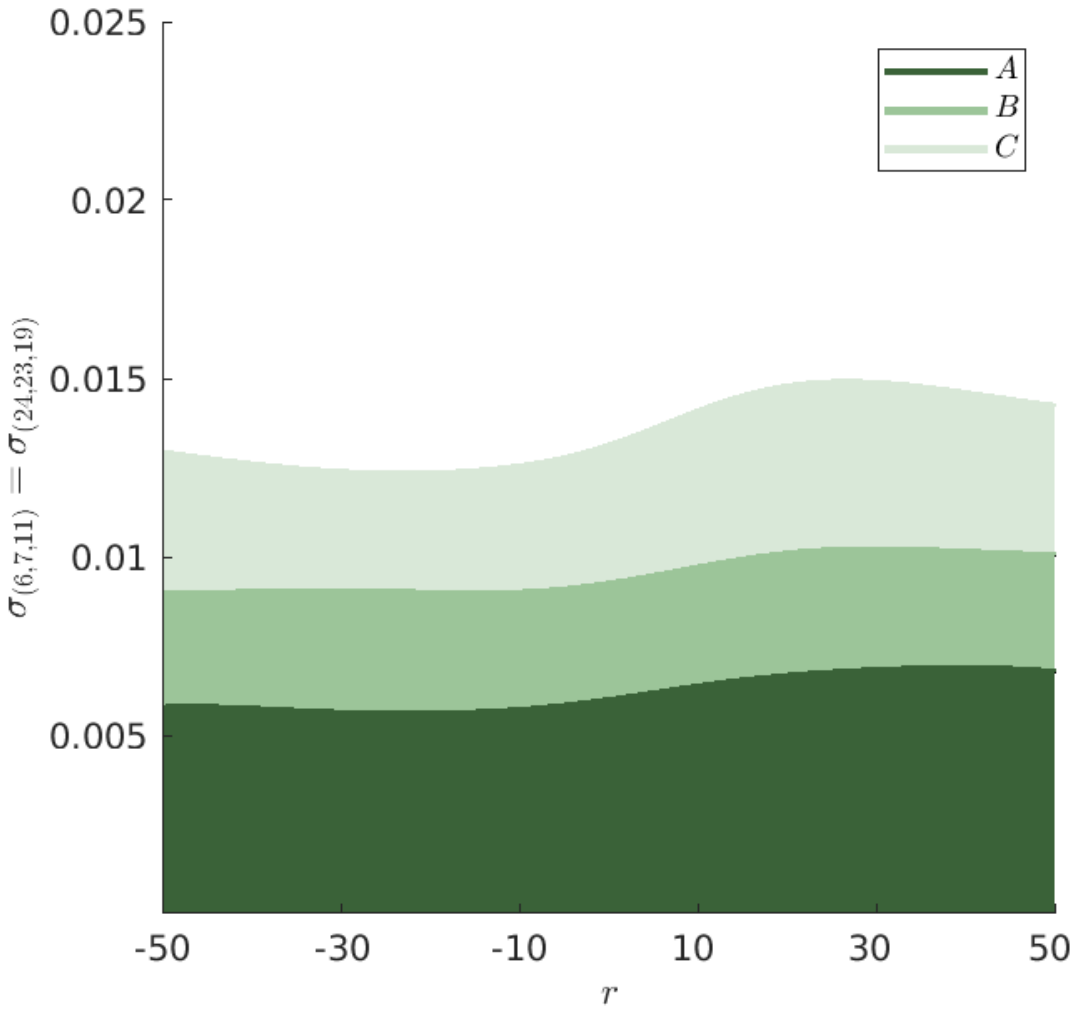}

\subcaption{$u(x)=-|x-2\cdot 60|^{3/2}\mathbbm{1}\{x\leq 2\cdot 60\}$}

\end{minipage}

\caption{Acceptable dependence and noise. GRP is based on the Matérn kernel.}
\label{fig:network1AcceptableDependence}
\end{figure}

\begin{figure}[!htbp]

\begin{minipage}{\textwidth}
\centering

\includegraphics[scale=0.4,trim={4.5cm 8cm 5cm 8cm},clip]{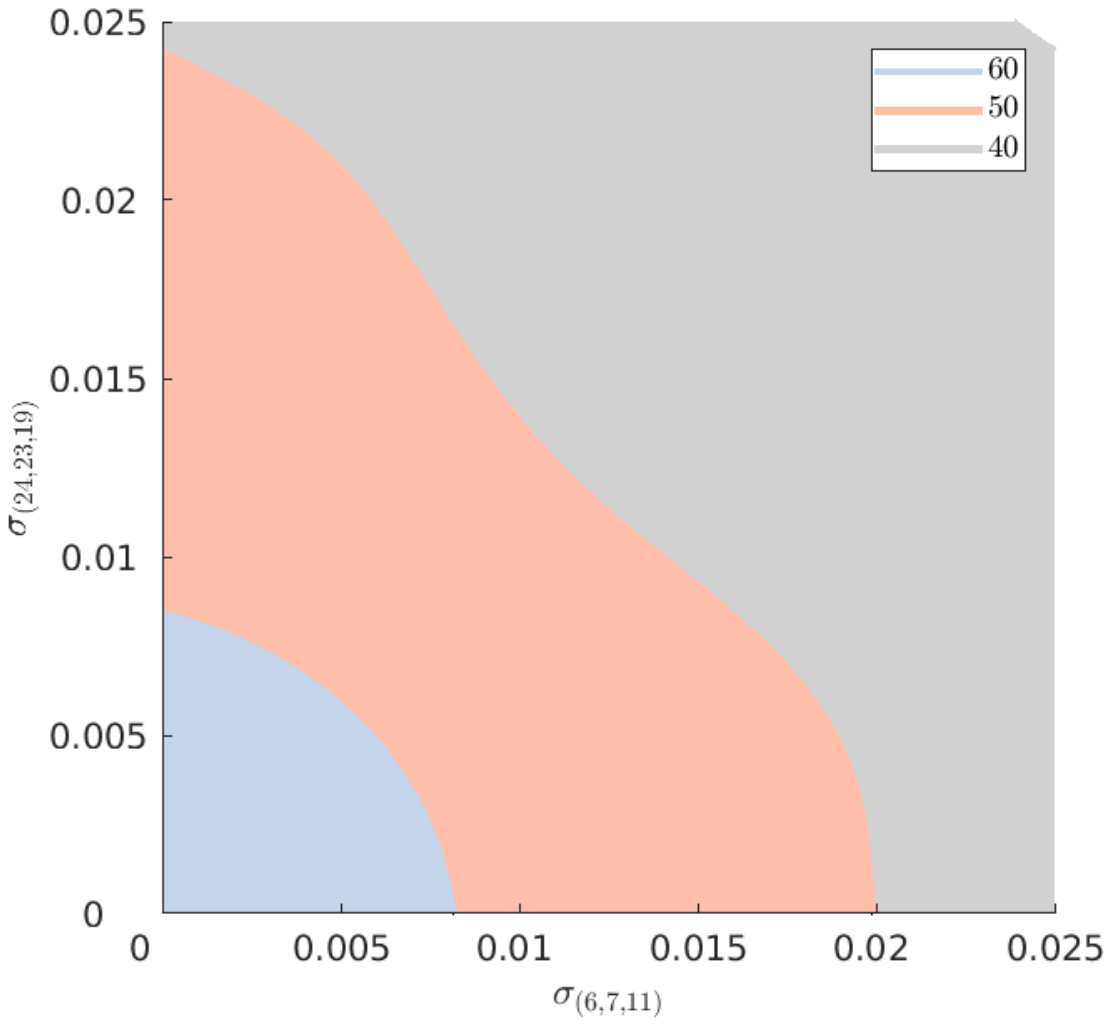}

\subcaption{$u(x)=x$}

\end{minipage}

\begin{minipage}{0.3\textwidth}
\centering

\includegraphics[scale=0.35,trim={4.5cm 8cm 5cm 8cm},clip]{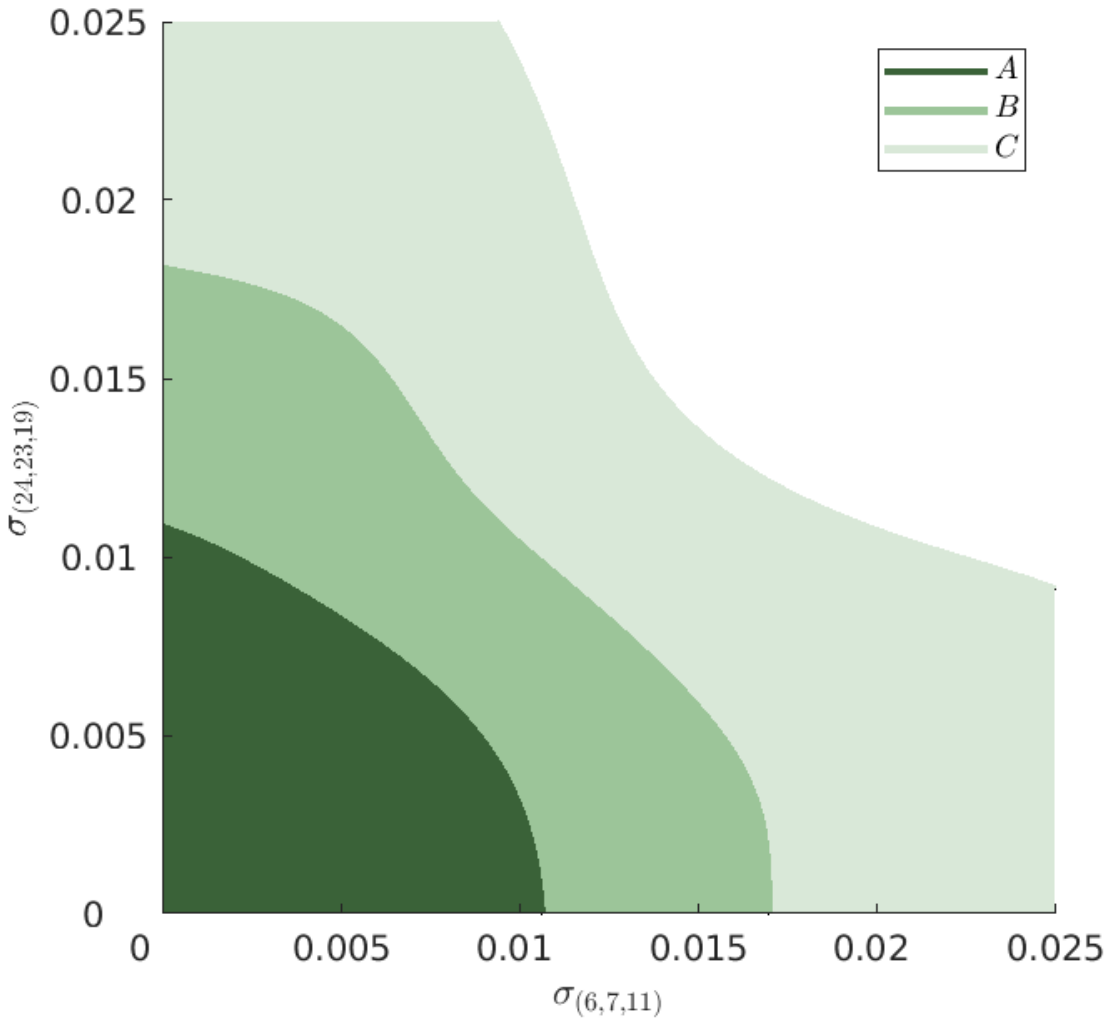}

\subcaption{$u(x)=0.1(x-60)_+ - 0.9(x-60)_-$}

\end{minipage}\hspace{0.5cm}\begin{minipage}{0.3\textwidth}
\centering

\includegraphics[scale=0.35,trim={4.5cm 8cm 5cm 8cm},clip]{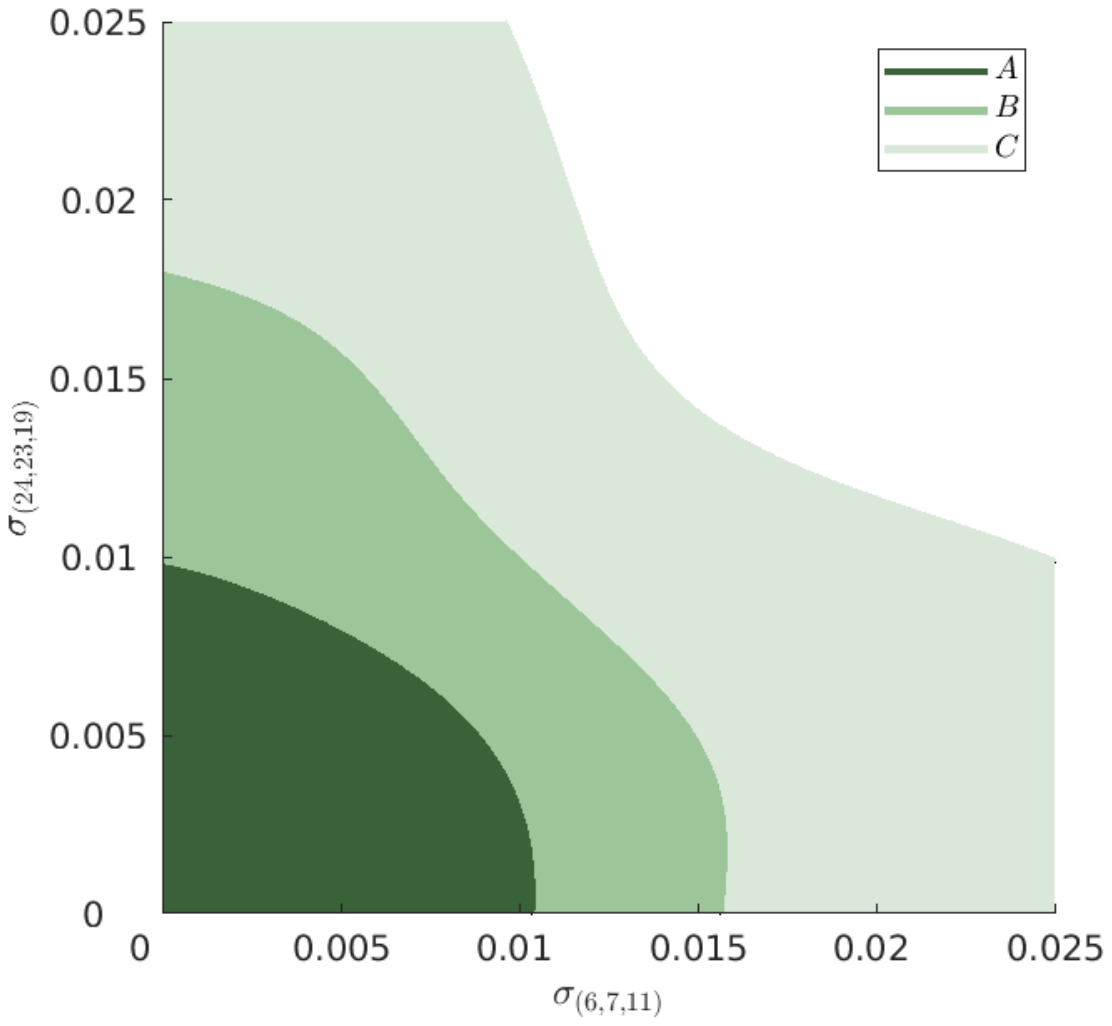}

\subcaption{$u(x)=0.2(x-60)_+ - 0.8(x-60)_-$}

\end{minipage}\hspace{0.5cm}\begin{minipage}{0.3\textwidth}
\centering

\includegraphics[scale=0.35,trim={4.5cm 8cm 5cm 8cm},clip]{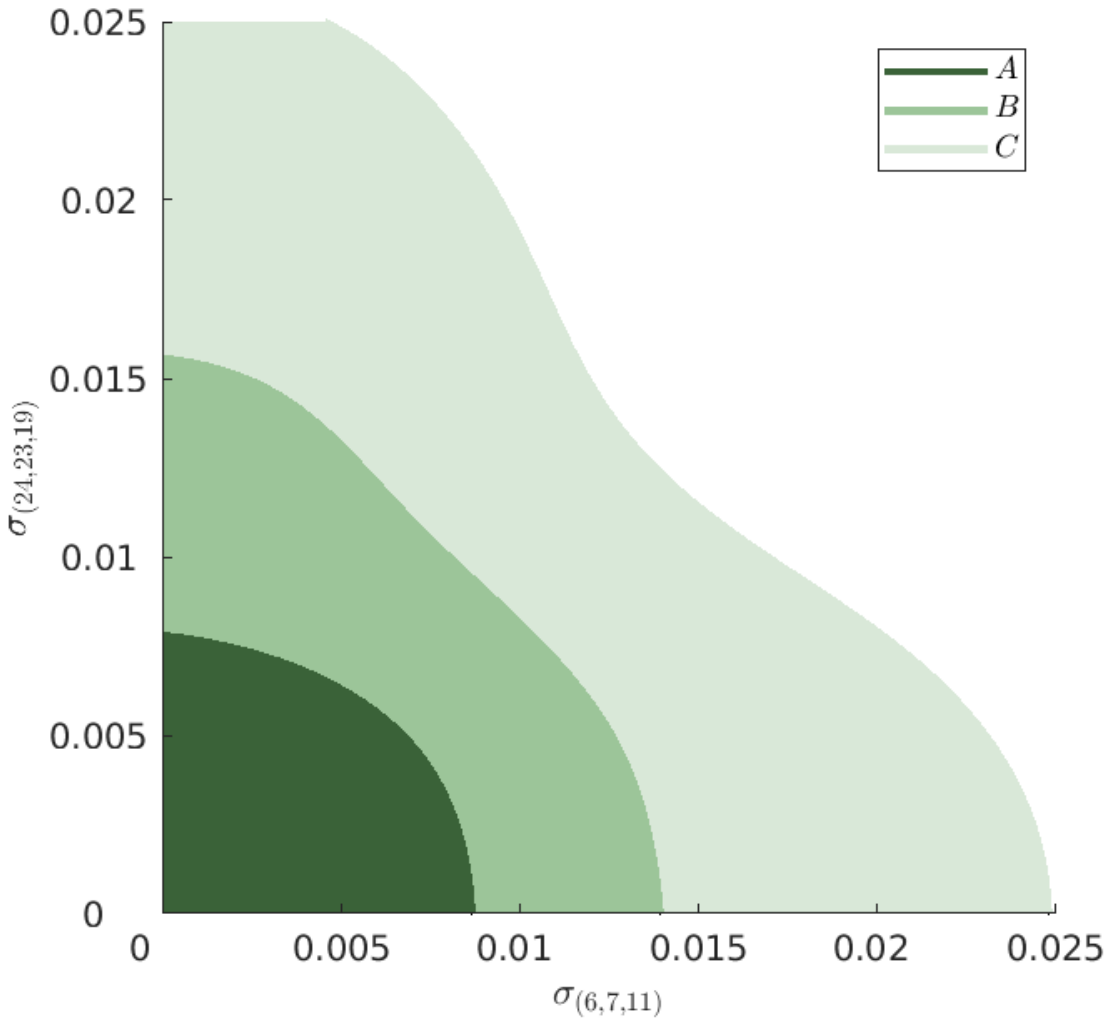}

\subcaption{$u(x)=\sqrt{x}$}

\end{minipage}

\begin{minipage}{0.3\textwidth}
\centering

\includegraphics[scale=0.35,trim={4.5cm 8cm 5cm 8cm},clip]{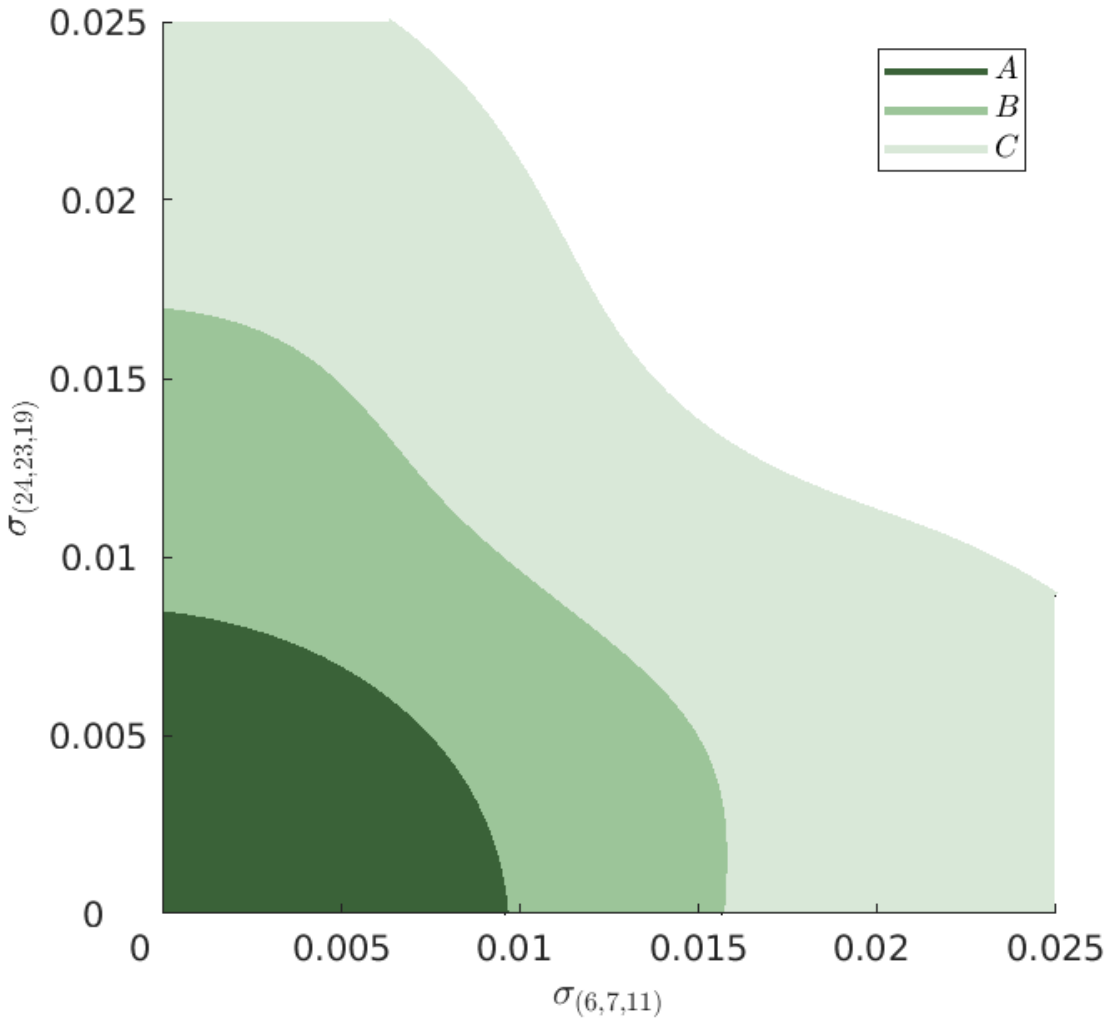}

\subcaption{$u(x)=-|x-2\cdot 60|^3\mathbbm{1}\{x\leq 2\cdot 60\}$}

\end{minipage}\hspace{0.5cm}\begin{minipage}{0.3\textwidth}
\centering

\includegraphics[scale=0.35,trim={4.5cm 8cm 5cm 8cm},clip]{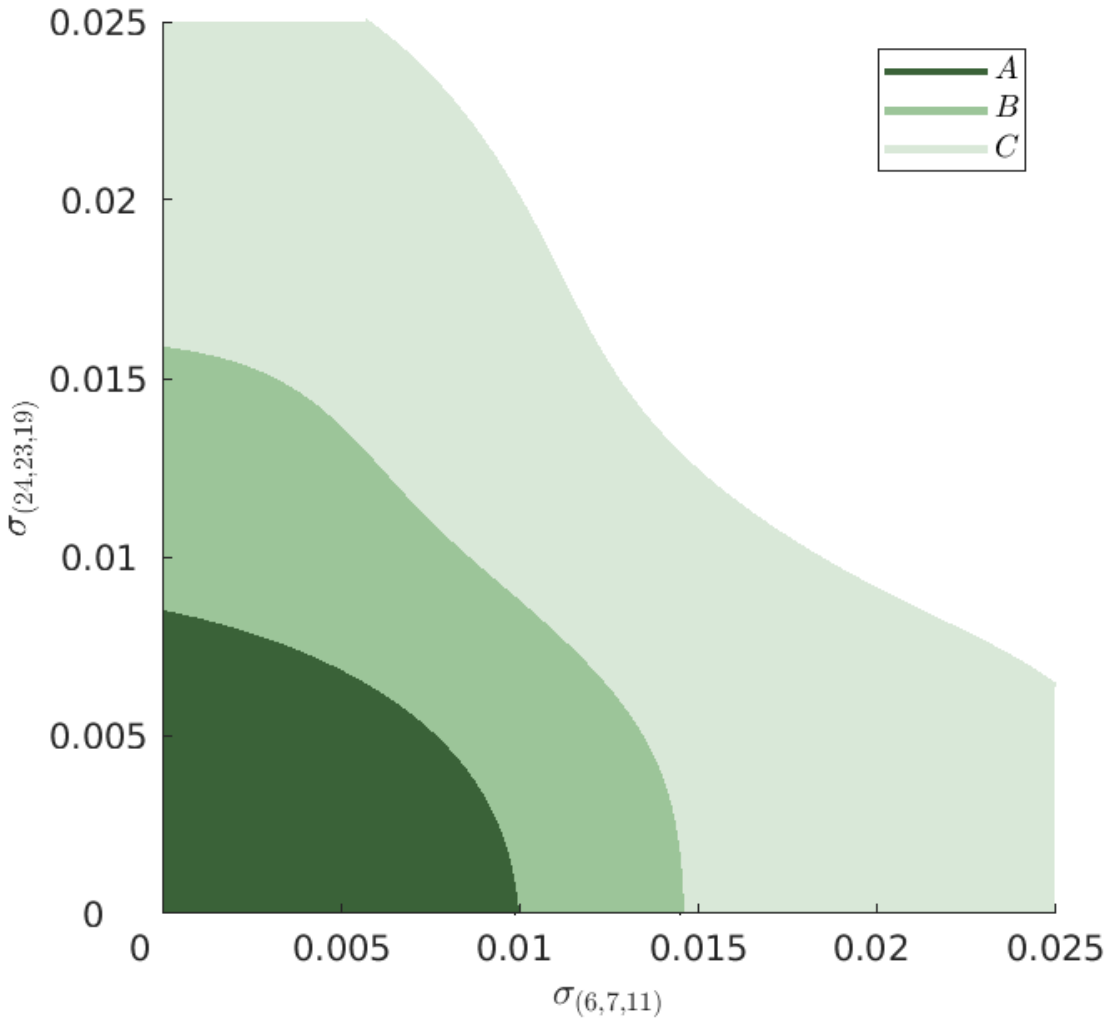}

\subcaption{$u(x)=-|x-2\cdot 60|^2\mathbbm{1}\{x\leq 2\cdot 60\}$}

\end{minipage}\hspace{0.5cm}\begin{minipage}{0.3\textwidth}
\centering

\includegraphics[scale=0.35,trim={4.5cm 8cm 5cm 8cm},clip]{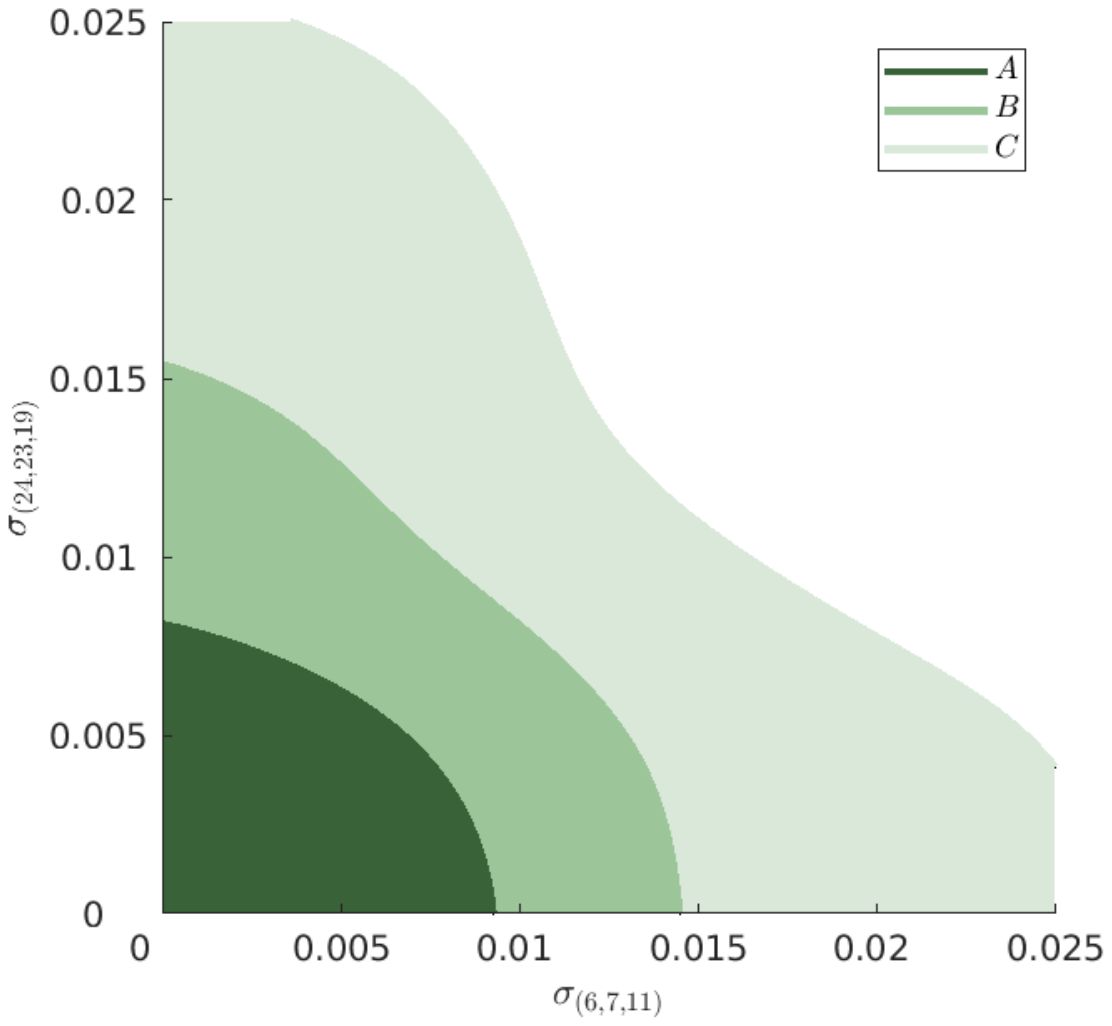}

\subcaption{$u(x)=-|x-2\cdot 60|^{3/2}\mathbbm{1}\{x\leq 2\cdot 60\}$}

\end{minipage}

\caption{Acceptable noise for $r=0$. GPR is based on the Matérn kernel.}
\label{fig:network1AcceptableNoise}
\end{figure}

\subsubsection{Companion to Section~\ref{sec:casestudyI}: Comparison of Squared Exponential and Matérn Kernel} 

Figure~\ref{fig:compKernels} compares the squared exponential and Matérn kernel in the situation of Figure~\ref{fig:network1AcceptableLightsMATERN}. The Matérn kernel can more flexibly adapt to functions that require higher curvature at some points. The surface plots have smaller amplitudes of the fluctuations than the squared exponential kernel.

\begin{figure}[!htbp]

\begin{minipage}[t]{0.5\textwidth}
\centering

\includegraphics[scale=0.35,trim={3.5cm 7cm 4cm 7cm},clip]{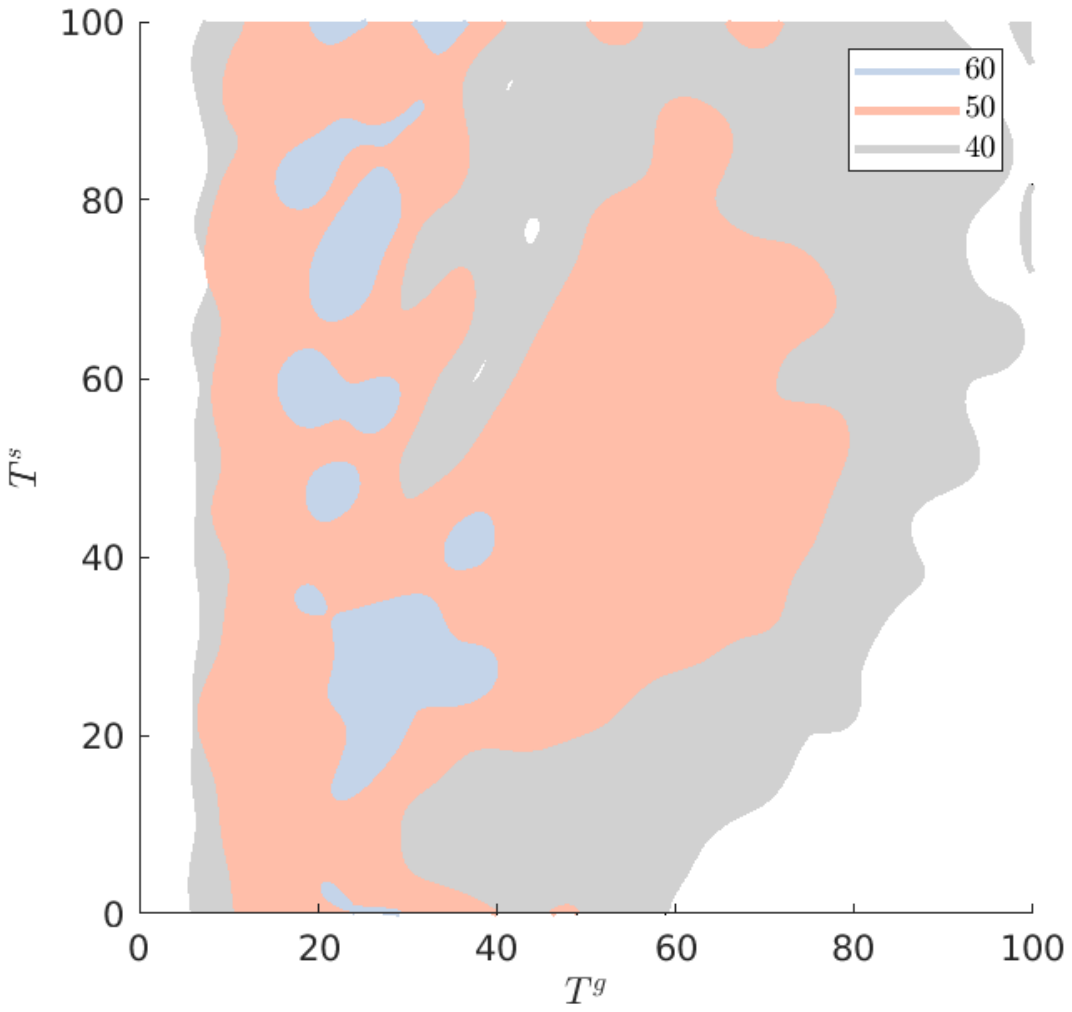}

\subcaption{Level sets: Squared exponential kernel}

\end{minipage}\begin{minipage}[t]{0.5\textwidth}
\centering

\includegraphics[scale=0.35,trim={3.5cm 7cm 3.5cm 7cm},clip]{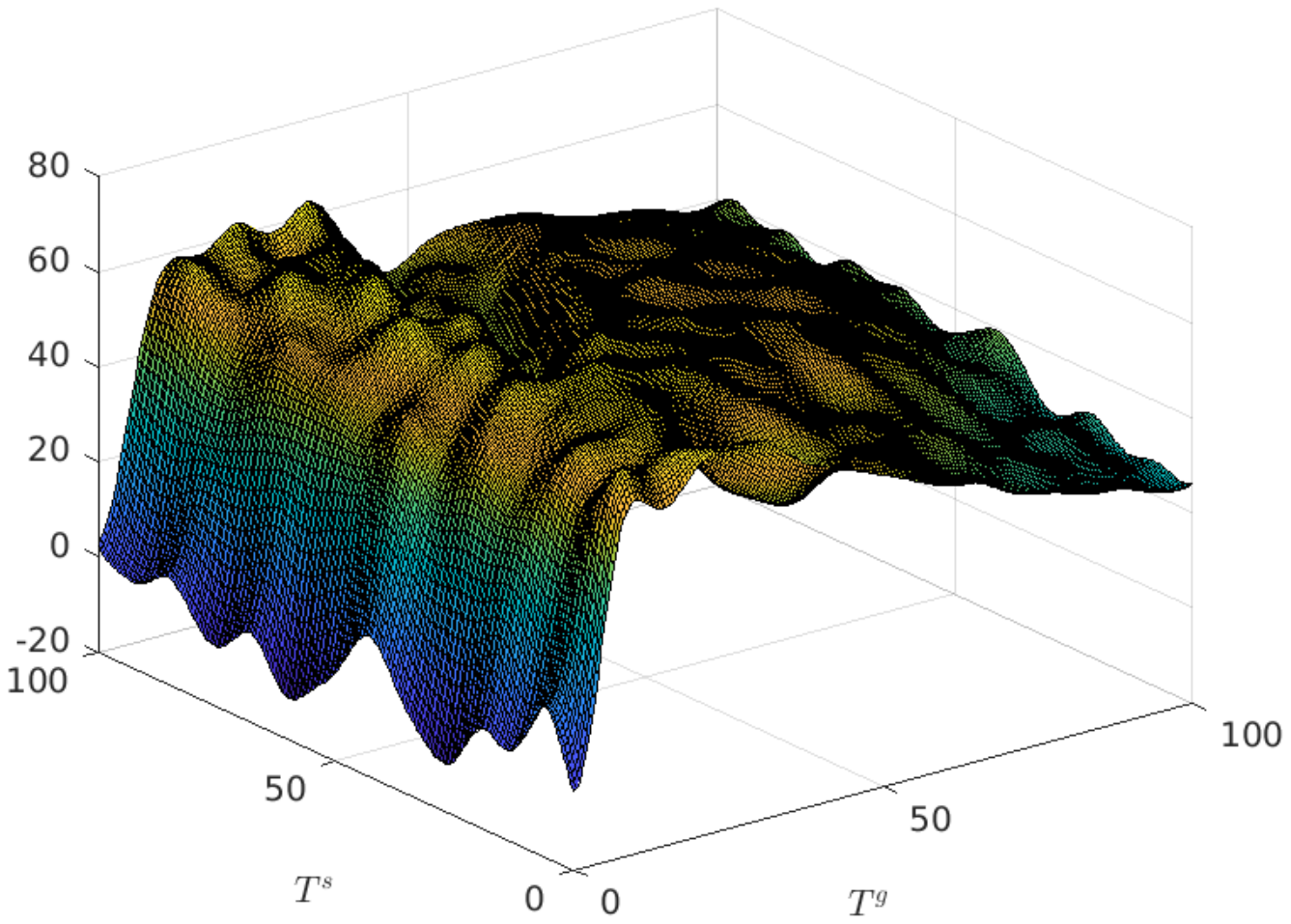}

\subcaption{Surface plot: Squared exponential kernel ($\gamma=60$)}

\end{minipage}

\begin{minipage}[t]{0.5\textwidth}
\centering

\includegraphics[scale=0.35,trim={3.5cm 7cm 4cm 7cm},clip]{Figures/CaseStudy1/acceptableLightsEMATERN1_NEW.pdf}

\subcaption{Level sets: Matérn kernel}

\end{minipage}\begin{minipage}[t]{0.5\textwidth}
\centering

\includegraphics[scale=0.35,trim={3.5cm 7cm 3.5cm 7cm},clip]{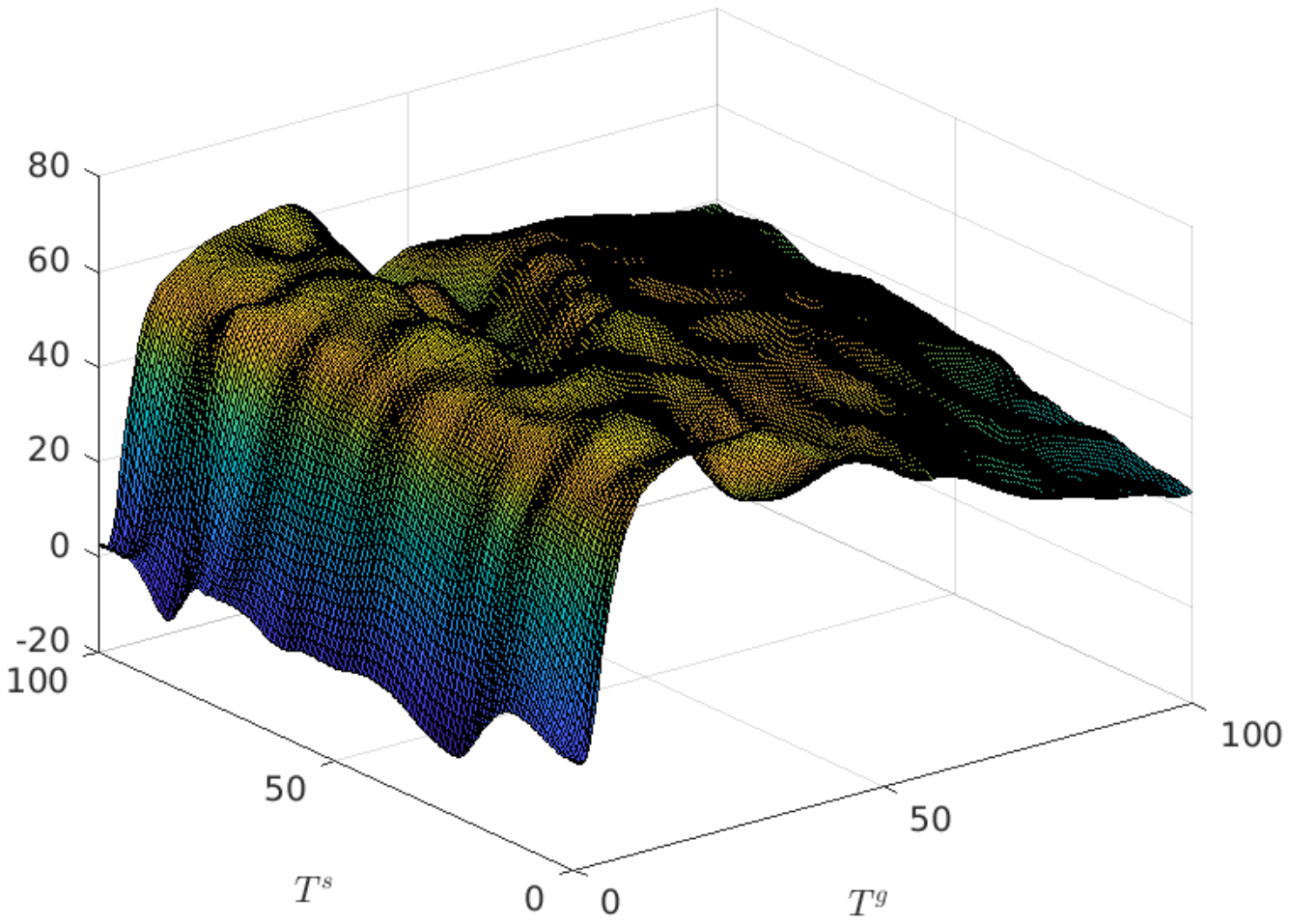}

\subcaption{Surface plot: Matérn kernel ($\gamma=60$)}

\end{minipage}

\caption{Comparison of kernels.}
\label{fig:compKernels}
\end{figure}

\end{document}